\theoremstyle{plain}
\newtheorem{theorem}{Theorem}[section]
\newtheorem{lemma}[theorem]{Lemma}
\theoremstyle{definition}
\newtheorem{definition}[theorem]{Definition}
\theoremstyle{remark}
\newcommand{\mw}[1]{\textcolor{blue}{MW: #1}}
\icmltitlerunning{A Provably Effective Expert Pruning Method for Fine-tuned MoE}
\begin{document}

\twocolumn[
\icmltitle{A Provably Effective Method for Pruning Experts in Fine-tuned 
 Sparse Mixture-of-Experts}



\icmlsetsymbol{equal}{*}

\begin{icmlauthorlist}
\icmlauthor{Mohammed Nowaz Rabbani Chowdhury}{ecse_rpi}
\icmlauthor{Meng Wang}{ecse_rpi}
\icmlauthor{Kaoutar El Maghraoui}{ibm}
\icmlauthor{Naigang Wang}{ibm}
\icmlauthor{Pin-Yu Chen}{ibm}
\icmlauthor{Christopher Carothers}{csi_rpi}
\end{icmlauthorlist}

\icmlaffiliation{ecse_rpi}{Department of Electrical, Computer and Systems Engineering, Rensselaer Polytechnic Institute, NY, USA}
\icmlaffiliation{ibm}{IBM Research, Yorktown Heights, NY, USA}
\icmlaffiliation{csi_rpi}{Department of Computer Science, Rensselaer Polytechnic Institute, NY, USA}

\icmlcorrespondingauthor{Mohammed Nowaz Rabbani Chowdhury}{chowdm2@rpi.edu}
\icmlcorrespondingauthor{Meng Wang}{wangm7@rpi.edu}

\icmlkeywords{Mixture-of-Experts, Pruning}

\vskip 0.3in
]



\printAffiliationsAndNotice{}  

\begin{abstract}

The sparsely gated mixture of experts (MoE) architecture sends different inputs to different subnetworks (experts), 
through trainable routers. MoE reduces the training computation significantly for large models, but its deployment can be still memory/computation expensive for some downstream tasks.  Model pruning is a popular approach to reduce inference computation, but its application in MoE architecture is largely unexplored. To the best of our knowledge,  this paper provides the first provably efficient technique for pruning experts in fine-tuned MoE
models. 
We theoretically prove that prioritizing the pruning of the experts with a  smaller change of the router’s $l_2$ norm from the pre-trained model guarantees the preservation of test accuracy,
while significantly reducing the model size and the computational requirements.
Although our theoretical analysis is centered on binary classification tasks on simplified MoE architecture, our expert pruning method is verified on large vision MoE models such as V-MoE and $\text{E}^3$-MoE fine-tuned on benchmark datasets such as CIFAR-10, CIFAR-100, and ImageNet.   


\end{abstract}

\section{Introduction}\label{intro}

In deep learning, a typical approach is to adapt the same large pre-trained model (often based on the Transformer architecture \citep{vaswani2017attention}) to a plethora of downstream tasks. This method circumvents the substantial cost associated with training separate models for each task. 
This approach has consistently achieved state-of-the-art results in many application domains \citep{devlin2019bert,dosovitskiy2020image,wortsman2022model,yu2022coca}. Notably, a larger pre-trained model usually leads to better performance 
\citep{chen2022pali,chowdhery2023palm,dehghani2023scaling}. However, an increase in model size typically leads to a proportional rise in the computational resources required for training, scaling linearly with the number of parameters.

The sparsely gated mixture of experts (MoE) has been introduced to reduce the training cost of large models \citep{shazeer2017outrageously,lepikhin2020gshard}.  
The MoE layer in a Transformer encoder block replaces the single feed-forward network (FFN) module with multiple FFN modules, referred to as the \textit{experts}. Each expert is associated with a trainable router that selectively activates the expert based on the input tokens.  The sparse computation over the tokens allows for the expansion of the model size with only a sub-linear increase in the training compute \citep{shazeer2017outrageously,lepikhin2020gshard,fedus2022switch}.  

Due to their large model sizes, the deployment of MoE models still needs significant memory requirements, and inference compute \citep{riquelme2021scaling,zhou2022mixtureofexperts}, which may restrict their applications in 
resource-constrained environments.
Because different experts often learn diverse features \citep{lepikhin2020gshard,riquelme2021scaling,fedus2022switch}, and not all experts are essential for a particular downstream task (\citet{riquelme2021scaling}, Figure 29),   pruning irrelevant and redundant experts may reduce the inference compute and memory requirement of a fine-tuned MoE model while maintaining the inference accuracy. 

Despite the extensive research on neural network pruning \citep{han2015learning,han2016deep,han2016eie,frankle2018lottery,li2020efficient,pmlr-v202-frantar23a}, pruning experts in the MoE architecture has not been much explored. To the best of our knowledge,  only recent work by
\citet{chen2022task} and \citet{koishekenov-etal-2023-memory} 
prune non-essential experts, 
which are identified by the total number of tokens received by each expert throughout the fine-tuning stage or over the validation set after fine-tuning. These pruning methods can reduce memory requirements by reducing the number of experts and the communication among experts but cannot reduce the inference compute because tokens are routed to other experts after pruning. Moreover, no theoretical analysis has been provided, and it is not clear whether the total token count is the correct measure to identify essential experts. 
A fundamental question remains to be open: 
\begin{center}
    \textit{For a given downstream task, what characteristic of an MoE layer provably separates the essential experts so that removing the rest does not hurt generalization?}
\end{center}

This paper addresses this question empirically and theoretically 
on a binary supervised classification task\footnote{We consider binary tasks for the simplicity of the analysis. Our analysis can be extended to the multi-class case at the cost of higher complexity in the analysis.}. 
We analyze the training dynamics of the routers and experts of a pre-trained MoE layer during the fine-tuning stage. 
Our analysis reveals that prioritizing the pruning of the experts with smaller \textit{$l_2$  norm} change of the router's 
weights from the pre-trained model guarantees the preservation of test accuracy, while significantly reducing the model size and the computational requirements. The significance of our contributions are summarized as follows:

\textbf{1. Theoretical generalization analysis}: To the best of our knowledge, this paper provides \textbf{the first provably effective technique for pruning experts in  MoE models}. We theoretically prove that experts who learned task-relevant features (that determine labels) have larger changes of their router's $l_2$ norm during the fine-tuning stage than those experts that learn task-irrelevant features. 
We then prove that pruning experts with smaller changes of their router's $l_2$ norm can guarantee the pruning of irrelevant and redundant experts, while maintaining the same test accuracy in a wide range of pruning ratios.

\begin{figure}[ht]
    \centering
    \includegraphics[width=0.8\linewidth]{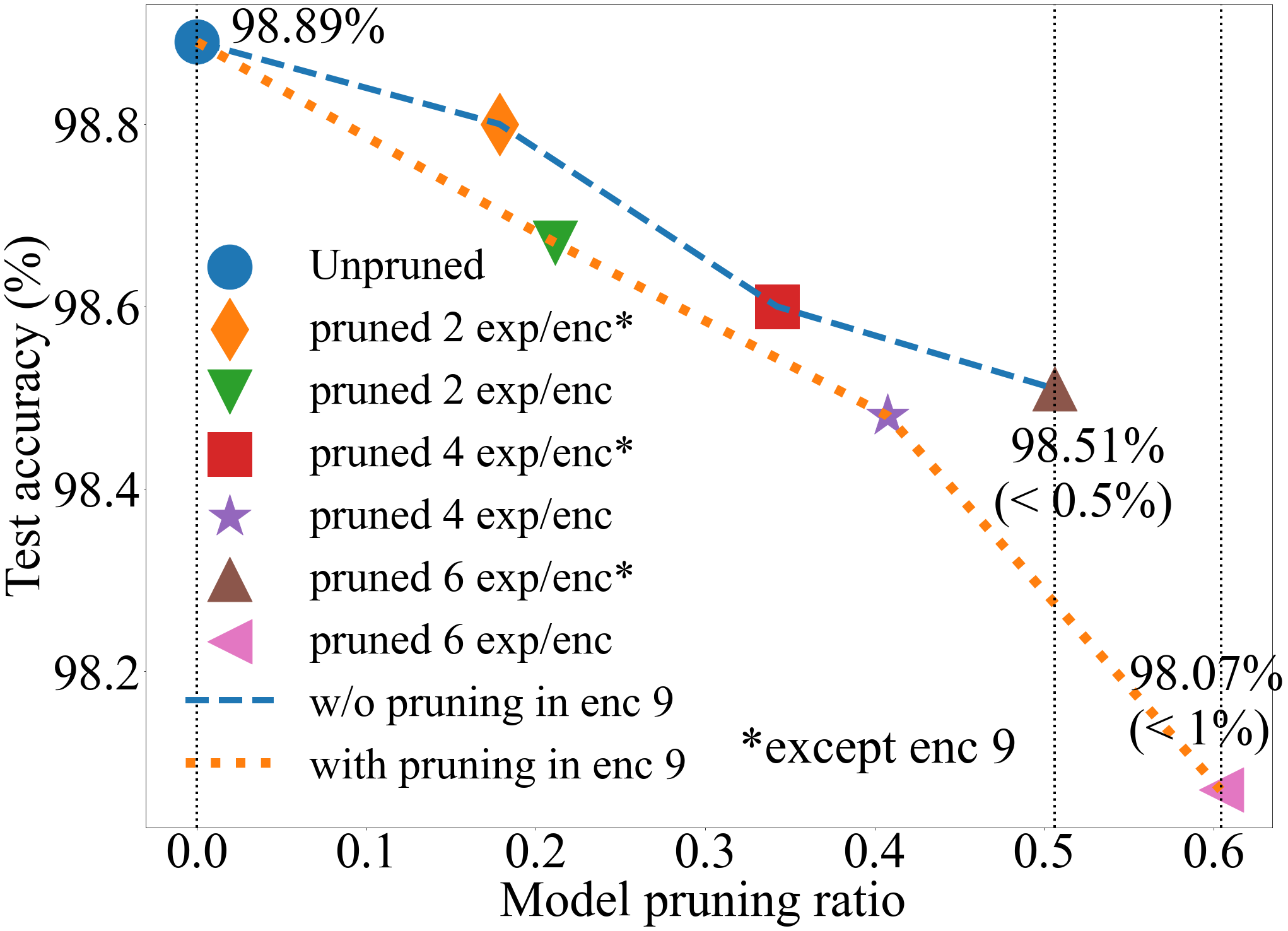}
    \vspace{-2mm}
    \caption{Generalization performance of the pruned VMoE on CIFAR-10 \textbf{with} post-pruning fine-tuning. `pruned 2 exp/enc' implies pruning two experts from each MoE encoder.}
    \label{cifar_10_pruned_finetuned}
\end{figure}

\textbf{2. Empirical validation}: We provide \textbf{experimental demonstration of the proposed pruning technique's effectiveness on state-of-the-art vision MoE models}. We evaluate on several vision MoE (VMoE) \citep{riquelme2021scaling} and ensembles of vision MoE (known as the efficient ensemble of experts, $\text{E}^3$) \citep{allingham2022sparse} models with thousands of millions of parameters, fine-tuned on benchmark datasets such as CIFAR-10, CIFAR-100, and ImageNet. 
For example, as shown in Figure \ref{cifar_10_pruned_finetuned} 
our method can prune 75\% of the experts of the fine-tuned V-MoE model on CIFAR-10 to reduce 60\% of the memory requirements, while maintaining the model accuracy within 1\% of the un-pruned model. Moreover, the method can reduce 40\% of the inference FLOPs and 40\% of inference time (see Section \ref{experiments} for details). Furthermore, our pruning technique is designed for seamless integration with contemporary digital hardware accelerators, such as GPUs and TPUs, and can be implemented without the need for specialized software or hardware modifications.


\section{Related Works}

\textbf{Mixture-of-Experts.} The sparsely gated MoE 
sends different tokens of an input sequence  to different experts in language models \citep{shazeer2017outrageously,lepikhin2020gshard,fedus2022switch,du2022glam} 
and vision models \citep{riquelme2021scaling,puigcerver2022adversarial,allingham2022sparse}. 
To address the challenge of load balancing among the experts \citep{lewis2021base},    \textit{expert-choice routing} is introduced in \citet{zhou2022mixtureofexperts} so that, 
instead of selecting experts for each token, the router selects tokens for each expert. 

Despite the empirical success of MoE, its theoretical analysis is underexplored except for a few recent works. Specifically,  \citet{chen2022towards} provides the theoretical generalization analysis of MoE with sample-level routing, and \citet{pmlr-v202-chowdhury23a} proves the computational efficiency of modern MoE with patch-level routing.

\textbf{Pruning deep neural networks.} 
Network pruning has been widely explored recently to reduce the computation and memory cost of deep neural networks (DNN)  \citep{han2016eie,luo2017thinet,lee2019snip,liu2021group,jaiswal2023instant}
Unstructured pruning methods prune individual weights \citep{han2015learning,han2016deep,frankle2018lottery,wang2020structured,liu2021ebert}, while structured pruning methods remove neurons, channels, and layers \citep{li2016pruning,tung2018clip,nonnenmacher2021sosp}. Although unstructured pruning can lead to a smaller model size because of the higher flexibility in pruning, the resulting irregular sparsity can lead to computational and memory overhead in practice. In contrast, structured pruning produces more regular and structured architecture,  leading to possibly better hardware utilization and computational efficiency. 

For large Transformer-based models, some focus on pruning pre-trained models \citep{chen2020lottery,zafrir2021prune,li2024training} while others focus on task-specific pruning during or after the fine-tuning stage \cite{wang2020structured,li2020train,sanh2020movement}. Note that all of these methods are compatible with our method and can be implemented together to further compress MoE models.

\textbf{Convergence and Generalization Analyses of Neural Networks.} 
The Neural Tangent Kernel (NTK) based approaches \citep{jacot2018neural,lee2019wide,du2019gradient,allen2019convergence,li2022generalization}
assume the model weights stay close to the initialization during the training process, which might not reflect the practical learning dynamics. 
The model estimation approach \citep{zhong2017recovery,zhang2020improved,zhang2020fast,fu2020guaranteed,zhang2023convergence,li2024does} requires the input data to satisfy the Gaussian distribution. 
Recent works based on the feature learning framework  \citep{daniely2020learning,shalev2020computational,shi2021theoretical,allen2022feature,zhang2022joint,li2023a,allen-zhu2023towards,pmlr-v202-chowdhury23a} can better characterize the practical learning dynamics in which the neural network gradually learns important features and discards unimportant features.  
Our theoretical analysis follows the feature learning framework.

\section{Method}

\subsection{The Mixture-of-Experts Architecture}

An MoE layer consists of multiple position-wise FFN modules referred to as experts, each associated with a router. The routers are collectively referred to as the gating network. Generally, MoE is implemented in the transformer encoder and decoder. As input samples in a transformer are tokenized, the MoE layer receives tokens as input.

Let us denote $x=\left[x^{(1)^T},x^{(2)^T}, ..., x^{(n)^T}\right] \in \mathbb{R}^{dn}$ as the input sample of an MoE layer tokenized into $n$ tokens of dimension $d$. Here, $x^{(j)}\in\mathbb{R}^d$ denotes the $j$-th token where $j\in[n]$. The MoE layer generates the $n$ corresponding tokens of dimension $d^\prime$ of the output $x_{out}=\left[x_{out}^{(1)^T},x_{out}^{(2)^T}, ..., x_{out}^{(n)^T}\right]$. The layer with $k$ experts computes the output $x_{out}^{(j)}\in\mathbb{R}^{d^\prime}$ for the token $x^{(j)}$ as,
\begin{align}
    &x_{out}^{(j)}=\sum_{s\in[k]}f_s\left(x^{(j)}\right) \text{ where, }\label{eq_1}\\ 
    &f_s\left(x^{(j)}\right) = W_2^{(s)}\sigma\left(W_1^{(s)^T}x^{(j)}\right)G_j^{(s)}\label{eq_2}
\end{align}
$f_s\left(x^{(j)}\right)$ is the output for the expert $s\in[k]$ for input token $x^{(j)}$. Here, $W_1^{(s)}\in\mathbb{R}^{d\times m}$ represents the hidden layer weights of the expert $s\in[k]$ with hidden dimension $m$, i.e. with $m$ hidden neurons. We denote the neuron $r\in[m]$ of expert $s\in[k]$ (the $r$-th column of $W_1^{(s)}$) by $w_r^{(s)}$. $\sigma(\cdot)$ is the element-wise activation function, and $W_2^{(s)}\in\mathbb{R}^{d^\prime\times m}$ is the output layer weights converting hidden representations into output tokens.

\textbf{The gating network}. $G_j^{(s)}\in[0,1]$ 
is the output of the gating network, referred to as the \textit{gating value} associated with the expert $s$ and the $j$-th token. To show how $G_j^{(s)}$ is calculated, for the MoE layer with $k$ experts, the gating network contains $k$ corresponding trainable routers denoted as $\{w_s\}_{s=1}^k\in\mathbb{R}^d$. 
We follow the definition in \citet{riquelme2021scaling} that the \textit{routing value} associated with the router $w_s$ and token $x^{(j)}$ as $g_{j}^{(s)}:=\langle w_s,x^{(j)}\rangle$.


The routing function is characterized as either \textit{token-choice routing} \citep{fedus2022switch} or the \textit{expert-choice routing} \citep{zhou2022mixtureofexperts}. 
Token-choice routing selects 
the top $l$ experts for each token, according to the routing values of that token over the $k$ experts. 
Let $J_j\subset[k]$ with $|J_j|=l$ denote the index set  
of the top $l$ experts for the $j$-th token. Then gating values of the top $l$ experts are non-zero and  calculated as the softmax function over their routing values, while gating values for other experts are zero,  i.e., 

\begin{equation}\label{eqn:gate_token}
G_j^{(s)}:=\begin{cases}
    e^{g_j^{(s)}}/\sum_{i\in J_j}e^{g_j^{(i)}} & \text{ if } s\in J_j\\
    0  \quad \text{ else}
\end{cases}.
\end{equation}

Similarly, expert-choice routing selects top $l$ tokens for each expert, 
according to the routing values of that expert over the $n$ tokens of the input. 
Let   $J_s\subset[n]$ with $|J_s|=l$ denote the index set  
of the top $l$ tokens for the $s$-th expert.  The gating values are computed by  

\begin{equation}\label{eqn:gate_expert}
G_j^{(s)}:=\begin{cases}
    e^{g_j^{(s)}}/\sum_{i\in J_s}e^{g_i^{(s)}} & \text{if } j\in J_s\\
    0 \quad \text{ else}
\end{cases}.
\end{equation}

See Figure \ref{arch} for a visual description of the two routing methods.

Note that zero gating values in (\ref{eqn:gate_token})-(\ref{eqn:gate_expert}) are introduced for mathematical completeness. In the implementation of sparse computation,    $f_s\left(x^{(j)}\right)$ is directly set as zero when $s$ is not in $J_s$ for token-choice routing or when $j$ is not in $J_s$ for expert-choice routing, without actually computing (\ref{eq_2}). 
That means tokens are only \textbf{\textit{routed}} to the experts with non-zero gating values. 

\begin{figure}[ht]
    \centering
    \includegraphics[width=\linewidth]{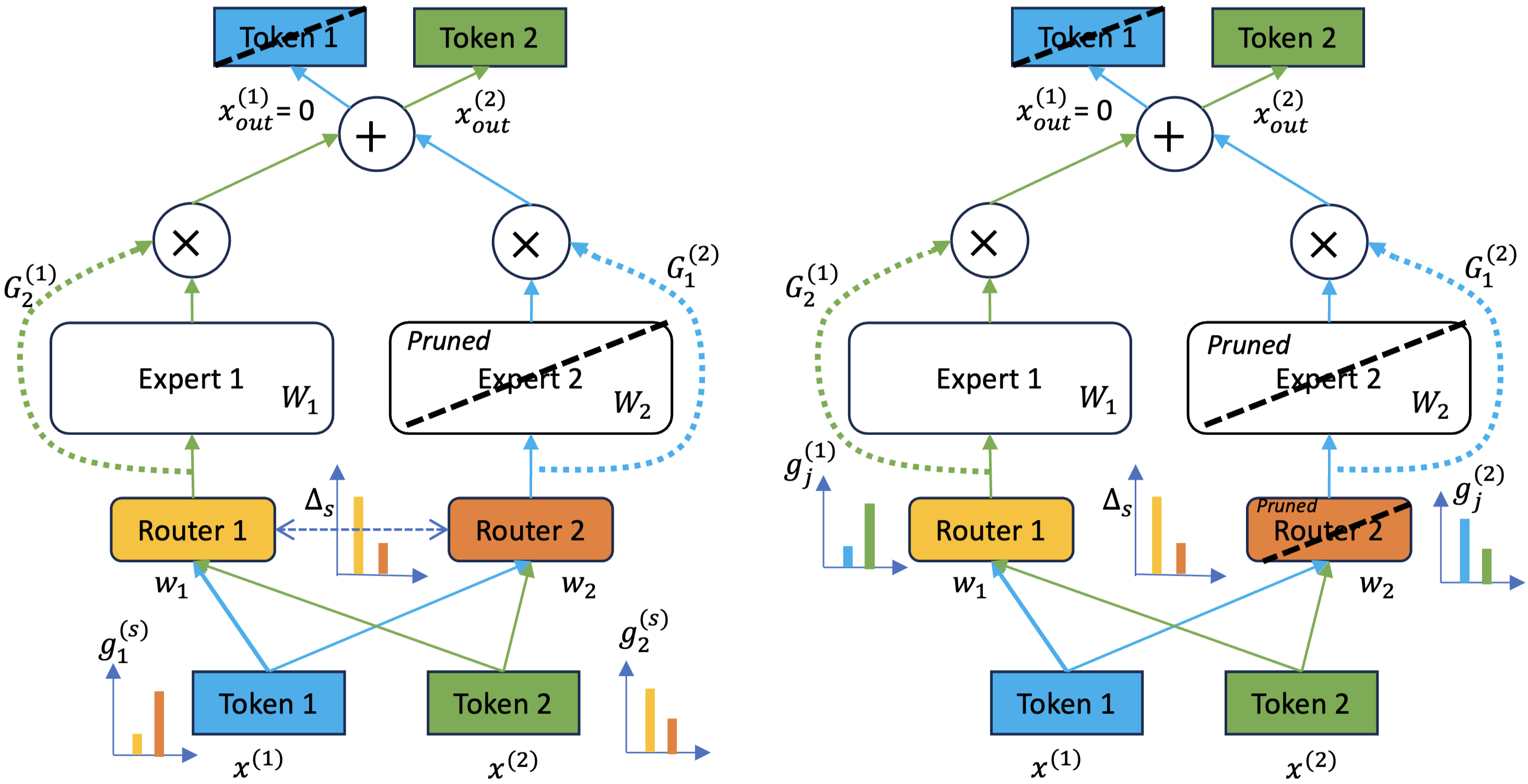}
    \vspace{-2mm}
    \caption{\textbf{Left:} Token-choice routing:  each token selects experts based on the routing values over the experts. 
    \textbf{Right:} Expert-choice routing: Each expert selects tokens based on the routing value over the tokens. In both cases, the experts with a smaller norm change of router's weights are pruned (Expert 2). The output tokens for the pruned experts are set to zero (Token 1 in the figure). In the left, the routers of the pruned experts are retained to calculate the gating value. 
    In the right, the routers of the pruned experts are also pruned.}
    \label{arch}
\end{figure}

\subsection{Expert Pruning Method in MoE}\label{pruning_method}


Let   $\{w_s^{(0)},W_1^{(s,0)},W_2^{(s,0)}\}_{s=1}^k$ denote the pre-trained weights of an MoE layer.
The model is fine-tuned using the stochastic gradient descent (SGD) with batch size $B$, the expert learning rate $\eta_e$ for parameters $W_1^{(s)}$ and $W_2^{(s)}$, and the router learning rate $\eta_r$ for $w_s$.
Let $\{w_s^{(T)},W_1^{(s,T)},W_2^{(s,T)}\}_{s=1}^k$ denote the corresponding fine-tuned weights after $T$ iterations. 
We define the \textit{change of router's} $l_2$ \textit{norm} from the pre-trained model for any expert $s\in[k]$ after $T$ training steps as,
\begin{center}
    $\Delta_s^{(T)}:=\|w_s^{(T)}\|-\|w_s^{(0)}\|$
\end{center}
Let   $S_{k^\prime}$ denote the index set of remaining experts after pruning, where  $k^\prime$ is the number of unpruned experts.  $S_{k^\prime}$ is selected based on $\Delta_s^{(T)}$. For example, $S_{k^\prime}$ can contain experts  with \textbf{top $k^\prime$} values of $\{\Delta_s^{(T)}\}_{s=1}^k$. 
We define the \textit{expert-pruning-ratio} of the pruned model as $\rho:=1-(k^\prime/k)$.  
Note that the computation of nonzero gating values $G_j^{(s)}$ is independent of the expert pruning. 
In token-choice routing, 
each token is only routed to those experts in $S_{k^\prime}$ that are among the selected $l$ experts for that token. The resulting gating values of token-choice routing with expert pruning is 

\begin{equation}\label{eqn:gate_token_prune}
G_j^{(s)}:=\begin{cases}
    e^{g_j^{(s)}}/\sum_{i\in J_j}e^{g_j^{(i)}} & \text{ if } s\in J_j \cap S_{k'}\\
    0  \quad \text{ else}
\end{cases}.
\end{equation}

In expert-choice routing, because the routers select $l$ tokens for each expert, if an expert is pruned, its corresponding router is also pruned. 
The resulting gating values of expert-choice routing with expert pruning is 

\begin{equation}\label{eqn:gate_expert_prune}
G_j^{(s)}:=\begin{cases}
    e^{g_j^{(s)}}/\sum_{i\in J_s}e^{g_i^{(s)}} & \text{if } j\in J_s \text{ and } s \in S_{k'}\\
    0 \quad \text{ else}
\end{cases}.
\end{equation}

See Figure \ref{arch} for a visual description of the pruning method for the two routing techniques.

The model after pruning can be  directly employed in a downstream task. It can also be fine-tuned again post-pruning using SGD with batch size $B$ and learning rates $\eta_e$ and $\eta_r$ for $T'$ iterations before deployment. 

\section{Theoretical Guarantees of the Expert Pruning Method}
\subsection{Key Theoretical Findings}
 We consider the setup that some tokens represent task-specific features that determine the data label in the downstream tasks, while some other tokens represent task-irrelevant features that do not affect the label.  Before presenting our analysis setup and the formal theoretical results, we first present the key insights. 

\textbf{(I) Experts learning task-specific features have a large change in router's norm, while experts not learning task-specific features have a small change in router's norm}. We theoretically show that the experts that learned task-specific features for the downstream task to a sufficient extent after pre-training continue to learn the task-specific features during fine-tuning, which leads to a large \textit{change of the router's} $l_2$ \textit{norm} in the fine-tuned model (Lemma \ref{lemma_1}). In contrast,  the experts that do not learn task-specific features will still only learn irrelevant features during fine-tuning and 
have a small \textit{change of the router's} $l_2$ \textit{norm} in the fine-tuned model.



 \textbf{(II) Post-pruning fine-tuning promotes unpruned experts to learn   task-specific features}. We show that the routers of the unpruned experts always selects task-specific patterns. As a result, post-pruning fine-tuning can promote the neurons of these experts to learn task-specific features. 

\textbf{(III) The pruned model provides guaranteed generalization for a wide range of pruning ratios}. Using the above two findings, we formally show all the experts not learning task-specific features can be pruned without hurting generalization accuracy. Moreover, if the pruned model can be further fine-tuned, one can prune up to $1-O(1/k)$ fraction of experts with the least changes of router's $l_2$ norm while maintaining the generalization accuracy. 

\subsection{The Analysis Setup}\label{setup}

\textbf{Network architecture}. We consider fine-tuning a pre-trained MoE layer with $k$ experts on a binary classification task.  Any input samples $(x,y)$ is drawn from an unknown distribution $\mathcal{D}$ where $y\in\{+1,-1\}$. The output of the fine-tuned model is defined as,
\begin{equation}\label{analyzed_model}
    f(x):=\sum_{j=1}^nx_{out}^{(j)}=\sum_{j=1}^n\sum_{s\in[k]}f_s\left(x^{(j)}\right)
\end{equation}
Here, the output token dimension, $d^\prime = 1$. We replace the output layer weights $W_2^{(s)}$ of the pre-trained model by a fixed classification head $a^{(s)}\Vec{1}$ where $\Vec{1}$ is a vector of $1$s with dimension $m$ and where $a^{(s)}$ 
is generated from $\text{Unif}(\{+1,-1\})$. 
In other words, each expert is positively or negatively connected to the model's output. Following the typical setup of theoretical generalization analysis of neural networks \citep{li2018learning,brutzkus2018sgd,allen2019learning,arora2019fine,zhang2022joint}, $a^{(s)}$ is not updated during training.  The activation function $\sigma(\cdot)$ is the rectified linear unit (ReLU) i.e. $\sigma(z)=\text{ReLU}(z):=\max(z,0)$ for any $z\in\mathbb{R}$. 
 We analyze the expert-choice routing MoE with $l=O(1)$ and $k=O(\sqrt{d})$, but our theoretical results are also validated in experiments on token-choice routing MoEs.
 
\textbf{Training method}. The learning method minimizes the hinge loss, $\hat{l}(f(x),y) = \max(1-yf(x),0)$ while the gradient is evaluated on $l(f(x),y) = 1-yf(x)$, same as the setting in \citet{zhang2022joint}.
We employ vanilla SGD using a batch size of $B$ for $T$ training steps with learning rate $\eta_e$ and $\eta_r$ in the experts and the routers, respectively.  

\textbf{Pruning   model}.        Let us denote $S_1$ and $S_2$ as the two sets of experts positively and negatively connected to the output, respectively, i.e., $S_1:= \{s\in[k]:a^{(s)}=+1\}$ and $S_2:=\{s\in[k]:a^{(s)}=-1\}$. We prune the experts based on the \textit{change of router's $l_2$ norm} separately over the set $S_1$ and $S_2$. More specifically, given an expert pruning ratio $\rho$, we retain the experts with top $|S_1|(1-\rho)$ values of $\{\Delta_s^{(T)}\}_{s\in S_1}$ and top $|S_2|(1-\rho)$ values of $\{\Delta_s^{(T)}\}_{s\in S_2}$ to construct the set $S_{k^\prime}$ while pruning rest of the experts.   

\textbf{The data model}\footnote{Assumptions on the structure of the input data are required to analyze the feature-learning dynamics of neural networks. Similar assumptions have been made in many recent works \citep{brutzkus2021optimization,shi2021theoretical,karp2021local,allen2022feature,chen2022towards,zhang2022joint,li2022generalization,zhang2022joint,allen-zhu2023towards, pmlr-v202-chowdhury23a}.}. 
The tokens are drawn from a fixed orthonormal pattern set, denoted by $\mathcal{P}$, which includes  $d$ patterns in $\mathbb{R}^d$, where $d=\Omega(n)$. $\mathcal{P}$ includes two \textit{task-specific} patterns, denoted by $o_1$ and $o_2$, which determine the labels for class-1 ($y=+1$) and class-2 ($y=-1$), respectively.
Each input sample $x$ contains exactly one task-specific pattern, which determines the label. $\mathcal{P}$ also contains $d-2$ \textit{task-irrelevant}  patterns, denoted by $\{q_i\}_{i=1}^{d-2}$, which do  not affect labels. Each $q_i$  appears in both classes with the same probability. The probability of $q_i$ appearing in $x$ can vary for different $i$, but is $O(1/d)$ for all $i$. We denote the data generating distribution by $\mathcal{D}$.

\textbf{Experts' proficiency measure.} We introduce a probability measure to quantify the quality of a router's capability of selecting task-specific features. Let $p_1^{(s,t)}$  and $p_2^{(s,t)}$  denote the \textit{proficiency measure} of router $s$ at iteration $t$ in selecting the task-specific feature $o_1$  and $o_2$  with a gating value of at least $1/l$, respectively.   Specifically, 
\begin{align*}
    p_1^{(s,t)}:=\mathbb{P}[(x,+1)\sim\mathcal{D}:&\exists j\in J_s^{(t)} \text{ s.t. } \\&x^{(j)}=o_1 \text{ and, }G_j^{(s,t)}\ge1/l]
\end{align*}
Likewise for $p_2^{(s,t)}$. 
The larger values indicate the higher chances of expert $s$ in selecting task-specific features. 

\subsection{Main Generalizalization Results of Expert Pruning}

The generalization results of the pruned model without and with post-pruning fine-tuning are summarized in  Theorems \ref{thm_1} and \ref{thm_2}. We also present three important lemmas that lead to the theorems.   Lemmas \ref{lemma_1} and \ref{lemma_2} jointly show that  
the \textit{change of router's} $l_2$ \textit{norm} in an important expert that learns task-specific features is significantly larger than the $l_2$ norm change of the router of unimportant expert that learns task-irrelevant features. Lemma \ref{lemma_3} shows that post-pruning fine-turning can promote the unpruned experts to be   specialized in learning task-specific features. 

\begin{lemma}[Important experts become more specialized]\label{lemma_1}
Suppose the expert learning rate $\eta_e$, the router learning rate  $\eta_r$,  the batch-size $B$, and the number of iterations $T$ satisfy 
\begin{equation}\label{eqn:condition1}
    \eta_r=O(\eta_e/mdl^2),  B=\Omega(l^2d^2) 
    \end{equation}
    \begin{equation}\label{eqn:condition2}
    T=\Omega( l^2\sqrt{d\log l}/\eta_e).
\end{equation}

    For any expert $s\in S_1$ such that $p_1^{(s,0)}=\Omega(1)$, 
    we have
    
    (i) $p_1^{(s,T)}=1$, \\
    (ii) for every $(x,+1)\sim\mathcal{D}$, $G_j^{(s,T)}(x)>1/2$, if $x^{(j)}=o_1$,\\
    (iii) $\langle w_r^{(s,T)},o_1\rangle=\Omega( l\sqrt{d\log l})$, for a constant fraction  $r\in[m]$,\\ 
       (iv) $\Delta_s^{(T)}>\cfrac{3}{2}\log l$.
       The counterpart results also hold  for experts in $S_2$.
   
\end{lemma}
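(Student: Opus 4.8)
The plan is to run a coupled feature-learning analysis of the joint router/expert SGD dynamics, exploiting the orthonormality of the pattern set $\mathcal{P}$ to decouple the updates coordinate-by-coordinate along the patterns. For the fixed expert $s\in S_1$ I would write the router as $w_s^{(t)}=\langle w_s^{(t)},o_1\rangle\,o_1+(\text{components along other patterns})$ and each neuron as $w_r^{(s,t)}=\langle w_r^{(s,t)},o_1\rangle\,o_1+\cdots$, and then track the two scalar ``signal'' correlations $\langle w_r^{(s,t)},o_1\rangle$ and $\langle w_s^{(t)},o_1\rangle$ together with the ``noise'' correlations with the irrelevant patterns $q_i$. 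The whole argument is an induction over $t$ maintaining three invariants: (a) the proficiency stays bounded below, $p_1^{(s,t)}=\Omega(1)$, so that on a constant fraction of class-$1$ samples the expert gates an $o_1$ token with value $\ge 1/l$; (b) a constant fraction of neurons keep a positive, growing $o_1$-correlation; and (c) all irrelevant correlations stay negligible. The positive-feedback mechanism is that gating $o_1$ feeds gradient into the neurons' $o_1$-direction, which enlarges the expert's output on $o_1$ tokens and thereby drives the router to rank and gate $o_1$ even more strongly.

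Next I would compute the per-step updates. Using the hinge-surrogate gradient $\nabla l=-y\nabla f$ and $a^{(s)}=+1$, on a class-$1$ sample whose $o_1$ token is selected the neuron update contributes $\eta_e\,G_j^{(s,t)}\,\mathbf{1}[\langle w_r^{(s,t)},o_1\rangle>0]\,o_1$ to $w_r^{(s,t)}$, i.e. a signal of size $\Omega(\eta_e/l)$ per active step since $G_j^{(s,t)}\ge 1/l$; the irrelevant patterns appear with probability $O(1/d)$, so after averaging over a batch of size $B=\Omega(l^2d^2)$ a Bernstein/Hoeffding concentration bound keeps their accumulated contribution of strictly lower order than the $o_1$-signal throughout training. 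Summing the signal over $T=\Omega(l^2\sqrt{d\log l}/\eta_e)$ steps then yields $\langle w_r^{(s,T)},o_1\rangle=\Omega(\eta_e T/l)=\Omega(l\sqrt{d\log l})$ for the constant fraction of neurons with favourable initialization (a constant fraction by anti-concentration of the pre-trained/random weights), which is exactly part~(iii).

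For the router and gating claims I would carry the much slower update of $\langle w_s^{(t)},o_1\rangle$, whose per-step increment is proportional to $\eta_r$ times the expert's $o_1$-output $\sum_r\sigma(\langle w_r^{(s,t)},o_1\rangle)$ and a softmax-derivative factor; with $\eta_r=O(\eta_e/mdl^2)$ and the growing neuron correlations from part~(iii), accumulating over $T$ steps forces the $o_1$ routing value to exceed every competing token's routing value by more than $\log l$. Feeding this gap into the softmax in (\ref{eqn:gate_expert}) gives $G_j^{(s,T)}>1/2$ whenever $x^{(j)}=o_1$ on a class-$1$ sample (part~(ii)); since $1/2>1/l$ and $o_1$ is then always among the top-$l$ selected tokens, the proficiency saturates to $p_1^{(s,T)}=1$ (part~(i)). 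Part~(iv) follows by turning the same routing-value gap into a norm bound: because the router grows essentially only along $o_1$ while its orthogonal part is preserved, $\|w_s^{(T)}\|\ge\langle w_s^{(T)},o_1\rangle$, and the log-sum-exp inequality underlying $G_j^{(s,T)}>1/2$ forces $\langle w_s^{(T)},o_1\rangle-\|w_s^{(0)}\|>\tfrac32\log l$, hence $\Delta_s^{(T)}>\tfrac32\log l$. The symmetric statement for $S_2$ is obtained by exchanging the roles of $o_1,o_2$ and of $y$.

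The main obstacle I anticipate is controlling the \emph{coupled} evolution of routers and neurons across the entire trajectory rather than at a single step: the neuron growth in (iii) is only valid while the router keeps gating $o_1$ (invariant (a)), yet the router's own movement is fed by that neuron growth, so the induction must close simultaneously on the $\Omega(1)$ lower bound on proficiency \emph{and} the eventual $>1/2$ gating, without ever letting an irrelevant pattern overtake $o_1$. Making this rigorous requires the separation of timescales $\eta_r\ll\eta_e$, so the slowly moving router cannot destabilize the neuron dynamics, together with a concentration bound that holds uniformly over all $T$ steps and all patterns, so that the $O(1/d)$ irrelevant-pattern noise never accumulates past the signal; these two quantitative controls are precisely what pin down the stated conditions on $\eta_r$, $B$, and $T$.
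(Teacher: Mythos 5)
Your high-level framework (orthonormal pattern decomposition, induction on the coupled router/neuron dynamics, Hoeffding concentration over batches of size $B=\Omega(l^2d^2)$, timescale separation $\eta_r\ll\eta_e$, and converting a routing gap of $\log l$ into $G_j^{(s,T)}>1/2$) matches the paper's, but your account of the router dynamics has a genuine gap. You assert that the router's per-step increment along $o_1$ is proportional to the expert's $o_1$-output $\sum_r\sigma(\langle w_r^{(s,t)},o_1\rangle)$, hence nonnegative, so that a positive-feedback loop runs from step $0$. In fact the expert-choice router gradient (equation (\ref{eq_a_2})) makes the increment proportional to the \emph{gap} $G_1^{(s,t)}\sum_{i}G_i^{(s,t)}\bigl(\sigma_1^{(s,t)}-\sigma_i^{(s,t)}\bigr)$ between the expert's response on $o_1$ and its response on the other tokens it selects. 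At a pre-trained initialization the paper only assumes $|\delta_{1,q}^{(s,0)}|=|\sigma_1^{(s,0)}-\sigma_q^{(s,0)}|=O(mC_2)$ with \emph{arbitrary sign}, so the router can initially be pushed away from $o_1$, and your invariant (a) can break in the very first steps. The paper repairs exactly this with a two-phase argument (Lemma \ref{lemma_a_4}): during a warm-up of $T''=O(lC_2/\eta_e)$ iterations, the tiny $\eta_r$ keeps the router's ranking of patterns frozen to within the assumed initial separation $|\langle w_s^{(0)},o_1-q\rangle|=C_p=\Theta(1/d)$ (an assumption on the pre-trained model that your proposal never invokes, and without which the frozen-router argument has nothing to preserve), while the neurons grow until $\delta_{1,q}^{(s,T'')}=\Omega(mC_2)>0$; only after this point is the feedback genuinely positive. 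From there the router gap accumulates a linear term plus a quadratic term driven by $\delta_{1,q}^{(s,t)}$ increasing at rate $\Omega(m\eta_e/l)$ per step, and it is the quadratic term that reaches $\log l$ within $T=\Omega(l^2\sqrt{d\log l}/\eta_e)$: with $\eta_r=O(\eta_e/mdl^2)$, a constant-rate (purely linear) accumulation reaches only $O\bigl(\tfrac{1}{l}\sqrt{\log l/d}\bigr)$, short by a factor of order $l\sqrt{d\log l}$. Your phrase ``growing neuron correlations'' gestures at this, but the proof must track the gap $\delta_{1,q}$, not the raw output, for the mechanism to be sound.

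Two smaller points. First, your derivation of (iv) does not follow as stated: $G_j^{(s,T)}>1/2$ only yields the \emph{relative} gap $\langle w_s^{(T)},o_1-q\rangle>\log l$, which is compatible with a small $\langle w_s^{(T)},o_1\rangle$ when the $q$-components are negative, so it cannot by itself force $\Delta_s^{(T)}>\tfrac{3}{2}\log l$. The paper instead tracks the growth of $\langle w_s^{(T)},o_1\rangle$ itself past $2\log l$ via the quadratic accumulation, and combines $\|w_s^{(T)}\|\ge\langle w_s^{(T)},o_1\rangle$ with the normalization $\|w_s^{(0)}\|\le\tfrac{1}{2}\log l$ (from the WLOG condition $l\ge e^{2C_1}$). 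Second, the constant fraction of neurons with $\langle w_r^{(s,0)},o_1\rangle\ge0$ cannot be obtained by anti-concentration here, because the initialization is a pre-trained model, not a random one; the paper takes the existence of an $\Omega(1)$ fraction of such neurons as an explicit assumption on important experts, and your argument needs to do the same.
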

Lemma \ref{lemma_1} shows that expert $s$ in the pre-trained model that learns  the \textit{task-specific} feature (say, $o_1$) to some extent  (important experts, i.e., $p_1^{(s,0)}=\Omega(1)$) will become more specialized in learning the task-specific features after fine-tuning $T$ iterations, and the norm of the corresponding router has a significant increase.  
Specifically, (i) indicates that the router $s$ will always select $o_1$ as one of the $l$ tokens in all class-1 samples. (ii) shows that for a class-1 sample $x$, the gating value that corresponds to the token $o_1$ is large, at least $1/2$. (iii) shows that a constant fraction of neurons in expert $s$ has a large component along the direction of the task-specific feature. (iv) shows that the router norm has a significant increase after fine-tuning.



In contrast, Lemma \ref{lemma_2} shows that experts that do not learn   the downstream task-specific features (unimportant experts) will still only learn task-irrelevant features after fine-tuning,  and the norm changes of the corresponding routers are relatively small. 
\begin{lemma}[Unimportant experts stay unimportant]\label{lemma_2}
Suppose (\ref{eqn:condition1}) and (\ref{eqn:condition2}) hold. 
     For any expert $s\in S_1$ such that $p_1^{(s,0)}=O(1/d)$, 
     we have (i) $p_1^{(s,T)}=O(1/d)$,
     (ii) $\Delta_s^{(T)}=O(\log^2 l/\sqrt{d})+O(l^4\log^2l/d^2)$.      The counterpart results also hold  for experts in $S_2$.
\end{lemma}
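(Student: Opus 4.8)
The plan is to track, across all $T$ SGD iterations, the coordinates of the router $w_s^{(t)}$ and of each expert neuron $w_r^{(s,t)}$ in the fixed orthonormal pattern basis $\mathcal P=\{o_1,o_2,q_1,\dots,q_{d-2}\}$. Writing $w_s^{(t)}=\sum_{p\in\mathcal P}\langle w_s^{(t)},p\rangle\,p$ and likewise for the neurons, the SGD updates decouple along these directions because the patterns are orthonormal and each input token equals one pattern. I would first derive the per-step router and neuron gradients decomposed along each pattern: the component of $\nabla_{w_s}f$ along $p$ is governed by whether $p$ lies in the top-$l$ selected set $J_s$, its gating value $G^{(s)}$, and the total expert activation $V_s^{(t)}(p):=\sum_r\sigma(\langle w_r^{(s,t)},p\rangle)$ on that pattern. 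The whole argument is then an induction on $t$ maintaining the hypothesis that, for the unimportant expert $s$, both $\langle w_s^{(t)},o_1\rangle$ and the activation $V_s^{(t)}(o_1)$ stay small — the mirror image of the runaway growth established for important experts in Lemma \ref{lemma_1}.

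For part (i), the hypothesis $p_1^{(s,0)}=O(1/d)$ says that at initialization the routing value $\langle w_s^{(0)},o_1\rangle$ places $o_1$ among the top-$l$ tokens with gating $\ge 1/l$ on only an $O(1/d)$ fraction of class-1 inputs. The inductive step shows training cannot amplify this: on the rare inputs where $o_1$ is selected, the router gradient along $o_1$ is proportional to $V_s^{(t)}(o_1)$, which the joint induction keeps small (the neurons of an unimportant expert receive $o_1$-aligned gradient only through these same rare, low-gating selections), so $\langle w_s^{(t)},o_1\rangle$ never rises enough to pull $o_1$ into $J_s$ more often. Concluding $p_1^{(s,T)}=O(1/d)$ then reduces to comparing $\langle w_s^{(T)},o_1\rangle$ against the order statistics of the routing values over the irrelevant patterns present in a class-1 sample.

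For part (ii), I would bound the norm change by the total displacement, $\Delta_s^{(T)}=\|w_s^{(T)}\|-\|w_s^{(0)}\|\le\|w_s^{(T)}-w_s^{(0)}\|$, and control it coordinate-wise through $\|w_s^{(T)}-w_s^{(0)}\|^2=\sum_{p\in\mathcal P}(\langle w_s^{(T)}-w_s^{(0)},p\rangle)^2$, splitting the sum into the two task-specific directions $o_1,o_2$ and the $d-2$ task-irrelevant directions. The task-irrelevant directions dominate: each $q_i$ enters the router gradient only when selected, an event of probability $O(1/d)$, so the batch-averaged per-step displacement along $q_i$ is suppressed by this frequency. Combining the $O(1/d)$ frequency over the $\Theta(d)$ coordinates, the $O(\log l)$-scale of the gating values and neuron activations, and the iteration/learning-rate budget fixed by (\ref{eqn:condition1})--(\ref{eqn:condition2}), the square root of the summed squares yields the $O(\log^2 l/\sqrt d)$ term. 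The two task-specific directions are visited only with the $O(1/d)$ probability from part (i), so after accounting for the $l$ competing selected tokens and their gating interactions, their squared contribution is of the strictly lower $O(l^4\log^2 l/d^2)$ order.

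The main obstacle is closing the induction through two coupled nonlinearities. First, the top-$l$ selection is discontinuous, so I must rule out a feedback loop in which a marginal increase of $\langle w_s^{(t)},o_1\rangle$ promotes $o_1$ into $J_s$ more often and thereby accelerates its own growth; the batch size $B=\Omega(l^2d^2)$ and the learning-rate ratio $\eta_r=O(\eta_e/mdl^2)$ are precisely what keep each step's perturbation below the threshold that would trigger such a loop. Second, the router and neuron dynamics are mutually dependent (the router gradient sees $V_s^{(t)}$, the neuron gradient sees the gating values), so the induction must be carried jointly for both, and the per-step mini-batch stochasticity must be handled by a concentration bound that the large batch size makes tight. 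Sustaining this control uniformly over all $T=\Omega(l^2\sqrt{d\log l}/\eta_e)$ steps is the crux of the proof.
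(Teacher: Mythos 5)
Your overall plan coincides with the paper's own proof: track the router and neuron components in the orthonormal pattern basis, run a joint induction that keeps the router gaps $\langle w_s^{(t)},q-o_1\rangle$ and the expert activations under control (so the top-$l$ selection never flips, using the $O(1/d)$ appearance/selection frequency, batch concentration at $B=\tilde{\Omega}(l^2d^2)$, and the small ratio $\eta_r=O(\eta_e/mdl^2)$), and then convert per-coordinate displacement bounds into the norm-change bound. Your part (ii) differs only in the last step: you bound $\Delta_s^{(T)}\le\|w_s^{(T)}-w_s^{(0)}\|$ and sum squared displacements, whereas the paper bounds $\|w_s^{(T)}\|^2-\|w_s^{(0)}\|^2$ coordinate-wise, picking up cross terms $2\Delta_p\langle w_s^{(0)},p\rangle$ over the $d-2$ irrelevant directions (which is where its dominant $O(\log^2 l/\sqrt{d})$ term comes from). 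Your variant is clean and, if anything, yields a slightly tighter bound.

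The genuine gap is your treatment of the direction $o_2$. You assert that ``the two task-specific directions are visited only with the $O(1/d)$ probability from part (i),'' but part (i) controls only $o_1$: the hypothesis $p_1^{(s,0)}=O(1/d)$ and the conclusion $p_1^{(s,T)}=O(1/d)$ say nothing about how often the router of $s\in S_1$ selects $o_2$ on class-2 samples. Nothing rules out $p_2^{(s,0)}=\Omega(1)$ for such an expert --- membership in $S_1$ is determined by the sign $a^{(s)}$, not by which feature its router prefers --- so on a constant fraction of class-2 samples the router gradient has a component along $o_2$ of magnitude up to $O(mC_2)$, with no $1/d$ suppression, where $C_2$ is the initialization scale of the neurons. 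The rarity-based estimate you propose for $o_2$ is therefore false. The correct control (and what the paper does in its inequality for $\langle w_s^{(T+1)},o_2\rangle$) is the learning-rate/iteration budget: the per-step displacement along $o_2$ is $O(mC_2\eta_r)$, so over $T$ steps it is $O(mC_2\eta_r T)=O(\sqrt{\log l/d})$, the $1/\sqrt{d}$ coming from $\eta_r\propto\Theta(1/d)$ rather than from any rarity of selection. The squared contribution along $o_2$ is then of order $\log l/d$, which actually dominates your claimed $O(l^4\log^2 l/d^2)$ when $d\gg l^4$; fortunately it is still small enough that your total remains $O(\log l/\sqrt{d})$, within the lemma's bound. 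So the fix is local --- replace the rarity argument by the budget argument for $o_2$ --- but as written that step fails.
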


Lemma \ref{lemma_2} (i) indicates that the router of the fine-tuned expert still can only select task-specific features for a very small fraction of data, less than $O(1/d)$, while the router fails to select task-specific features for the dominating fraction of the samples. Lemma \ref{lemma_2} (ii) shows that the router norm stays small. That is because $l$ is $O(1)$, much smaller than $d$. 

Next we will present the generalization result  of expert pruning without post-pruning fine-tuning. 
Let   $f^{(T)}(x)$ denote the output of the unpruned model in equation (\ref{analyzed_model}) after $T$ SGD iterations. Let $\hat{f}^{(T;\rho)}(x)$ denote the resulting  pruned model where $\rho$ (in $[0,1)$)  fraction  of experts are pruned.   
\begin{theorem}[Generalization of pruned model with no post-pruning fine-tuning]\label{thm_1}
   Suppose (\ref{eqn:condition1}) and (\ref{eqn:condition2}) hold, the number of experts 
      $k=O(\sqrt{d})$, and at least $\gamma$ fraction of $s\in S_1$ with $p_1^{(s,0)}=O(1/d)$ and $s^\prime\in S_2$ with $p_2^{(s^\prime,0)}=O(1/d)$. Then, we have
 for any $0 \leq \rho\le\gamma$, 
\begin{equation}
    \mathbb{P}\left[\forall (x,y)\sim\mathcal{D}: y\hat{f}^{(T;\rho)}(x)>0\right]=1.
\end{equation}
   
\end{theorem}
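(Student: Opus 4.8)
The plan is to fuse the two regimes established by Lemmas~\ref{lemma_1} and \ref{lemma_2} into a statement about which experts survive pruning, and then to show that the surviving subnetwork reproduces the correct sign of the output on \emph{every} sample in the support of $\mathcal{D}$. First I would argue that pruning the $\rho\le\gamma$ fraction of experts with the smallest router-norm changes removes \emph{only} unimportant experts. By Lemma~\ref{lemma_1}(iv), every important expert (with $p_1^{(s,0)}=\Omega(1)$ in $S_1$, or the $S_2$ counterpart) has $\Delta_s^{(T)}>\frac{3}{2}\log l$, whereas by Lemma~\ref{lemma_2}(ii) every unimportant expert has $\Delta_s^{(T)}=O(\log^2 l/\sqrt d)+O(l^4\log^2 l/d^2)=o(1)$ because $l=O(1)$ while $d$ is large. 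Since the pruning is performed separately within $S_1$ and $S_2$, and each set contains at least a $\gamma$ fraction of unimportant experts, the $\rho\le\gamma$ experts of smallest $\Delta_s^{(T)}$ in each set are necessarily unimportant; hence $S_{k'}$ retains every important expert.

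The second step exploits the expert-choice routing rule~(\ref{eqn:gate_expert_prune}): because each surviving expert computes its gating values over its own selected tokens, pruning an expert zeroes its gating values without perturbing any other expert. Consequently the pruned output decomposes cleanly as $\hat f^{(T;\rho)}(x)=\sum_{s\in S_{k'}}\sum_{j=1}^n f_s(x^{(j)})$, so it suffices to control this sum directly rather than to compare it against the unpruned model. Fixing a class-1 sample $(x,+1)$, I would then lower bound the contribution of the retained important experts in $S_1$ and upper bound everything else. By Lemma~\ref{lemma_1}(ii) such an expert assigns gating value $>1/2$ to the token equal to $o_1$, and by Lemma~\ref{lemma_1}(iii) a constant fraction of its neurons satisfy $\langle w_r^{(s,T)},o_1\rangle=\Omega(l\sqrt{d\log l})$; since $a^{(s)}=+1$ this yields a positive contribution of order $\Omega(ml\sqrt{d\log l})$.

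For the remaining experts I would combine orthonormality of $\mathcal P$ with the trajectory bounds from the proofs of Lemmas~\ref{lemma_1}--\ref{lemma_2}: the neurons of the experts in $S_2$ are aligned with $o_2$, which is absent from a class-1 sample, so their ReLU activations on $o_1$ and on the task-irrelevant tokens $\{q_i\}$ are small; likewise the retained unimportant experts never align strongly with $o_1$. Summing these magnitudes over at most $k=O(\sqrt d)$ experts and $n=O(d)$ tokens shows the total non-dominant contribution is $o(ml\sqrt{d\log l})$, hence $\hat f^{(T;\rho)}(x)>0$. The class-2 case is symmetric: the $S_2$ experts aligned with $o_2$ now dominate with $a^{(s)}=-1$, giving $\hat f^{(T;\rho)}(x)<0$. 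Thus $y\hat f^{(T;\rho)}(x)>0$ for every $(x,y)$ in the support, which is exactly the probability-one guarantee.

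The hard part will be the uniform upper bound in the third step. Because the conclusion is almost sure over all of $\mathcal D$, the firing of the non-dominant experts must be controlled deterministically for every admissible token configuration, not merely in expectation or with high probability. This requires tracking, for each neuron of each surviving expert, its components along \emph{every} orthonormal pattern throughout the $T$ SGD steps --- precisely the per-direction control underlying Lemmas~\ref{lemma_1} and \ref{lemma_2} --- and then showing that the cross terms $\langle w_r^{(s,T)},o_1\rangle$ for $s\in S_2$ and $\langle w_r^{(s,T)},q_i\rangle$ remain, after multiplication by the gating values and summation over the $O(\sqrt d)$ retained experts and $O(d)$ tokens, too small to overturn the $\Omega(ml\sqrt{d\log l})$ margin supplied by the important experts.
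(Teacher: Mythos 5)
Your proposal is correct and follows essentially the same route as the paper's own proof: the norm-change separation from Lemmas~\ref{lemma_1}(iv) and \ref{lemma_2}(ii) guarantees the important experts survive pruning for any $\rho\le\gamma$, the expert-choice decomposition isolates the surviving experts, the $\Omega(ml\sqrt{d\log l})$ margin comes from Lemma~\ref{lemma_1}(ii)--(iii), and the deterministic per-direction cross-term bounds you flag as the ``hard part'' (components of $S_2$ experts' neurons along $o_1$ and along irrelevant patterns staying $O(C_2)$ and $C_2+O(l^2\sqrt{\log l}/\sqrt{d})C_2$, respectively) are exactly what the paper derives from the gradient formulas, yielding an opposing contribution $O(kmC_2)+O(kmC_2l^2\sqrt{\log l}/\sqrt{d})$ that $k=O(\sqrt{d})$ keeps below the margin. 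One caution on your accounting: the ``summation over $O(d)$ tokens'' must be read with the gating normalization in mind --- each surviving expert selects only $l$ tokens and its gating values sum to one, so the opposing term scales with $k$ rather than $kn$; a literal per-token summation would overwhelm the margin.
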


Theorem \ref{thm_1} shows that the pruned model with no post-pruning fine-tuning can still achieve zero generalization if up to $\gamma$
 fraction of experts with the smallest router $l_2$-norm change are removed. The intuition is that those $\gamma$ fraction of experts do not learn task-specific features, and, thus, removing these experts does not affect generalization.



We next analyze the impact of post-pruning fine-tuning. 
If the   pruned model is fine-tuned again for $T'$ iterations, let  $\hat{f}^{(T;\rho, T^\prime)}(x)$ denote resulting model after post-pruning fine-tuning.  Lemma \ref{lemma_3}
shows that the routers with large norm change are specialized in selecting task-specific patterns. If these experts remain in the model after pruning, post-pruning fine-tuning 
can ensure that the hidden neurons of the experts are trained to  learn task-specific features.
\begin{lemma}[Post-pruning fine-tuning promotes experts to learn task-specific features]\label{lemma_3}
    Suppose (\ref{eqn:condition1}) and (\ref{eqn:condition2}) hold. 
     For any expert $s\in S_1$ such that $\Delta_s^{(T)}>\frac{3}{2}\log l$, we have\\
     (i) $p_1^{(s,T)}=1$, \\
    (ii) for every $(x,+1)\sim\mathcal{D}$, $G_j^{(s,T)}(x)>1/2$, if $x^{(j)}=o_1$,\\
    Moreover, after pruning experts with the ratio $\rho\geq\gamma$, if $s\in S_{k^\prime}$, and the number of post-pruning fine-tuning steps   satisfies
    \begin{equation} \label{eqn:T'}
    T^\prime=\Omega(kl^2\sqrt{\log l}/\eta_e) 
    \end{equation}
   then  we have 
     \begin{equation}
 \langle w_r^{(s,T,T^\prime)},o_1\rangle=\Omega( kl^2\sqrt{\log l})
\end{equation}
         for a constant fraction $r\in[m]$, where $w_r^{(s,T,T^\prime)}$ is the resulting weights of neuron $r$ in expert $s$.  The counterpart results hold for experts in $S_2$.
\end{lemma}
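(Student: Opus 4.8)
The plan is to prove the ``converse'' statements (i)--(ii) by combining the contrapositive of Lemma \ref{lemma_2} with Lemma \ref{lemma_1}, and then to analyze the expert (neuron) dynamics under post-pruning fine-tuning directly. For (i)--(ii), first observe that Lemma \ref{lemma_2}(ii) gives, for any unimportant expert with $p_1^{(s,0)}=O(1/d)$, a norm change $\Delta_s^{(T)}=O(\log^2 l/\sqrt d)+O(l^4\log^2 l/d^2)$, which for $l=O(1)$ and large $d$ vanishes and is in particular strictly below the constant threshold $\tfrac32\log l$. Hence the hypothesis $\Delta_s^{(T)}>\tfrac32\log l$ excludes the unimportant regime, so under the two-regime structure of the setup the expert must satisfy $p_1^{(s,0)}=\Omega(1)$. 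Invoking Lemma \ref{lemma_1}(i)--(ii) for such an important expert immediately yields $p_1^{(s,T)}=1$ and $G_j^{(s,T)}(x)>1/2$ whenever $x^{(j)}=o_1$.

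For the neuron-growth claim I would track the SGD trajectory of $w_r^{(s)}$ over the $T'$ post-pruning steps and project it onto the task-specific direction $o_1$. Using the linear-loss gradient $-y\nabla_{w_r^{(s)}}f$ and $a^{(s)}=+1$ for $s\in S_1$, the update of the $o_1$-coordinate is
\begin{equation*}
\langle w_r^{(s,t+1)},o_1\rangle=\langle w_r^{(s,t)},o_1\rangle+\frac{\eta_e}{B}\sum_{(x,y)}y\sum_{j}\sigma'\!\left(\langle w_r^{(s,t)},x^{(j)}\rangle\right)G_j^{(s,t)}\langle x^{(j)},o_1\rangle .
\end{equation*}
Because $\mathcal P$ is orthonormal, $\langle x^{(j)},o_1\rangle$ equals $1$ exactly on the $o_1$-token and $0$ otherwise, so only class-$1$ samples (which alone contain $o_1$) drive this coordinate, and they do so with the correct sign $y=+1$. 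The first subgoal is to show that the selection established at step $T$ persists: since $\eta_r=O(\eta_e/mdl^2)$ is tiny, over $T'=\Omega(kl^2\sqrt{\log l}/\eta_e)$ steps the router moves by only $o(1)$ in the relevant directions, so $p_1^{(s,t)}=1$ and $G_j^{(s,t)}>1/2$ on $o_1$ are maintained for all $t\in[T,T+T']$ (one argues, as in Lemma \ref{lemma_1}, that the router gradient never decreases the $o_1$-routing value). The second subgoal is to show that a constant fraction of neurons start at step $T$ with $\langle w_r^{(s,T)},o_1\rangle>0$, so that $\sigma'=1$ on $o_1$ and the positive update is self-reinforcing, keeping these neurons active throughout.

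Given these two facts, each step the $o_1$-coordinate of an active neuron increases by at least $\eta_e\cdot\Pr[y=+1]\cdot\tfrac12\cdot(1-o(1))=\Theta(\eta_e)$, since the hinge gradient is evaluated on the non-saturating surrogate $1-yf$ and therefore stays active at every step. Accumulating this $\Theta(\eta_e)$ increment over $T'=\Omega(kl^2\sqrt{\log l}/\eta_e)$ iterations yields $\langle w_r^{(s,T,T')},o_1\rangle=\Omega(kl^2\sqrt{\log l})$ for a constant fraction of $r\in[m]$; the factor $k$ thus enters solely through the length of the post-pruning schedule. The symmetric argument with $y=-1$, $o_2$, and $a^{(s)}=-1$ handles experts in $S_2$.

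The main obstacle will be the steps requiring simultaneous, coupled control of the router and the expert neurons over the full horizon $T'$: I must guarantee that the ReLU activation pattern on $o_1$ does not collapse (the neuron stays on its positive branch despite SGD noise) and that the slowly moving router never drops $o_1$ from its top-$l$ set, all while bounding the accumulated interference from the $O(1/d)$-frequent task-irrelevant patterns $\{q_i\}$, which are orthogonal to $o_1$ but still perturb the pre-activations and the finite-batch gradients. Showing that this interference stays $o(1)$ relative to the $\Theta(\eta_e)$ signal per step---so that it cannot reverse the sign of the update or deactivate a constant fraction of neurons over $\Theta(kl^2\sqrt{\log l}/\eta_e)$ steps---is the delicate part, and is where the batch-size condition $B=\Omega(l^2d^2)$ and the learning-rate separation $\eta_r=O(\eta_e/mdl^2)$ are expected to be used.
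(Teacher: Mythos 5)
Your reduction of parts (i)--(ii) to Lemma \ref{lemma_1} rests on a dichotomy that the setup does not provide. The paper assumes only that at least one expert per class is important ($p_1^{(s,0)}=\Omega(1)$) and, in Theorem \ref{thm_1}, that at least a $\gamma$ fraction is unimportant ($p_1^{(s,0)}=O(1/d)$); nothing forces every expert into one of these two regimes, so an expert with, say, $p_1^{(s,0)}=\Theta(1/\sqrt{d})$ is covered by neither Lemma \ref{lemma_1} nor Lemma \ref{lemma_2}. Consequently, while the contrapositive of Lemma \ref{lemma_2} does tell you the expert is \emph{not} unimportant, it does not give you $p_1^{(s,0)}=\Omega(1)$, and the whole point of stating Lemma \ref{lemma_3} in terms of the observable quantity $\Delta_s^{(T)}$ is to avoid any hypothesis on $p_1^{(s,0)}$. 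The paper instead argues geometrically: it expands $\|w_s^{(T)}\|^2$ in the orthonormal basis $\{o_1,o_2,q_1,\dots,q_{d-2}\}$ and uses the fact that, for \emph{any} expert, the components along $o_2$ and along every $q$ move by only $O(\log l/\sqrt{d})$ and $O(\log l/d^{3/2})$ respectively over the $T$ fine-tuning steps, while $\langle w_s^{(0)},\cdot\rangle\le\frac{1}{2}\log l$ in every direction; hence a norm change exceeding $\frac{3}{2}\log l$ can only be carried by the $o_1$ component, giving $\langle w_s^{(T)},o_1\rangle>\frac{3}{2}\log l$ and therefore $\langle w_s^{(T)},o_1-q\rangle>\log l$ for all $q$, which is exactly (i) and (ii).

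The same issue breaks the second half of your argument: your ``second subgoal'' (a constant fraction of neurons with $\langle w_r^{(s,T)},o_1\rangle>0$) is precisely what must be deduced from the hypothesis $\Delta_s^{(T)}>\frac{3}{2}\log l$, and you supply no mechanism for it other than the unjustified reduction to the important case (where it is an \emph{assumption} on the pre-trained model). The paper's mechanism is a contradiction argument tying the router to the neurons: gradients along patterns orthogonal to $o_1$ never change $\langle w_r^{(s)},o_1\rangle$, neurons inactive on $o_1$ receive zero gradient along $o_1$ and so stay inactive, and the router's motion along $o_1$ is proportional to the total activation $\sigma_1^{(s,t)}=\sum_r \mathrm{ReLU}(\langle w_r^{(s,t)},o_1\rangle)$; thus $\langle w_s^{(T)},o_1\rangle-\langle w_s^{(0)},o_1\rangle>\log l$ forces $\sum_{t=0}^{T}\sigma_1^{(s,t)}>\frac{2}{\eta_r}\log l$, which is impossible for the chosen $\eta_r,\eta_e,T$ if only an $o(1)$ fraction of neurons were active on $o_1$ at initialization. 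Your final accumulation step (a $\Theta(\eta_e)$ per-step gain over $T'=\Omega(kl^2\sqrt{\log l}/\eta_e)$ iterations) does match the paper's, but without the two missing links above the proof does not go through from the lemma's actual hypothesis.
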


The first half of Lemma \ref{lemma_3} shows that if expert $s$ has large router norm change after pre-pruning fine-tuning, then router $s$ is specialized in selecting task-specific patterns and the corresponding gating values for   task-specific patterns are large. The second of Lemma \ref{lemma_3} shows that if we prune all unimportant experts and then continue to further fine-tuning expert $s$, then a constant fraction of neurons in expert $s$ will be specialized in learning task-specific features. 
Therefore, if we allow post-pruning fine-tuning, the tolerable pruning rate can be   large,  as described in  Theorem \ref{thm_2}.

\begin{theorem}[Generalization of pruned model with  post-pruning fine-tuning]\label{thm_2}
 Suppose (\ref{eqn:condition1}), (\ref{eqn:condition2}) and (\ref{eqn:T'}) hold, and the number of experts 
      $k=O(\sqrt{d})$. 
      Then for any $\rho\le1-O(1/k)$,  
      \begin{equation}
          \mathbb{P}\left[\forall (x,y)\sim\mathcal{D}: y\hat{f}^{(T;\rho, T^\prime)}(x)>0\right]=1.
      \end{equation}
\end{theorem}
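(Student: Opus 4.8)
The plan is to reduce Theorem~\ref{thm_2} to Lemma~\ref{lemma_3} by showing that, even under the aggressive pruning ratio $\rho\le 1-O(1/k)$, the retained set $S_{k'}$ still contains a task-specialized expert from each of $S_1$ and $S_2$, and that post-pruning fine-tuning then makes the signal of these experts dominate all interference for \emph{every} sample, so that $y\hat f^{(T;\rho,T')}(x)>0$ holds surely. First I would establish the selection step. Recall the rule retains the top $|S_1|(1-\rho)$ and top $|S_2|(1-\rho)$ experts by $\Delta_s^{(T)}$ within $S_1$ and $S_2$ respectively. Combining Lemma~\ref{lemma_1}(iv) (important experts have $\Delta_s^{(T)}>\tfrac32\log l$) with Lemma~\ref{lemma_2}(ii) (unimportant experts have $\Delta_s^{(T)}=O(\log^2 l/\sqrt d)+O(l^4\log^2 l/d^2)\ll\tfrac32\log l$ since $l=O(1)$ and $d$ is large), the important experts strictly top the ranking inside each group. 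Provided each group contains at least one important expert (with $p^{(\cdot,0)}=\Omega(1)$, guaranteed for a pre-trained model that learned the features), and since generically $|S_1|,|S_2|=\Theta(k)$ while $1-\rho=\Omega(1/k)$ forces $|S_i|(1-\rho)\ge 1$, the top-ranked retained expert of each group has $\Delta_s^{(T)}>\tfrac32\log l$, so Lemma~\ref{lemma_3}(i)-(ii) apply. Hence $S_{k'}$ contains an expert $s_1^\star\in S_1$ whose router selects $o_1$ with $G^{(s_1^\star,T)}>1/2$ on every class-$1$ sample, and symmetrically $s_2^\star\in S_2$ for $o_2$.

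Next I would invoke the second half of Lemma~\ref{lemma_3}: since (\ref{eqn:T'}) holds and $s_1^\star\in S_{k'}$, after $T'$ post-pruning steps a constant fraction of its neurons satisfy $\langle w_r^{(s_1^\star,T,T')},o_1\rangle=\Omega(kl^2\sqrt{\log l})$, and likewise for $s_2^\star$ with $o_2$. I would also argue that the router specialization of Lemma~\ref{lemma_3}(i)-(ii) persists through the $T'$ iterations in the pruned model (a router already selecting the task-specific token keeps selecting it with gating value $>1/2$), so that the strengthened neurons are genuinely activated on the task-specific token.

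The final step is the generalization inequality. Fix any $(x,y)$; by symmetry take $y=+1$, so $x$ carries $o_1$ at some position $j^\star$. Writing $f_s(x^{(j)})=a^{(s)}\bigl(\sum_r\sigma(\langle w_r^{(s)},x^{(j)}\rangle)\bigr)G_j^{(s)}$, I would isolate the signal term from $s_1^\star$ at token $j^\star$: it is positive of order $\Omega(mkl^2\sqrt{\log l})$, because a constant fraction of its neurons have inner product $\Omega(kl^2\sqrt{\log l})$ with $o_1$ and its gating value exceeds $1/2$. Everything else is interference: contributions of the $\Theta(k(1-\rho))$ retained experts on the $d-2$ task-irrelevant patterns $\{q_i\}$ (which have probability $O(1/d)$ and are orthonormal to $o_1$), together with the wrong-sign contributions of retained $S_2$ experts, which on a class-$1$ sample find no $o_2$ and thus route to low-gating-value tokens. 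Using the bounds on $\langle w_r^{(s)},q_i\rangle$ carried over from the feature-learning analysis behind Lemmas~\ref{lemma_1}--\ref{lemma_3}, the orthonormality of $\mathcal P$, and $k=O(\sqrt d)$, I would show the total interference is $o(mkl^2\sqrt{\log l})$, so $y\hat f^{(T;\rho,T')}(x)>0$ surely, giving probability exactly $1$.

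The main obstacle is the uniform-over-all-samples interference bound in the last step: because the claim is an almost-sure statement, the signal must beat the worst-case accumulation of noise from all retained experts and tokens simultaneously. This is precisely why the required post-pruning signal scales as $\Omega(kl^2\sqrt{\log l})$ rather than the $\Omega(l\sqrt{d\log l})$ of Lemma~\ref{lemma_1}(iii): the extra factor absorbs the $O(k)$-fold summation of per-expert interference, and the condition $k=O(\sqrt d)$ is what keeps this summation below the signal. A secondary technical point is verifying that a specialized router keeps its $o_1$-selecting behavior and the $G>1/2$ guarantee throughout the $T'$ post-pruning iterations, which requires re-running the router-dynamics argument of Lemma~\ref{lemma_3} inside the pruned model rather than the original one.
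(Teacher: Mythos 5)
Your proposal is correct and follows essentially the same route as the paper: the paper's own proof simply observes that under $\rho=1-O(1/k)$ the retained sets $S_1\cap S_{k'}$ and $S_2\cap S_{k'}$ have $O(1)$ experts, invokes Lemma~\ref{lemma_3} for the retained (large-$\Delta_s^{(T)}$) experts, and then repeats the signal-versus-interference decomposition from the proof of Theorem~\ref{thm_1}. Your write-up is in fact more explicit than the paper's two-sentence argument about the selection step (ranking via Lemmas~\ref{lemma_1} and~\ref{lemma_2} plus the existence assumption on important experts) and about persistence of router specialization during post-pruning fine-tuning, both of which the paper leaves implicit inside Lemma~\ref{lemma_3}.
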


Theorem \ref{thm_2} guarantees that a high expert-pruning-ratio ($1-O(1/k)$) is achievable without hurting the generalization performance using post-pruning fine-tuning. The intuition is that post-pruning fine-tuning allows some experts to become more specialized in learning task-specific patterns, and therefore, pruning more experts does not hurt generalization overall.   The required number of post-pruning fine-tuning iterations $T'$ is less than pre-pruning fine-tuning iterations $T$, because $k=O(\sqrt{d})$.

\section{Experimental Results}\label{experiments}
\begin{figure}[ht]
 \vspace{-3mm}
    \begin{subfigure}[(a)]{0.90\linewidth}
        \centering
        \includegraphics[width=0.70\linewidth]{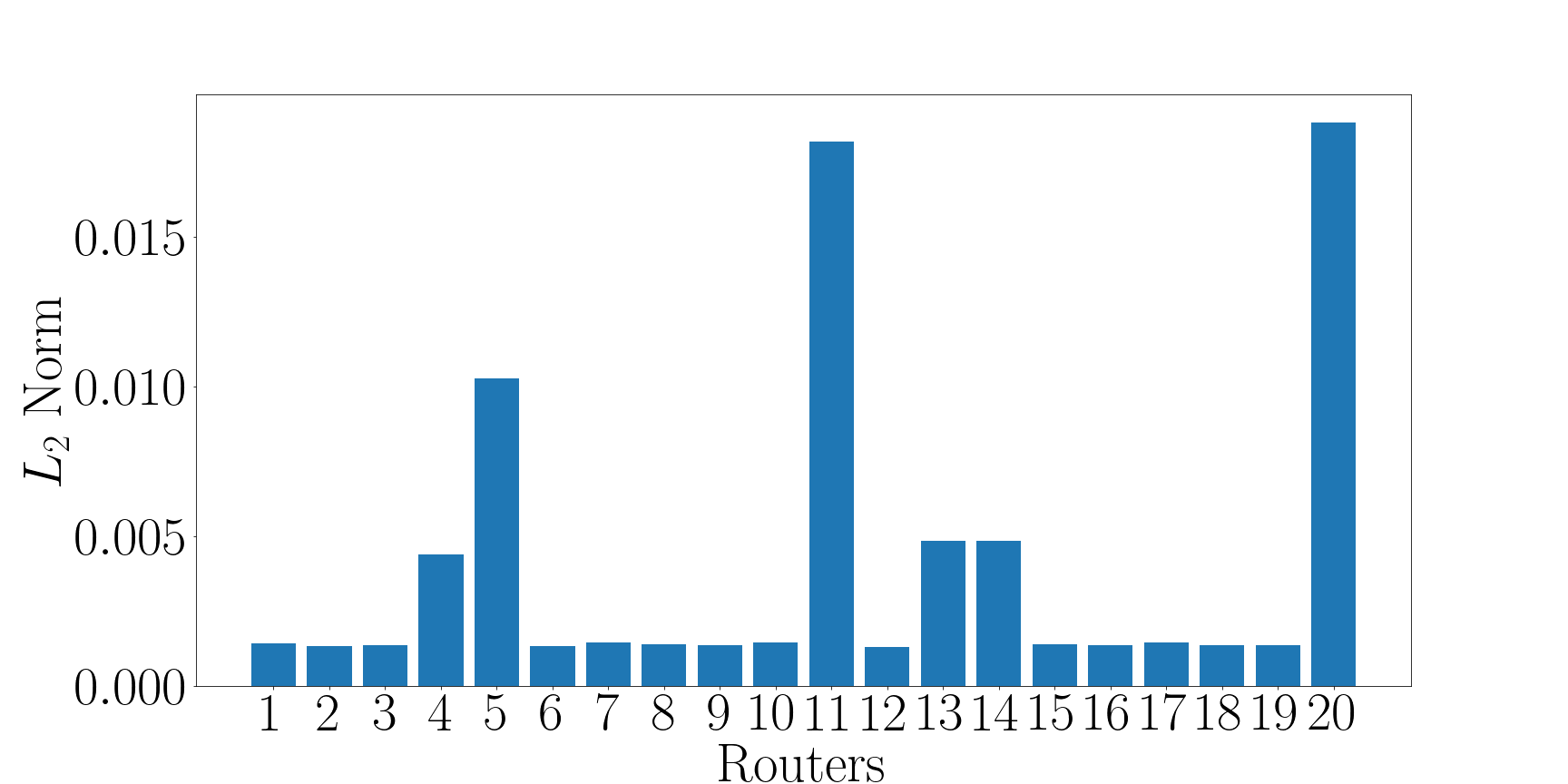}
        \caption{}
        \label{router_norm}
    \end{subfigure}
    \hfill
    \begin{subfigure}[(b)]{0.495\linewidth}
        \centering
        \includegraphics[width=\linewidth]{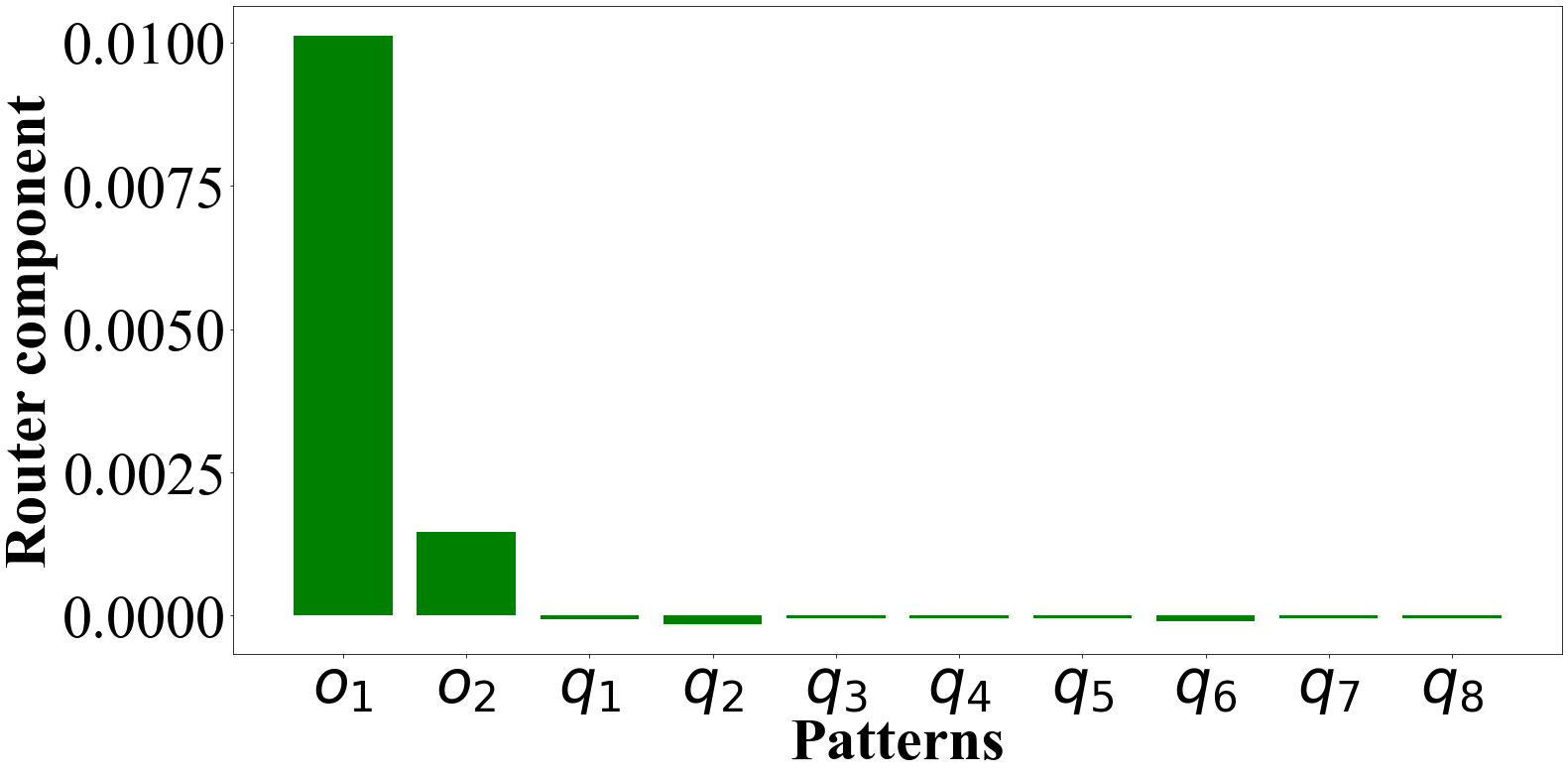}
        \caption{Router 5}
        \label{router_component_large_1}
    \end{subfigure}
    \hfill
    \begin{subfigure}[(c)]{0.495\linewidth}
        \centering
        \includegraphics[width=\linewidth]{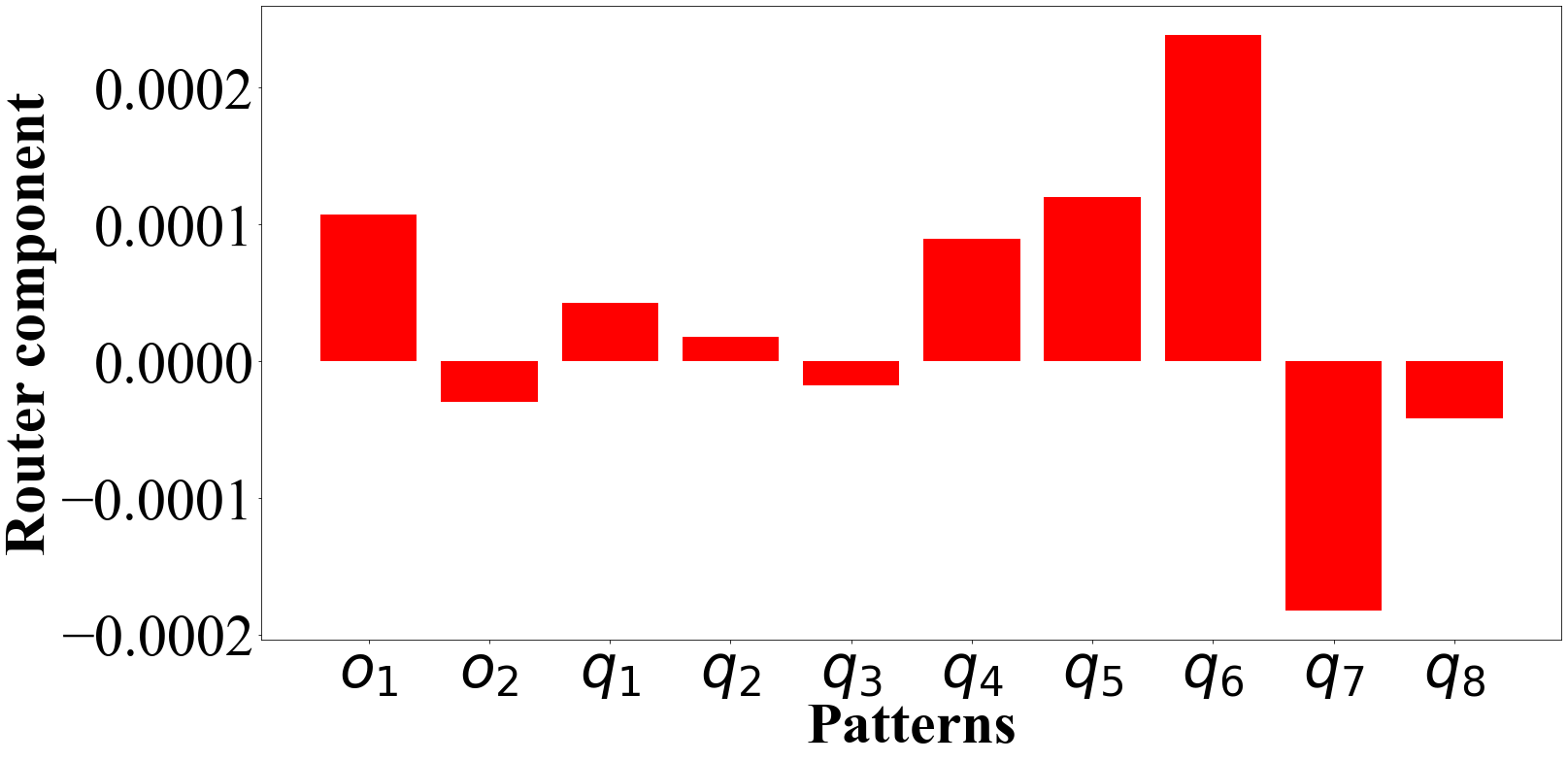}
        \caption{Router 1}
        \label{router_component_small_1}
    \end{subfigure}
    \hfill
    \begin{subfigure}[(d)]{0.48\linewidth}
        \centering
        \includegraphics[width=\linewidth]{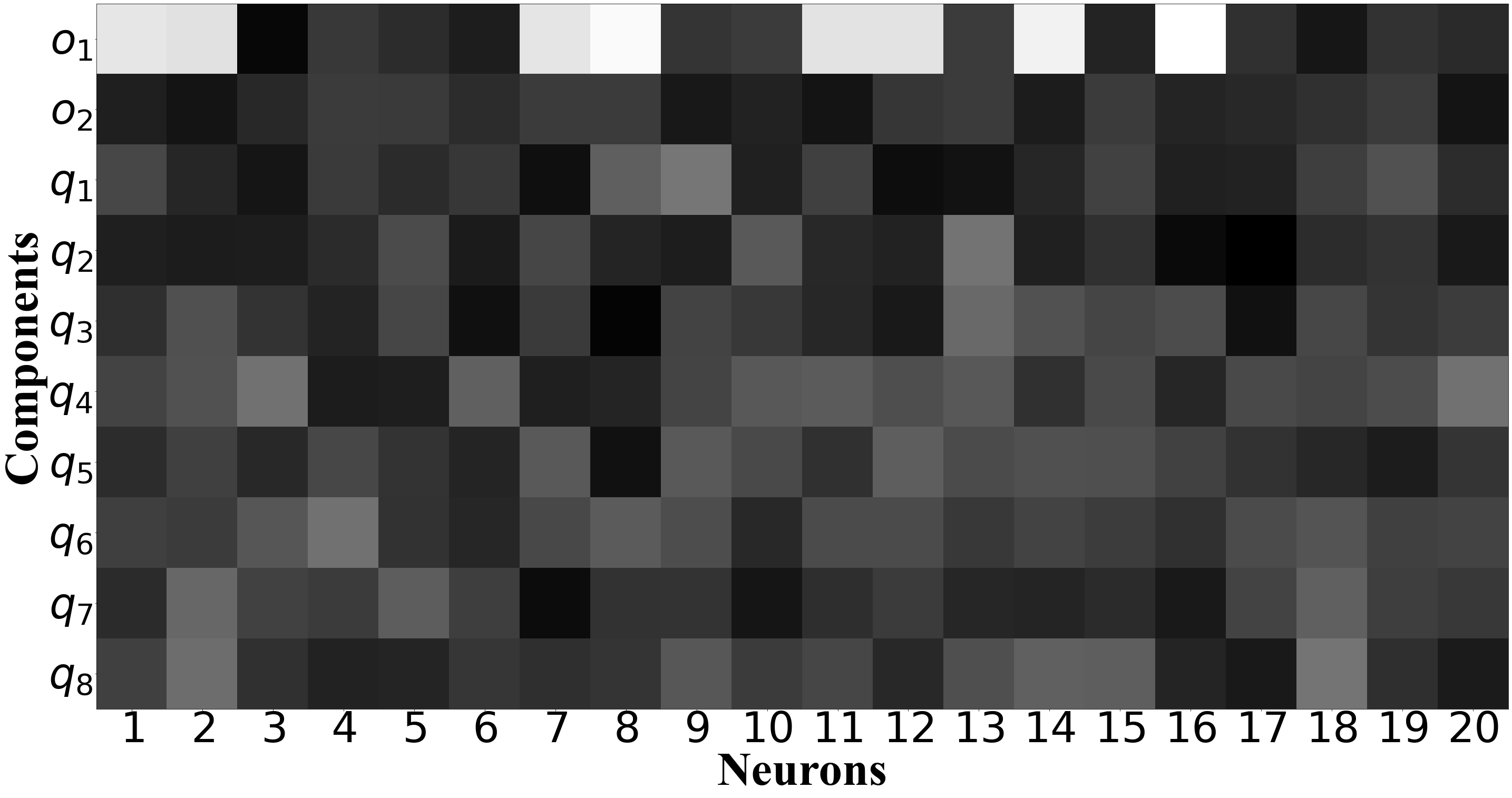}
        \caption{Expert 5}
        \label{neuron_component_large_1}
    \end{subfigure}
    \hfill
    \begin{subfigure}[(e)]{0.48\linewidth}
        \centering
        \includegraphics[width=\linewidth]{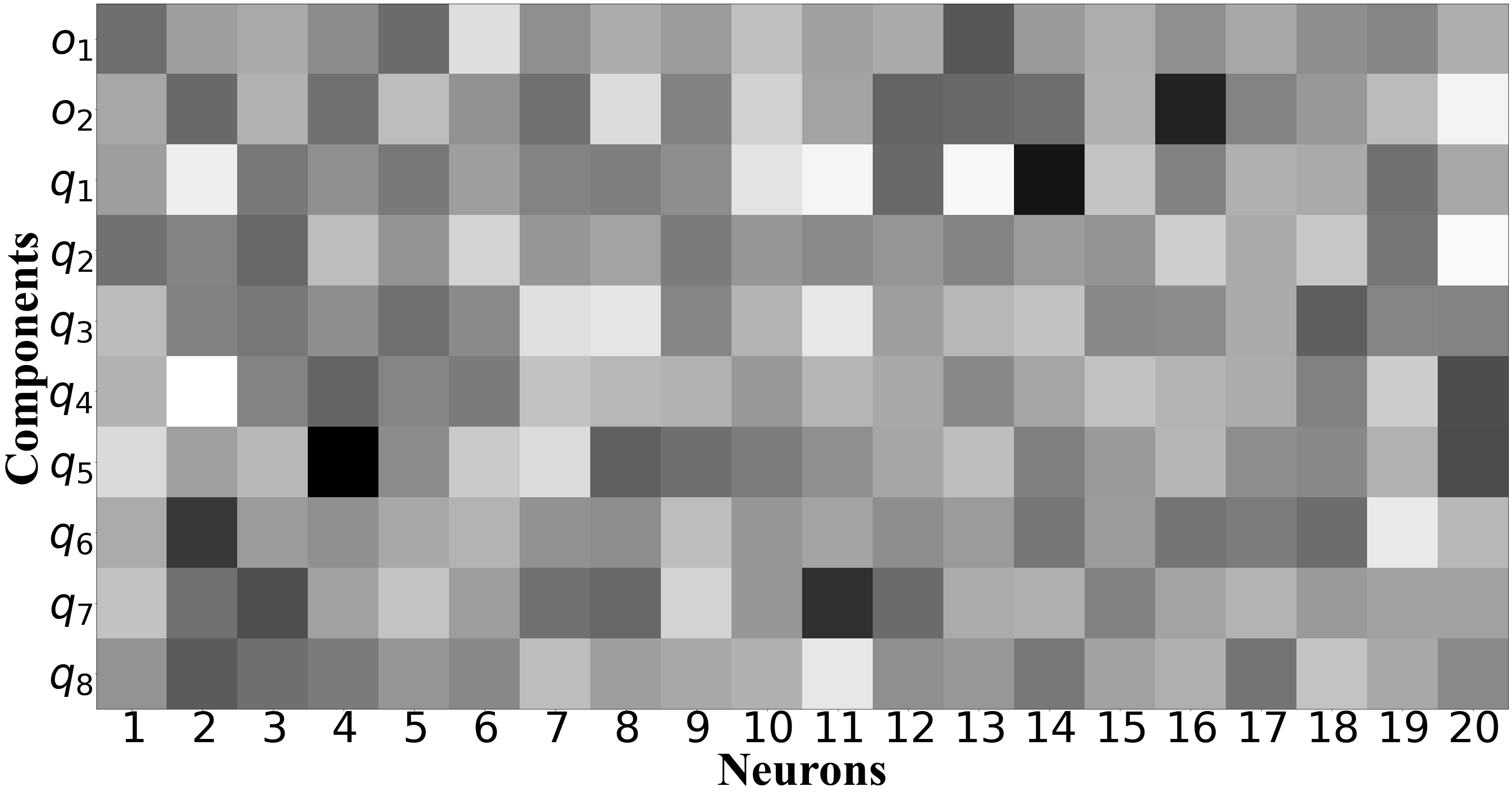}
        \caption{Expert 1}
        \label{neuron_component_small_1}
    \end{subfigure}
    \caption{
    (a) The norm of the post-training router weights, (b)(c) Projections of router weights to different directions, 
    (d)(e) Projections of neuron weights to different directions (larger pixel intensity represents larger component of the router weights).} 
    \vspace{-3mm}
\end{figure}

\begin{figure*}[ht]
    \begin{subfigure}[(a)]{0.32\linewidth}
        \centering
        \includegraphics[width=\linewidth]{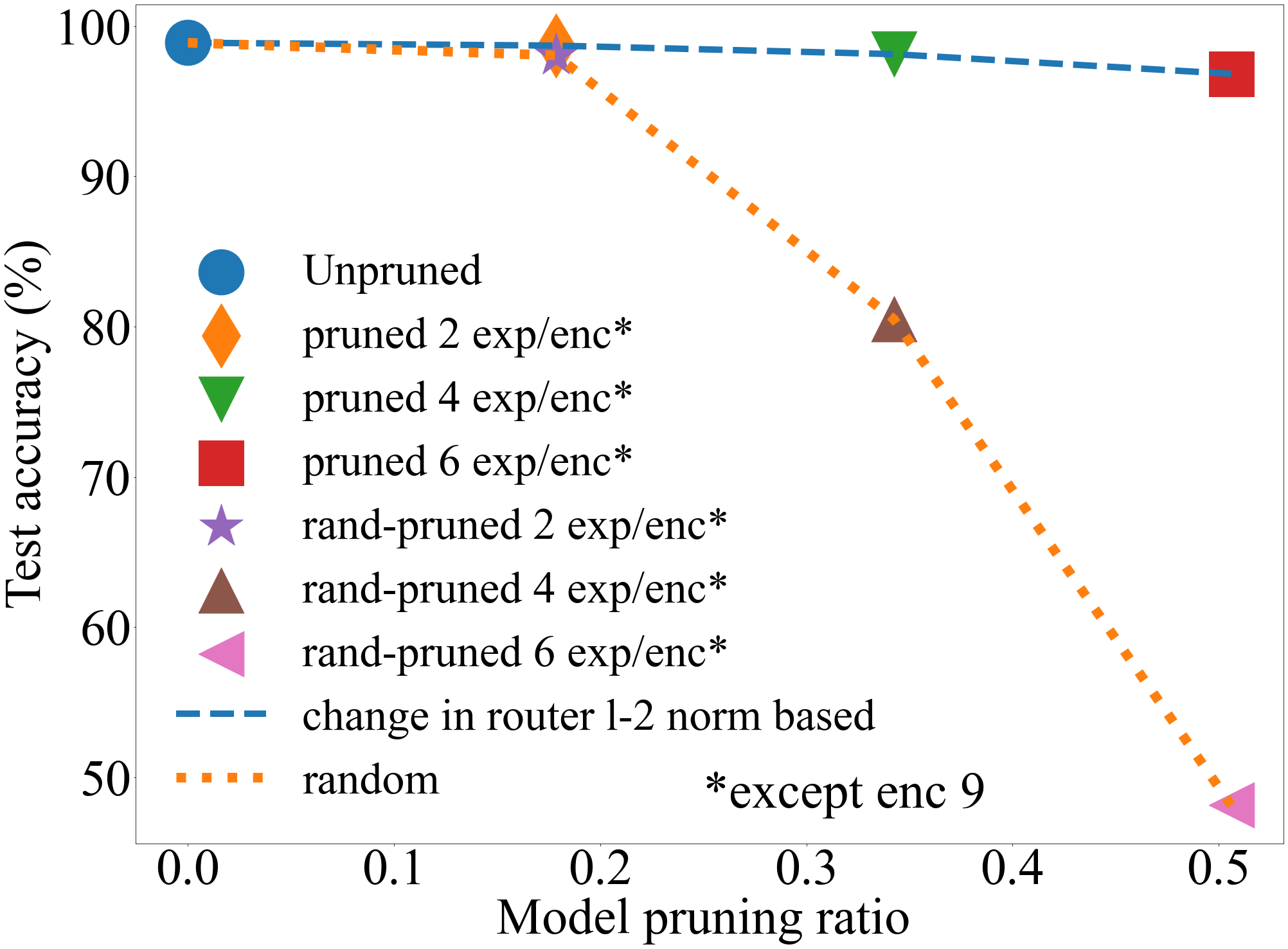}
        \caption{}
        \label{cifar_10_random}
    \end{subfigure}
    \hfill
    \begin{subfigure}[(b)]{0.32\linewidth}
        \centering
        \includegraphics[width=\linewidth]{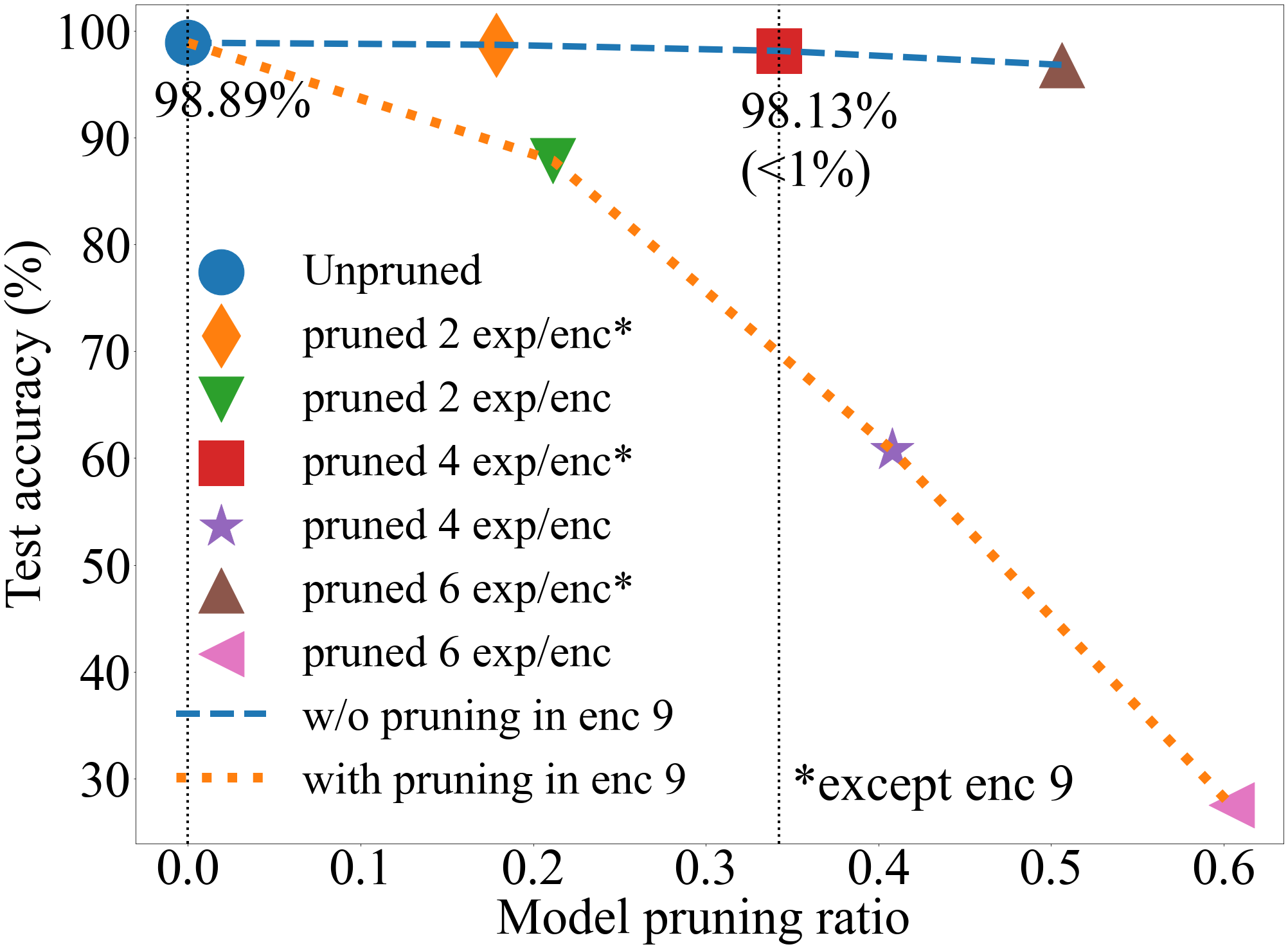}
        \caption{}
        \label{cifar_10_only_pruned}
    \end{subfigure}
    \hfill
    \begin{subfigure}[(c)]{0.32\linewidth}
        \centering
        \includegraphics[width=\linewidth]{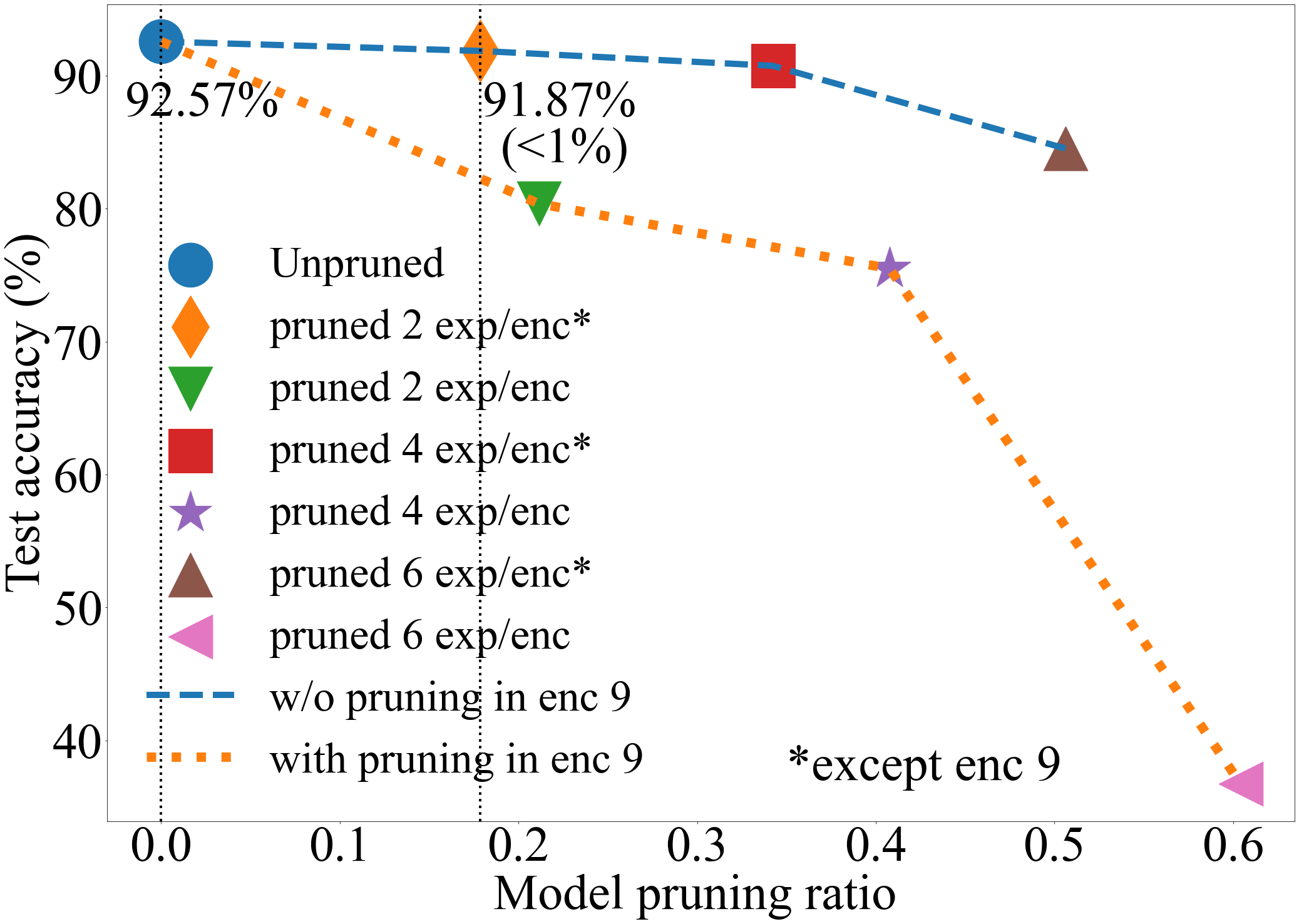}
        \caption{}
        \label{cifar_100_only_pruned}
    \end{subfigure}
    \hfill
    \begin{subfigure}[(d)]{0.32\linewidth}
        \centering
        \includegraphics[width=\linewidth]{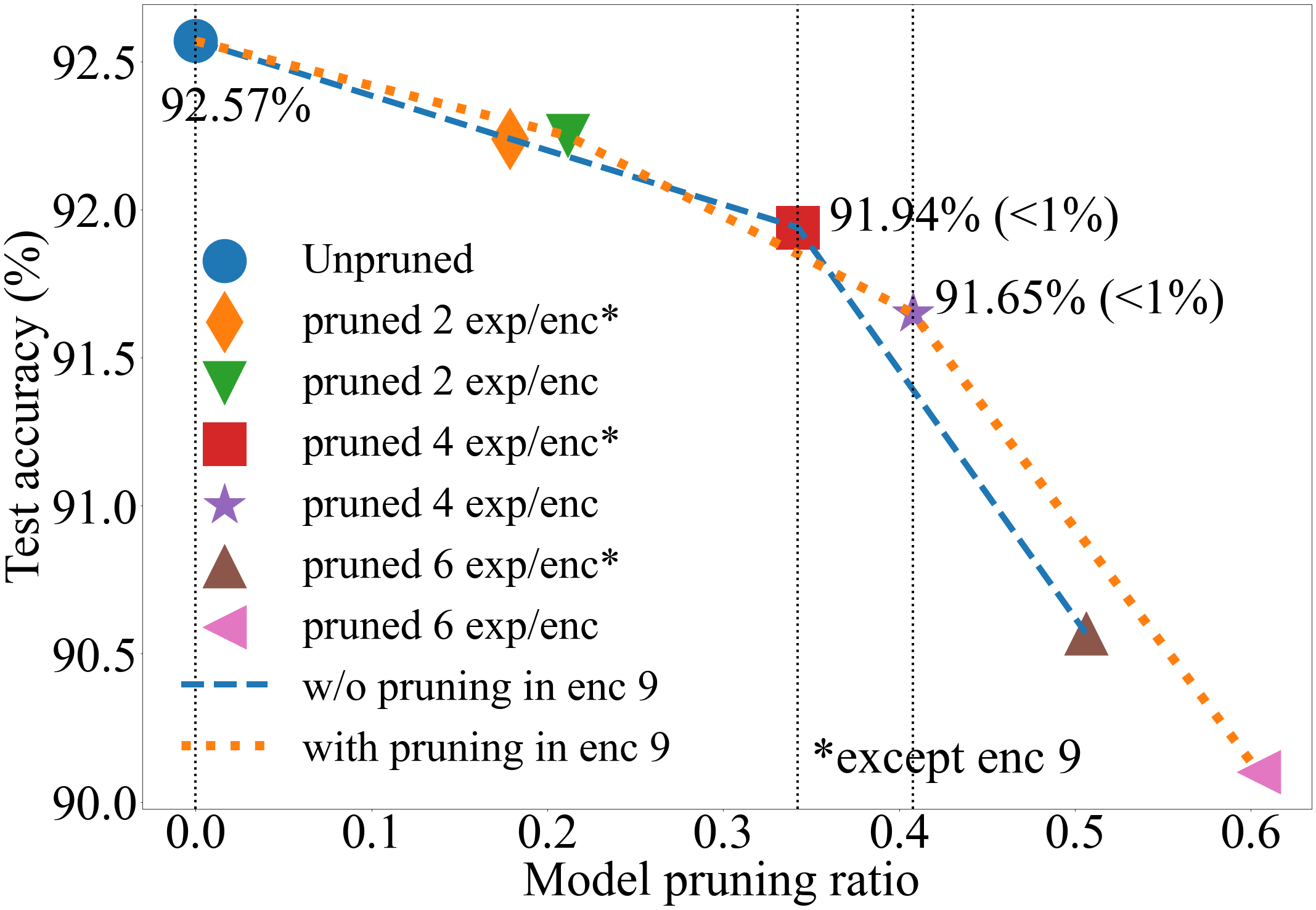}
        \caption{}
        \label{cifar_100_pruned_finetuned}
    \end{subfigure}
    \hfill
    \begin{subfigure}[(e)]{0.32\linewidth}
        \centering
        \includegraphics[width=\linewidth]{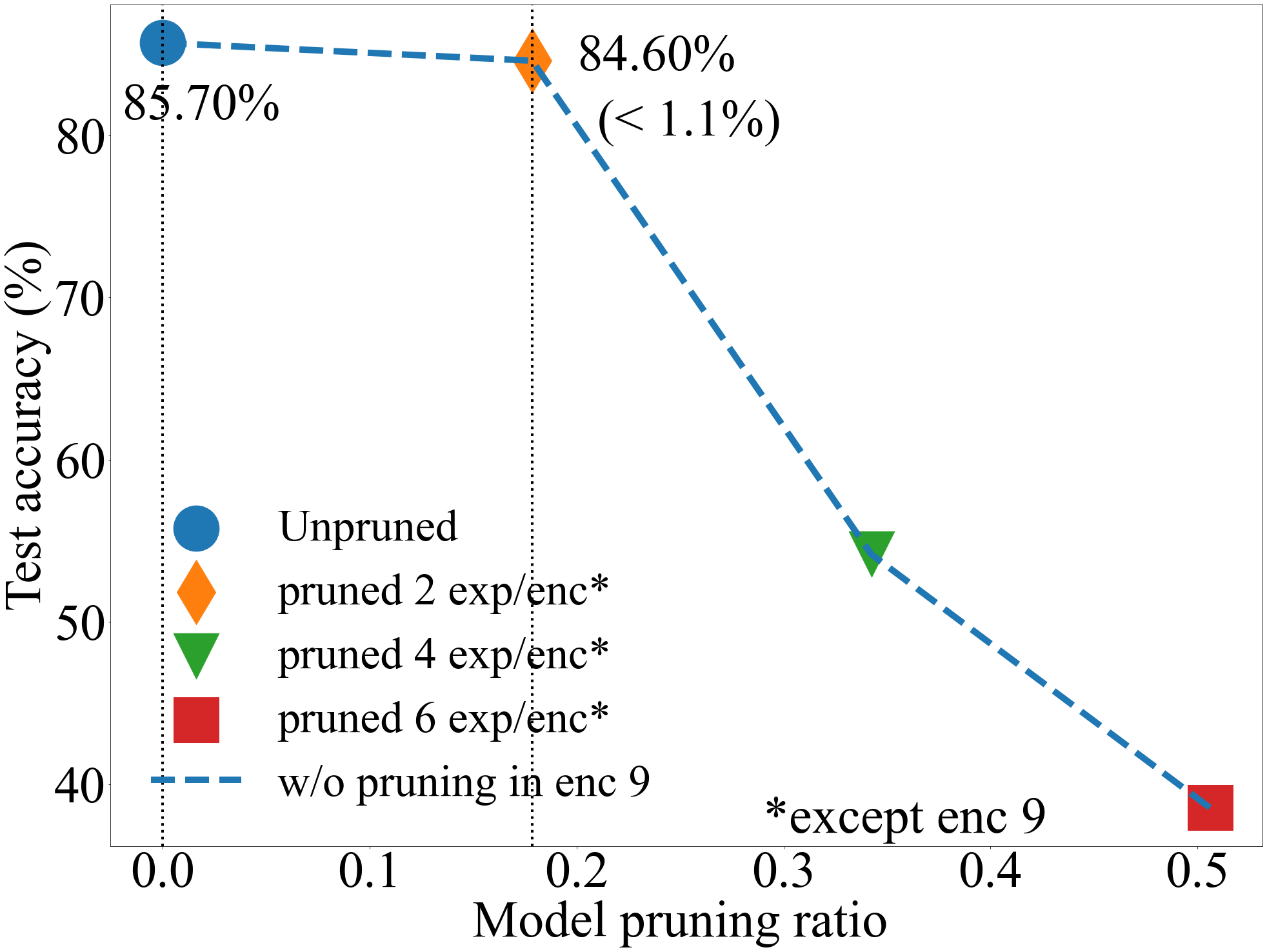}
        \caption{}
        \label{imagenet_only_pruned}
    \end{subfigure}
    \hfill
    \begin{subfigure}[(f)]{0.32\linewidth}
        \centering
        \includegraphics[width=\linewidth]{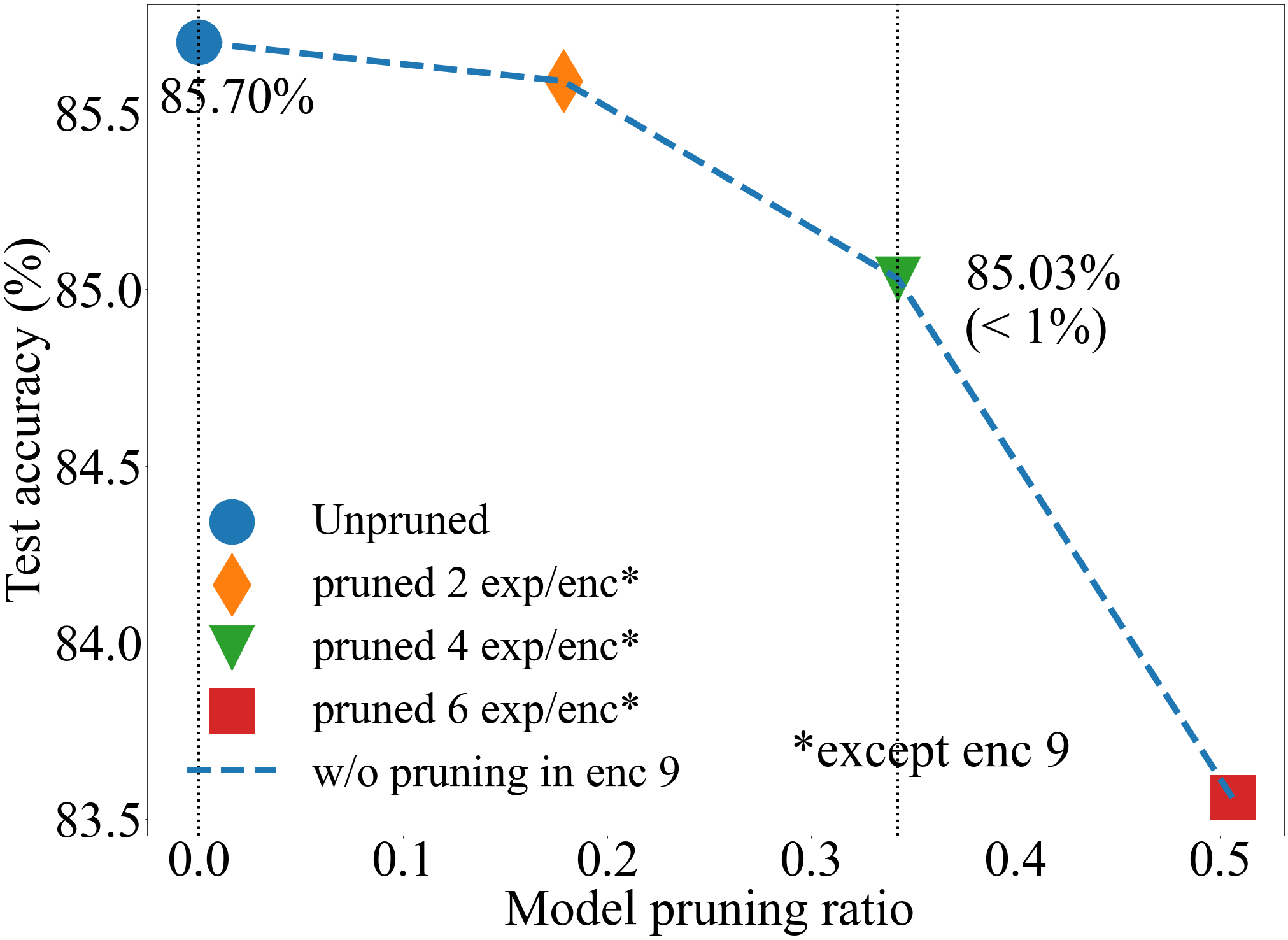}
        \caption{}
        \label{imagenet_pruned_fine_tuned}
    \end{subfigure}
    \caption{Generalization performance of the pruned \textbf{V-MoE} models: (a) Comparison with random pruning on CIFAR-10, (b) On CIFAR-10 \textbf{w/o} post-pruning fine-tuning, (c) On CIFAR-100 \text{w/o} post-pruning fine-tuning, (d) On CIFAR-100 \textbf{with} post-pruning fine-tuning, (e) On ImageNet \textbf{w/o} post-pruning fine-tuning, (f) On ImageNet \textbf{with} post-pruning fine-tuning}
    \vspace{-3mm}
   
\end{figure*}
\subsection{Experiments on Synthetic Data}\label{synthetic}

We verify our theoretical findings on synthetic data for the analyzed model. The data are generated by following the description given in section \ref{setup}. We selected $d=200$ and $n=100$ to generate the data. We select 20 experts in the MoE layer (i.e., $k=20$), each selecting five tokens (i.e., $l=5$). The first ten experts are positively connected to the output, while the last ten are negatively connected. We initialize the routers and the neurons of the experts randomly following zero-mean Gaussian distribution with very small variance ($1\times 10^{-8}$ and $1\times 10^{-4}$, respectively) and train using SGD up to zero training error.

We present the norms of the post-training routerweights for the 20 routers in Figure \ref{router_norm}. One can see that in both groups, a few routers have significantly larger weight norms  than others, e.g., experts 4 and 5 in the positive group, experts 11 and 20 in negative group.  Figure \ref{router_component_large_1} and \ref{router_component_small_1} visualize the projection of router weights in different directions.  The weight of Router 5 has a significant  norm, and  
has a large component along the $o_1$ direction. In contrast, the weight of Router 1 has a small  norm, and it does not have a significant component along $o_1$. Figure \ref{neuron_component_large_1}  and \ref{neuron_component_small_1} show the projection of the neuron weights of experts 5 and 1 in these directions. One can see that a constant fraction of neurons in expert 5 have  a large component along $o_1$, while neurons in expert 1 do not.  We have similar results for the negative group, see   section \ref{more_synthetic_exp} in the appendix.  These results are consistent with Lemmas \ref{lemma_1}  and \ref{lemma_2}. 


\subsection{Experiments on State-of-the-art Vision MoE Models}
We implement the proposed \textit{change in router's $l_2$ norm} based expert pruning method in state-of-the-art vision MoE models fine-tuned on several benchmark datasets such as CIFAR-10, CIFAR-100 \citep{krizhevsky2009learning} and ImageNet \citep{russakovsky2015imagenet}. More specifically, we implement the method to prune the 
released fine-tuned sparse vision MoE (V-MoE) by \citet{riquelme2021scaling}. Moreover, we present results on the released fine-tuned efficient ensembles of experts in MoE ($\text{E}^3$-MoE) by \citet{allingham2022sparse}.

\textbf{The V-MoE model}. The model contain 12 transformer encoders (encoders 0, 1, ..., and 11). Every odd encoder (i.e., encoders 1, 3, 5, 7, 9, and 11) is the MoE encoder. Each MoE encoder contains $8$ experts (i.e., $k=8$).

\textbf{The $\text{E}^3$-MoE model}. The model contain 8 transformer encoders (i.e., encoders 0, 1, ..., and 7), where the last two encoders contain the ensemble of MoEs (i.e., encoders 5 and 7). There are 2 MoE ensembles in the model and each MoE ensembles contains 4 experts per MoE encoder (i.e., $k=4$).

Both of these models implement the \textit{token-choice} routing with $l=2$ and $l=1$, respectively. More details can be found in section \ref{m} of appendix.

\textbf{Pruning and post-pruning fine-tuning details}. We use both the pre-trained and the fine-tuned versions of the released model to calculate the change in the router's norm for each expert. We apply our pruning method over each MoE layer separately in V-MoE model while applying the method separately over the experts in each ensemble group of each MoE encoder in the $\text{E}^3$-MoE model. We implement the pruned model in parallel into two NVIDIA RTX A5000 GPUs for inference and the post-pruning fine-tuning. For post-pruning fine-tuning, as the model size is large, we divide the batch size into half from the original case for CIFAR-10 and CIFAR-100. However, the number of steps is the same. For the same reason, we divide the original batch-size by 32 folds so as the learning rate for ImageNet and hence increase the post-training fine-tuning steps by 32 times of the original. Rest of the hyperparameters are same as in the original fine-tuning process described by the authors.

\subsubsection{Results}

\textbf{Generalization performance}. Here, we present the generalization performance of the pruned models in terms of the \textit{model pruning ratio}, which is defined as the ratio of the number of parameters pruned to the total number of parameters of the unpruned model. Primarily, we verify the effectiveness of our method compared to the random pruning baseline. As shown in Figure \ref{cifar_10_random} for CIFAR-10, our method can effectively select the task-specific experts as it retains the original performance while the random pruning of experts falls sharply after a small pruning ratio. Figure \ref{cifar_10_only_pruned} presents generalization results on CIFAR-10 without post-pruning fine-tuning. As we can see, the method can prune 50\% of the experts (35\% of the whole model) while maintaining accuracy within 1\% of the unpruned model when we do not prune from the penultimate MoE layer. However, as described in \citet{riquelme2021scaling}, due to the extra sensitivity of the penultimate MoE layer, pruning in the layer is not feasible without post-pruning fine-tuning. However, as shown in Figure \ref{cifar_10_pruned_finetuned}, we can overcome the limitation by post-pruning fine-tuning and prune up to the maximum possible percentage (75\% of the experts, 60\% of the whole model)\footnote{pruning more than 6 experts out of 8 experts is not allowed for $l=2$.}.

We found similar results for pruning V-MoE on CIFAR-100 (50\% of the experts; 41\% of the whole model is pruned) and ImageNet (42\% of the experts; 35\% of the whole model is pruned) and $\text{E}^3$-MoE on CIFAR-10 and CIFAR-100 (75\% of the experts; 45\% of the whole model is pruned for both of the datasets). The results for V-MoE on CIFAR-100 and ImageNet are presented in Figure \ref{cifar_100_only_pruned}, Figure \ref{cifar_100_pruned_finetuned} and Figure \ref{imagenet_only_pruned}, Figure \ref{imagenet_pruned_fine_tuned}, respectively. The results for $\text{E}^3$-MoE are presented in section \ref{eee_results} of Appendix. As we can see, when the size of the downstream task goes up (1000 class classification in ImageNet compared to 100 class classification in CIFAR-100 and 10 class classification in CIFAR-10) the maximum allowable pruning ratio goes down as more experts are now important for the larger sized task.

\begin{figure*}[ht]
    \begin{subfigure}[(a)]{0.32\linewidth}
        \centering
        \includegraphics[width=\linewidth]{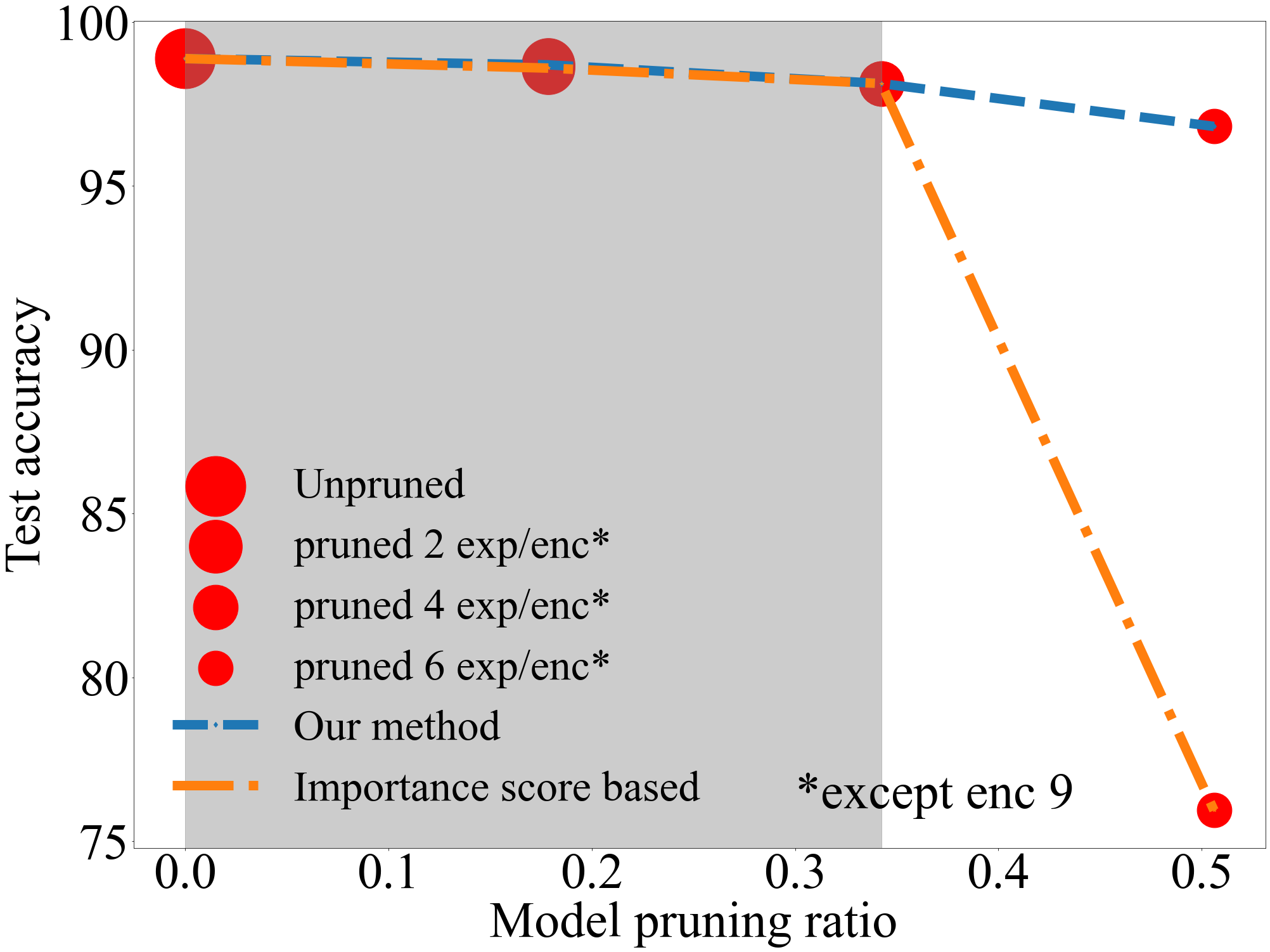}
        \caption{}
        \label{cifar_10_imp_score}
    \end{subfigure}
    \hfill
    \begin{subfigure}[(b)]{0.32\linewidth}
        \centering
        \includegraphics[width=\linewidth]{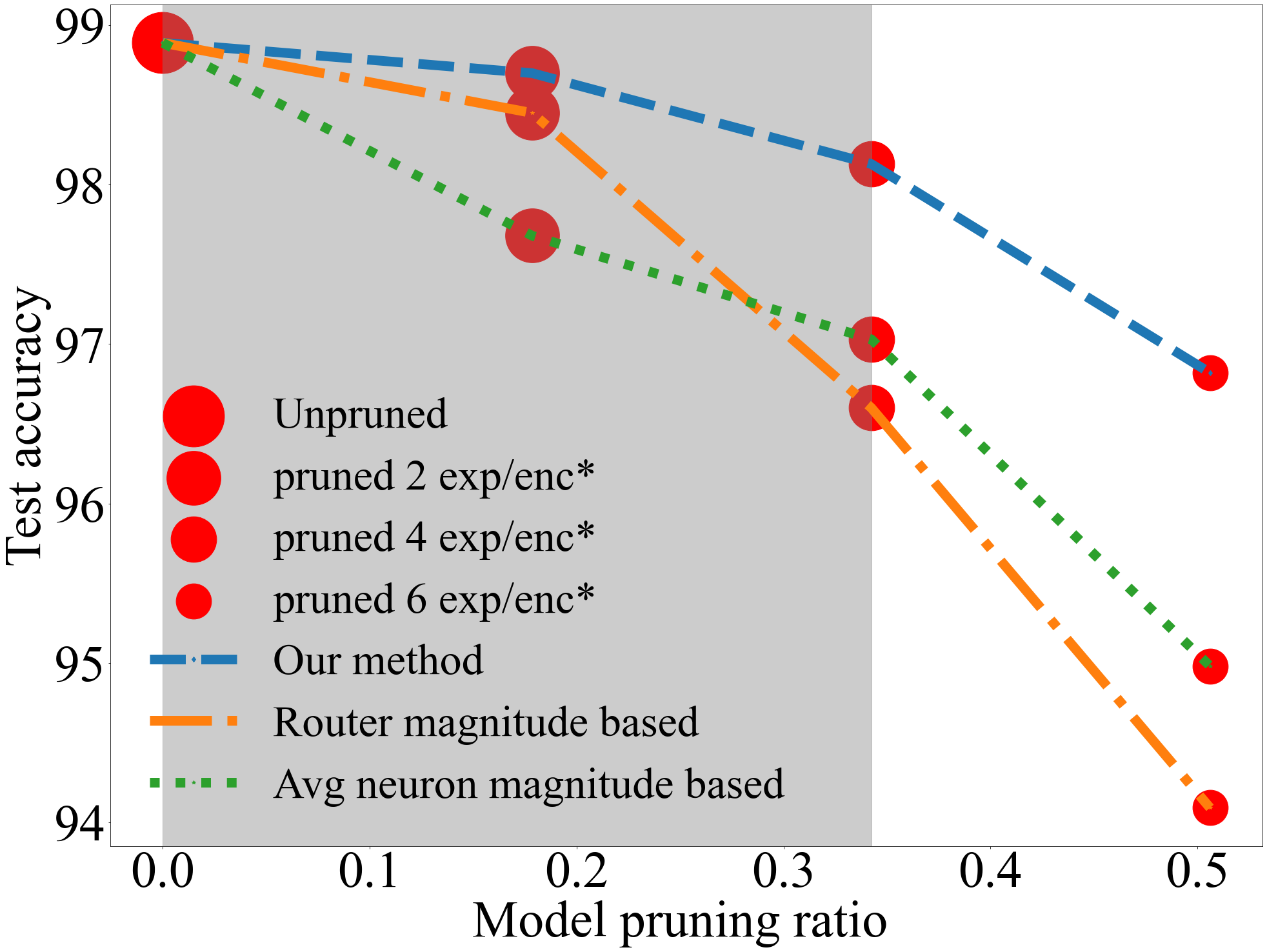}
        \caption{}
        \label{cifar_10_magn}
    \end{subfigure}
    \hfill
    \begin{subfigure}[(c)]{0.32\linewidth}
        \centering
        \includegraphics[width=\linewidth]{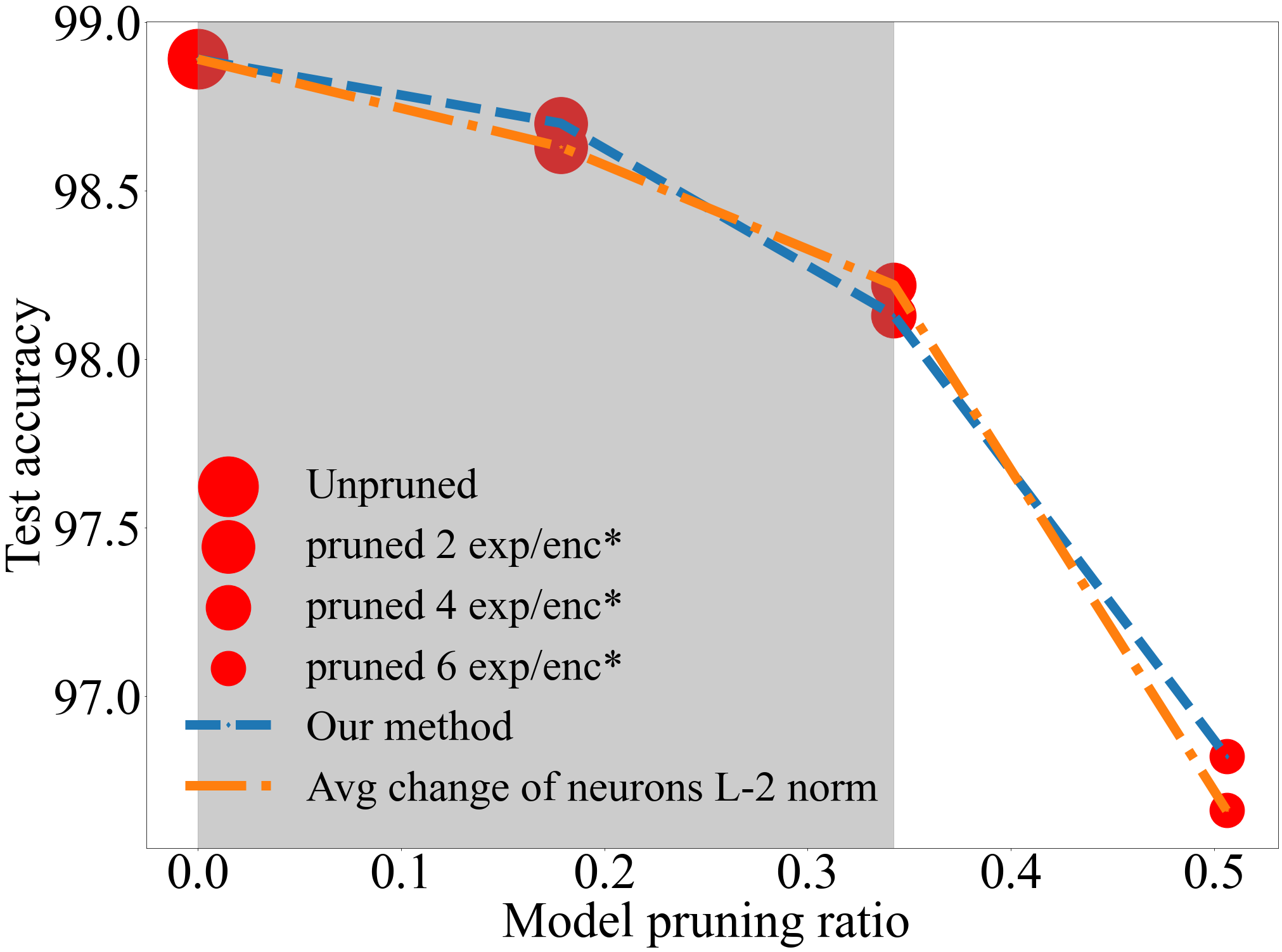}
        \caption{}
        \label{cifar_10_change_magn}
    \end{subfigure}
    \hfill
    \begin{subfigure}[(d)]{0.32\linewidth}
        \centering
        \includegraphics[width=\linewidth]{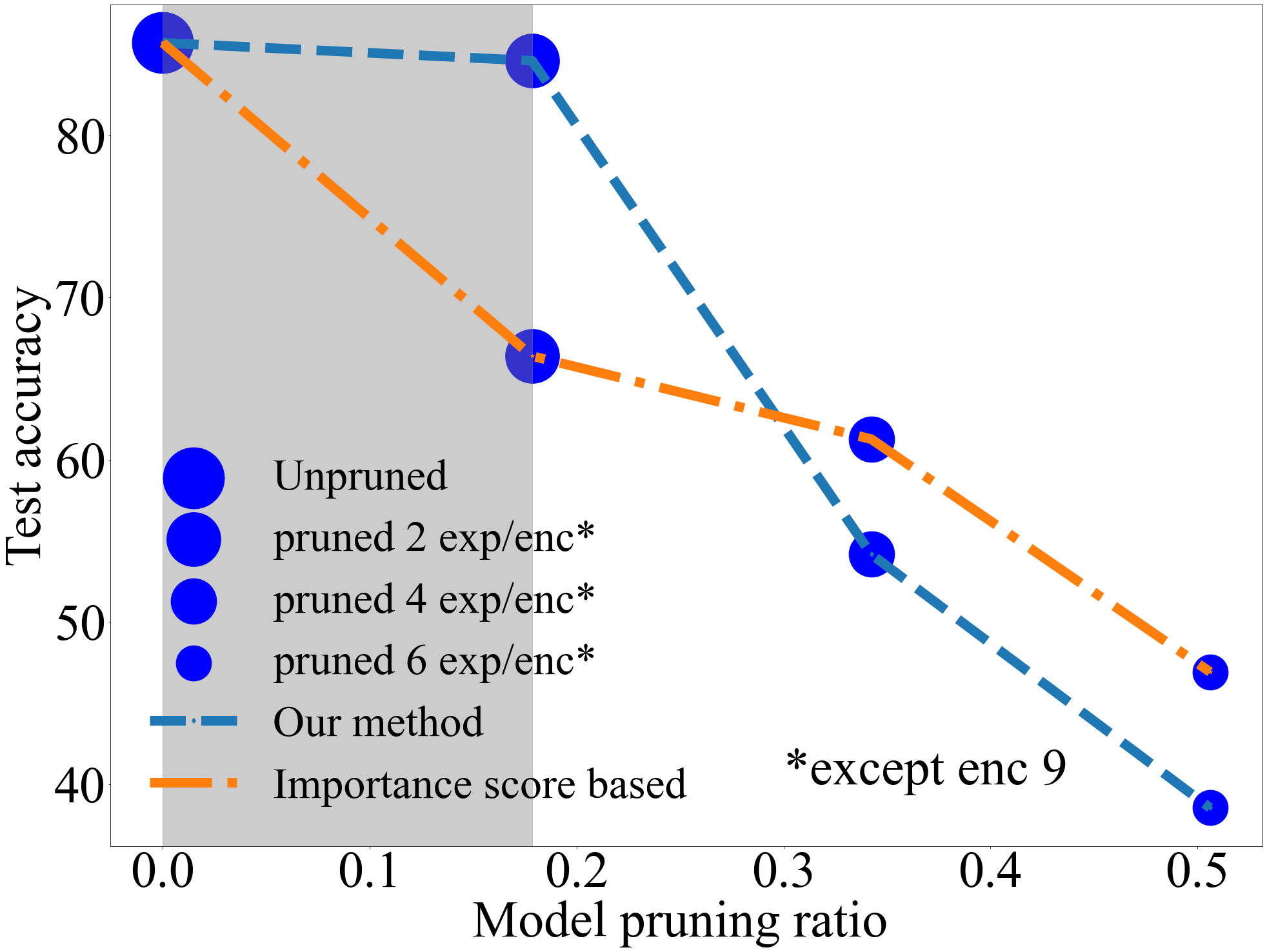}
        \caption{}
        \label{imagenet_imp_score}
    \end{subfigure}
    \hfill
    \begin{subfigure}[(e)]{0.32\linewidth}
        \centering
        \includegraphics[width=\linewidth]{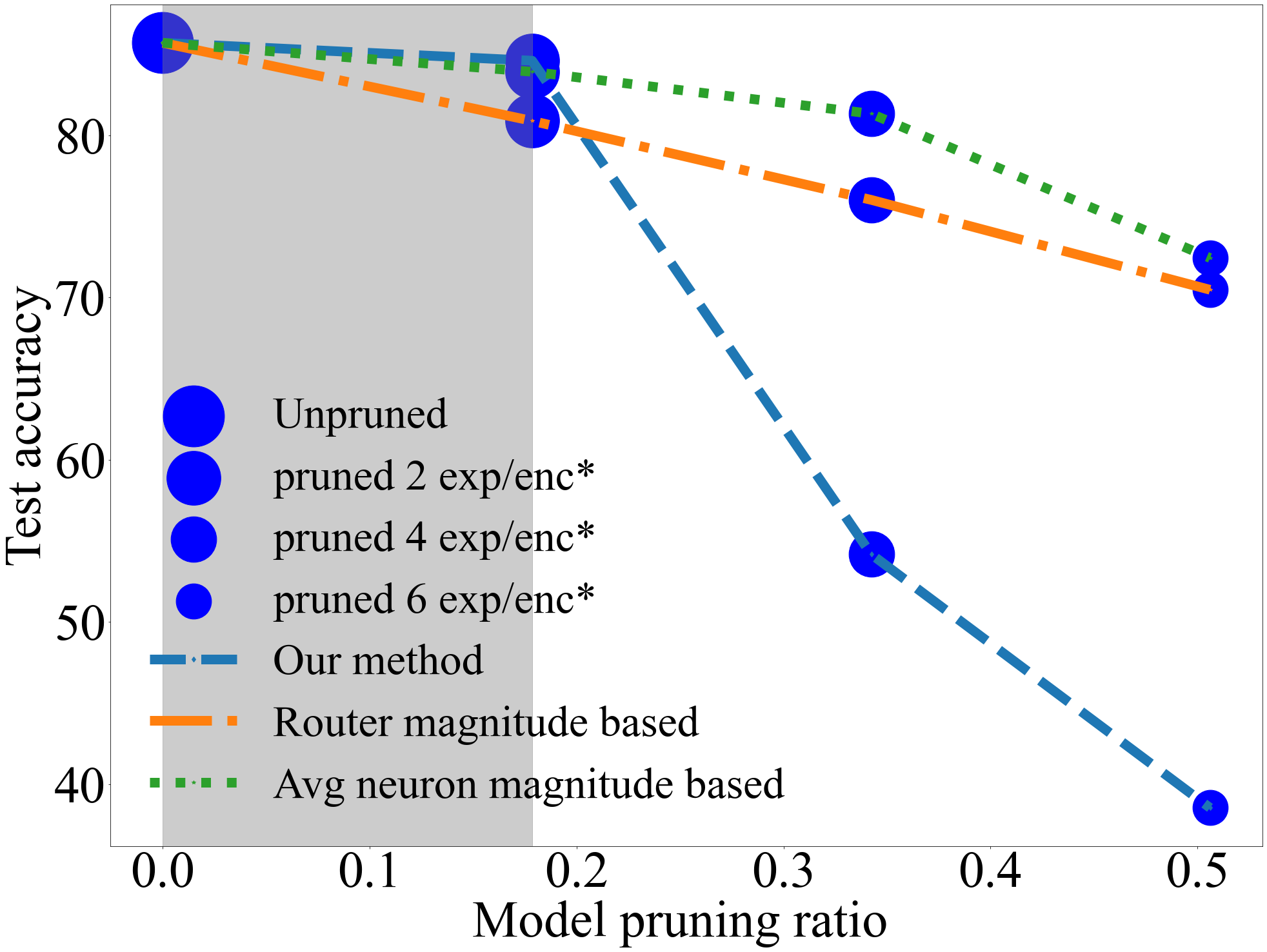}
        \caption{}
        \label{imagenet_magn}
    \end{subfigure}
    \hfill
    \begin{subfigure}[(f)]{0.32\linewidth}
        \centering
        \includegraphics[width=\linewidth]{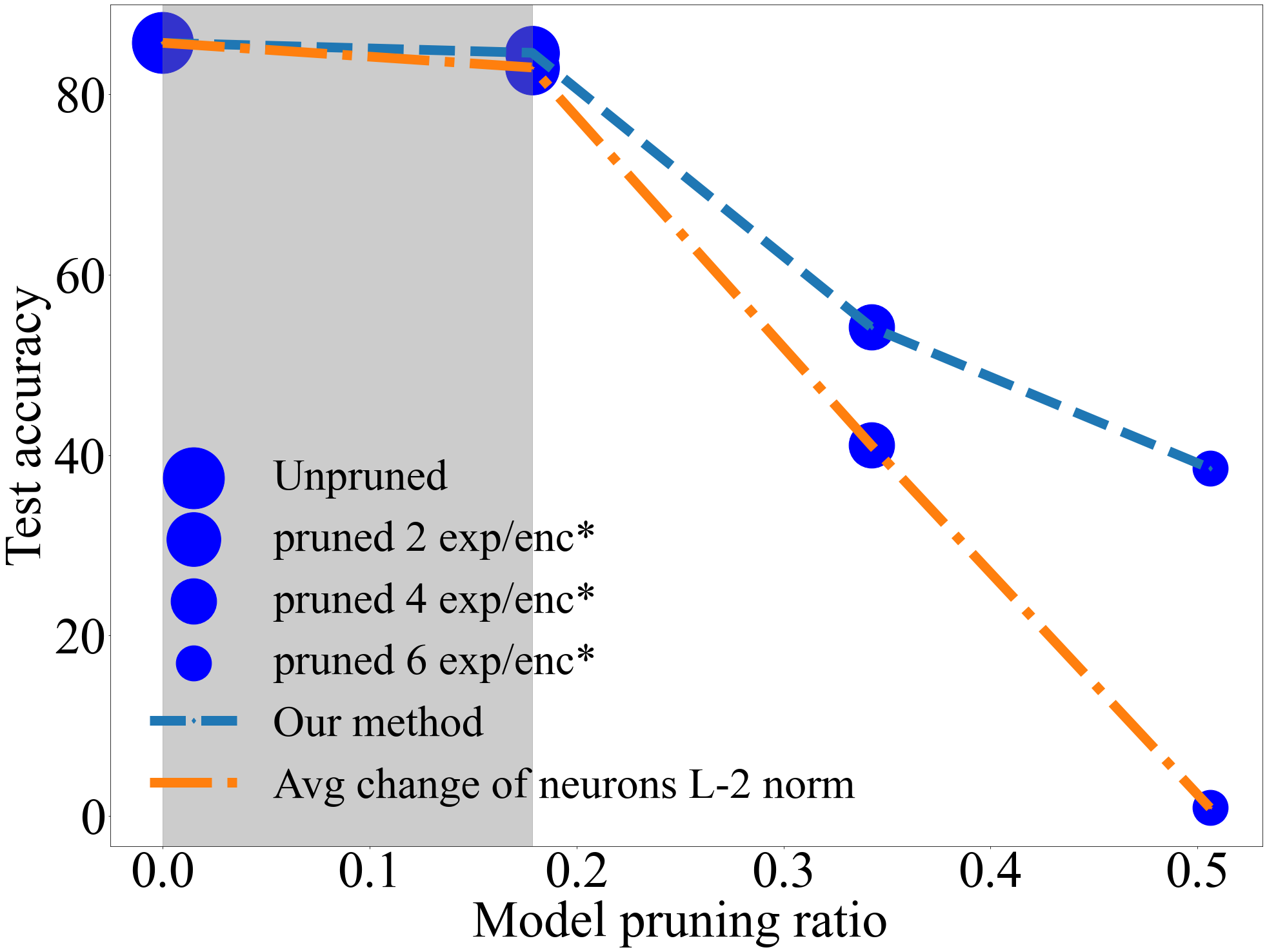}
        \caption{}
        \label{imagenet_change_magn}
    \end{subfigure}
    \caption{Comparison between different expert pruning methods: (a) vs. importance score on CIFAR-10, (b) vs.  absolute magnitude on CIFAR-10, (c)  vs. average change-in-neurons-magnitude on CIFAR-10, (d) vs. importance score on ImageNet, (e)  vs. absolute magnitude on ImageNet, (f) vs. average change-in-neurons-magnitude on ImageNet}
    \vspace{-3mm}
   
\end{figure*}
\textbf{Comparison with other methods}. We compare our proposed method of pruning experts based on the \textit{change in router's $l_2$ norm} with some other potential methods for pruning experts in MoE. Before presenting the results, we briefly describe the methods we consider for comparison as:\\
(I) \textbf{Importance score}. As mentioned in section \ref{intro}, there are only two recent works \citep{chen2022task,koishekenov-etal-2023-memory} that explored the expert pruning of MoE. Both of them used the \textit{fraction of the total number of tokens received} as the metric to order the experts according to their importance. Specifically, \citet{koishekenov-etal-2023-memory} defines the \textit{importance score} of any expert $s\in[k]$ over the validation set as follows:
    \begin{equation*}
        \text{Importance score}(s):=\text{top1}(s)\times \text{conf}(s)
    \end{equation*}
Here, $\text{top1}(s)$ is the fraction of the total tokens received as top-1 tokens by the expert $s$, and $\text{conf}(s)$ is the confidence value of the expert $s$, which is essentially the average gating value of the top-1 tokens received by the expert.\\
We also consider two magnitude-based pruning metrics:\\
(II) \textbf{Router magnitude}. The absolute $l_2$ norm of the router.\\
(III) \textbf{Average neuron magnitude}. The average value of the $l_2$ norm of the hidden neurons of the expert.\\
Similar to the \textit{change of router's $l_2$ norm} based method proposed in this paper, we also consider \\
(IV)\textbf{Average change of neuron magnitude}. The average value of the change in $l_2$ norm of the neurons of the experts.

Figure \ref{cifar_10_imp_score}, \ref{cifar_10_magn}, \ref{cifar_10_change_magn} and, Figure \ref{imagenet_imp_score}, \ref{imagenet_magn}, \ref{imagenet_change_magn} present the comparative results for pruning V-MoE on CIFAR-10 and ImageNet, respectively while the corresponding results on CIFAR-100 are presented in section \ref{baseline_cifar100} of Appendix. As we can see, the proposed method consistently provides an upper bound to all the other methods described above, at least within the range of acceptable model pruning ratio (model performance is within 1\% of the unpruned model, the gray area).

\begin{figure}[ht]
    \centering
    \includegraphics[width=0.65\linewidth]{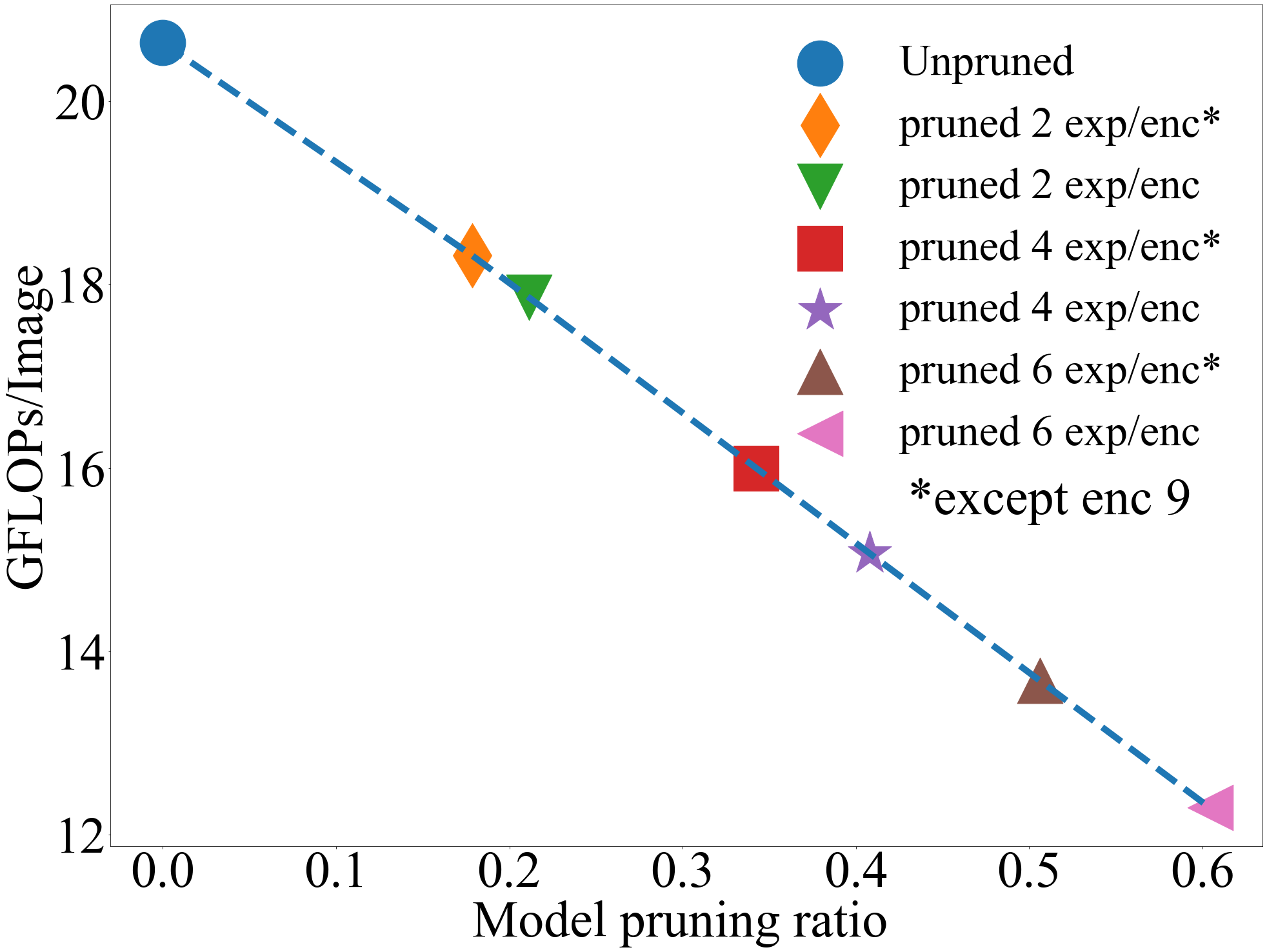}
    \vspace{-2mm}
    \caption{FLOPs per image in V-MoE on CIFAR-10}
    \label{cifar_10_flop}
    \vspace{-3mm}
\end{figure}

\textbf{Inference efficiency}. We measure the inference efficiency of the pruned model in terms of FLOPs per image and the inference time. Figure \ref{cifar_10_flop} shows the linear reduction of inference FLOPs per image with the increment of pruning ratio in pruning V-MoE on CIFAR-10. By combining the results of this figure with the results in Figure \ref{cifar_10_pruned_finetuned}, we can infer that the proposed pruning method can reduce 40\% of the inference FLOPs. We have similar results showing the linear reduction of inference time (see section \ref{inference_more_results} in Appendix for details). The linear reduction of FLOPs and time indicates the prompt implementation of the method without any loss of estimated efficiency (e.g., due to the overhead of the dedicated software kernels or hardware) as our results are generated from the computation on typical modern GPUs without any requirement of special software (e.g., dedicated CUDA kernels). We have similar results for pruning $\text{E}^3$-MoE on CIFAR-100 (linear reduction; 15\% reduction of FLOPs and time for the 1\% tolerance in accuracy) and pruning V-MoE on ImageNet (linear reduction; 14\% reduction of FLOPs and time for the 1\% tolerance in accuracy), see section \ref{inference_more_results} in appendix.

\section{Conclusion}

MoE allows faster pre-training of large deep models. However, the inference compute is at the same order as in its dense counterpart. In this paper, we theoretically and empirically investigated  pruning irrelevant and redundant experts from the fine-tuned MoE model. We show that pruning experts based on the change in the router's norm can provably maintain the generalization accuracy. The results are verified in the state-of-the-art vision MoE models. Future works include co-implementing the method with other network compression techniques such as neurons pruning, fine-grained structured sparsity, low rank factorization and quantization.

\clearpage
\newpage 
\section*{Acknowledgements}
This work was supported by IBM through the IBM-Rensselaer Future of Computing Research Collaboration. We thank all anonymous reviewers for their valuable comments and suggestions.
\section*{Impact Statement}
This paper presents work whose goal is to advance the field of Machine Learning. There are many potential societal consequences of our work, none which we feel must be specifically highlighted here.
\bibliography{Reference/reference}
\bibliographystyle{icml2024}
\newpage
\appendix
\onecolumn
\section{More Details on V-MoE and $\text{E}^3$-MoE}\label{m}

\textbf{The V-MoE models}. The released set of V-MoE models by \citet{riquelme2021scaling} includes the pre-trained version of the models on ImageNet-21k \citep{kolesnikov2020big} and the two fine-tuned versions on CIFAR-10 and ImageNet-1k (i.e., ILSVRC2012), respectively. The models contain 12 transformer encoders (encoders 0, 1, ..., and 11). Every odd encoder (i.e., encoders 1, 3, 5, 7, 9, and 11) is the MoE encoder. Each MoE encoder contains $8$ experts (i.e., $k=8$). The input images are divided into $16\times 16$ patches and then transformed into the encoder's hidden dimension of $768$ (i.e., $d=768$). The MoE hidden dimension is $3072$ (i.e., $m=3072$). The CIFAR-10 fine-tuned version contains 65 tokens for an input image (i.e., $n=65$). The ImageNet fine-tuned version contains 577 tokens for an input image (i.e., $n=577$). The MoE routers in the model implement token-choice routing where each token is routed to 2 experts (i.e., $l=2$). The total number of parameters in the model is roughly 979 million. SGD with momentum and cosine learning rate decay is implemented to obtain the fine-tuned models. The fine-tuning steps are 1000 and 10,000 for CIFAR-10 and ImageNet, respectively. More details can be found in \citet{riquelme2021scaling}.

\textbf{The $\text{E}^3$-MoE models}. \citet{allingham2022sparse} proposed the efficient ensembles of experts in MoE (i.e., $\text{E}^3$-MoE) to improve the generalization performance of V-MoE. The architecture implements the ensembles of experts where, in each MoE encoder the experts are divided into multiple ensembles. The released set of $\text{E}^3$-MoE models includes the pre-trained version on ImageNet and the fine-tuned version on CIFAR-100. The models contain 8 transformer encoders (i.e., encoders 0, 1, ..., and 7), where the last two encoders contain the ensemble of MoEs (i.e., encoders 5 and 7). There are 2 MoE ensembles in the model and each MoE ensembles contains 4 experts per MoE encoder (i.e., $k=4$). The input images are divided into $32\times 32$ patches and then transformed into encoder's hidden dimension of 512 (i.e., $d=512$) where the MoE hidden dimension is 2048 (i.e., $m=2048$). The number of tokens per image in each encoder is $65$ (i.e., $n=65$). The MoE routers implement the token-choice routing, selecting one expert per token (i.e., $l=1$). Total number of parameters in the model is roughly 167 million. Again, SGD with momentum and cosine learning rate decay is implemented during fine-tuning for 2000 steps.

\section{More Experimental Results}
\subsection{On Synthetic Data}\label{more_synthetic_exp}
As described in section \ref{synthetic}, here we present results for experts in the negative group. Figure \ref{router_component_large_2} and \ref{router_component_small_2} present the components for large router (router 20) and small router (router 12), respectively. Figure \ref{neuron_component_large_2} and \ref{neuron_component_small_2} present the components for neurons of the corresponding experts.

\begin{figure}[ht]
    \begin{subfigure}[(b)]{0.24\linewidth}
        \centering
        \includegraphics[width=\linewidth]{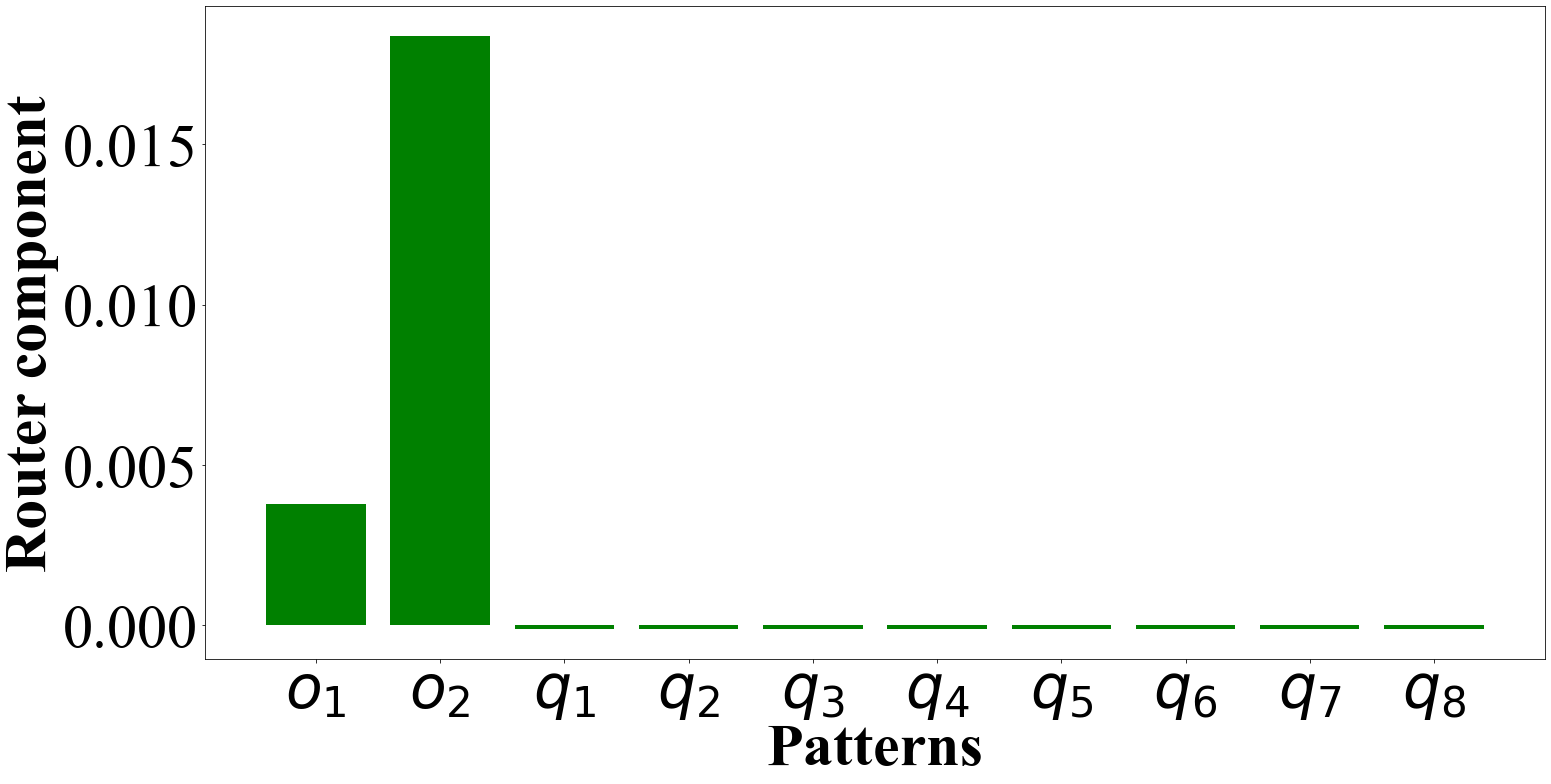}
        \caption{}
        \label{router_component_large_2}
    \end{subfigure}
    \hfill
    \begin{subfigure}[(c)]{0.24\linewidth}
        \centering
        \includegraphics[width=\linewidth]{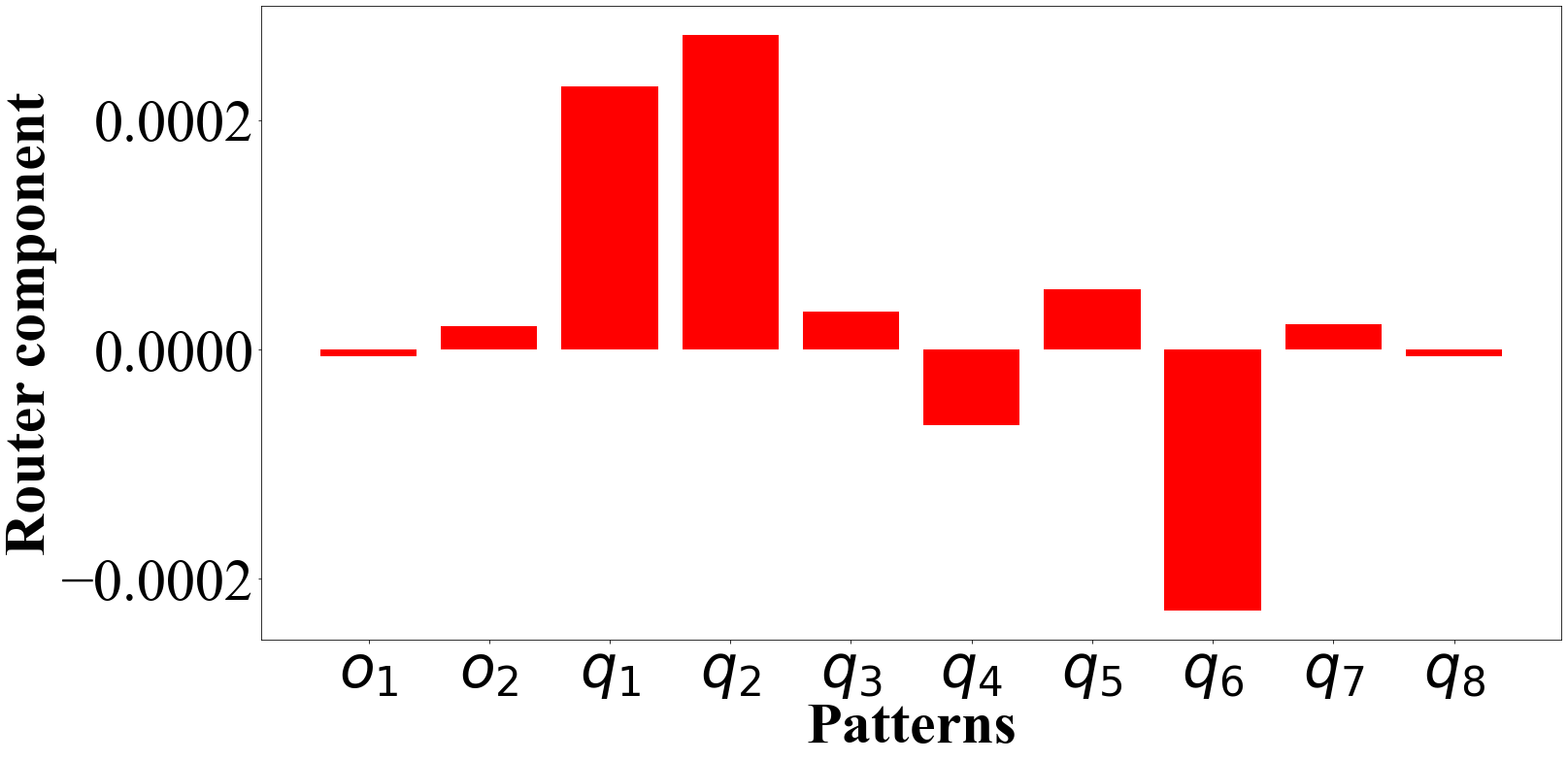}
        \caption{}
        \label{router_component_small_2}
    \end{subfigure}
    \hfill
    \begin{subfigure}[(d)]{0.24\linewidth}
        \centering
        \includegraphics[width=\linewidth]{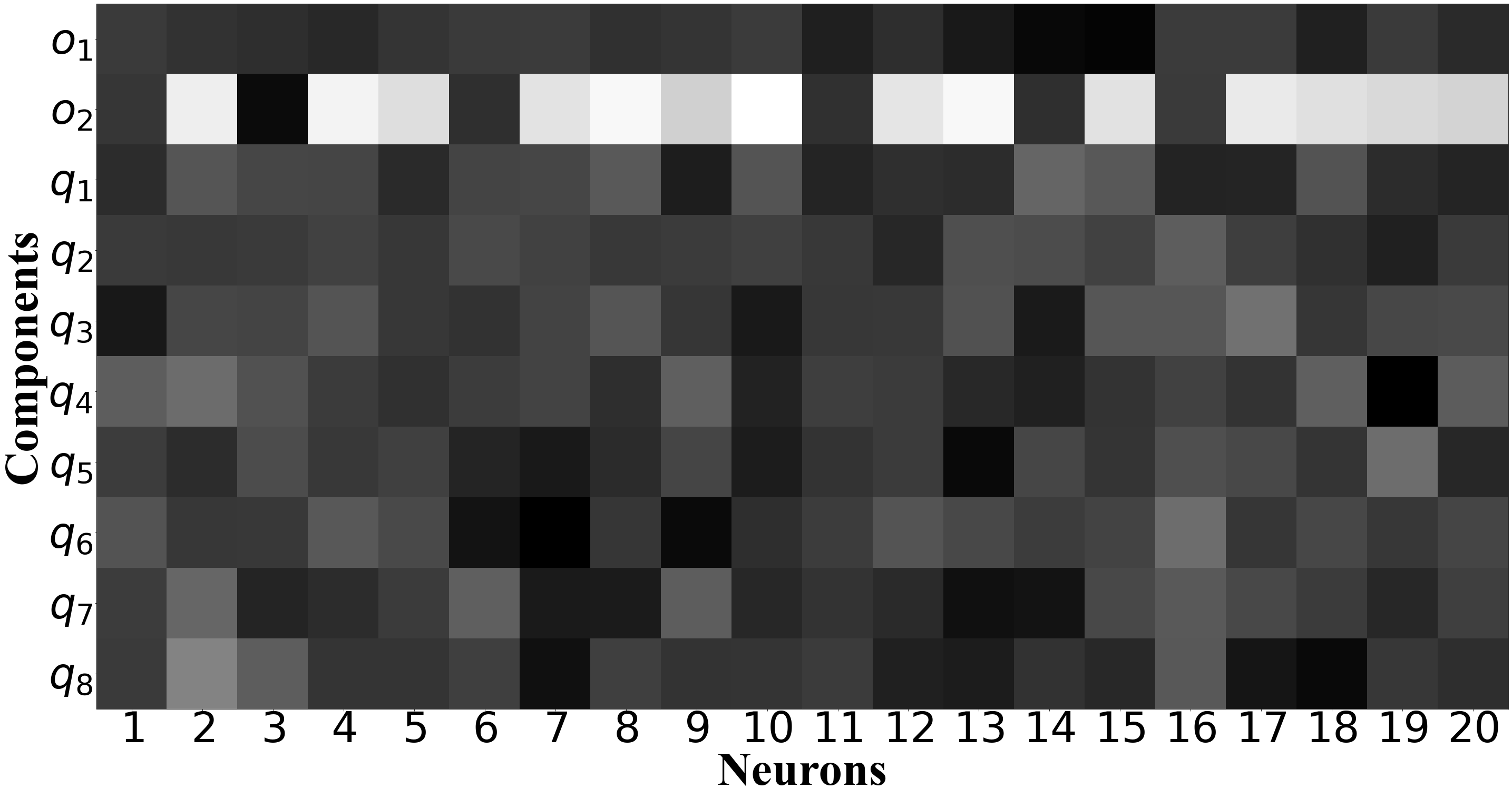}
        \caption{}
        \label{neuron_component_large_2}
    \end{subfigure}
    \hfill
    \begin{subfigure}[(e)]{0.24\linewidth}
        \centering
        \includegraphics[width=\linewidth]{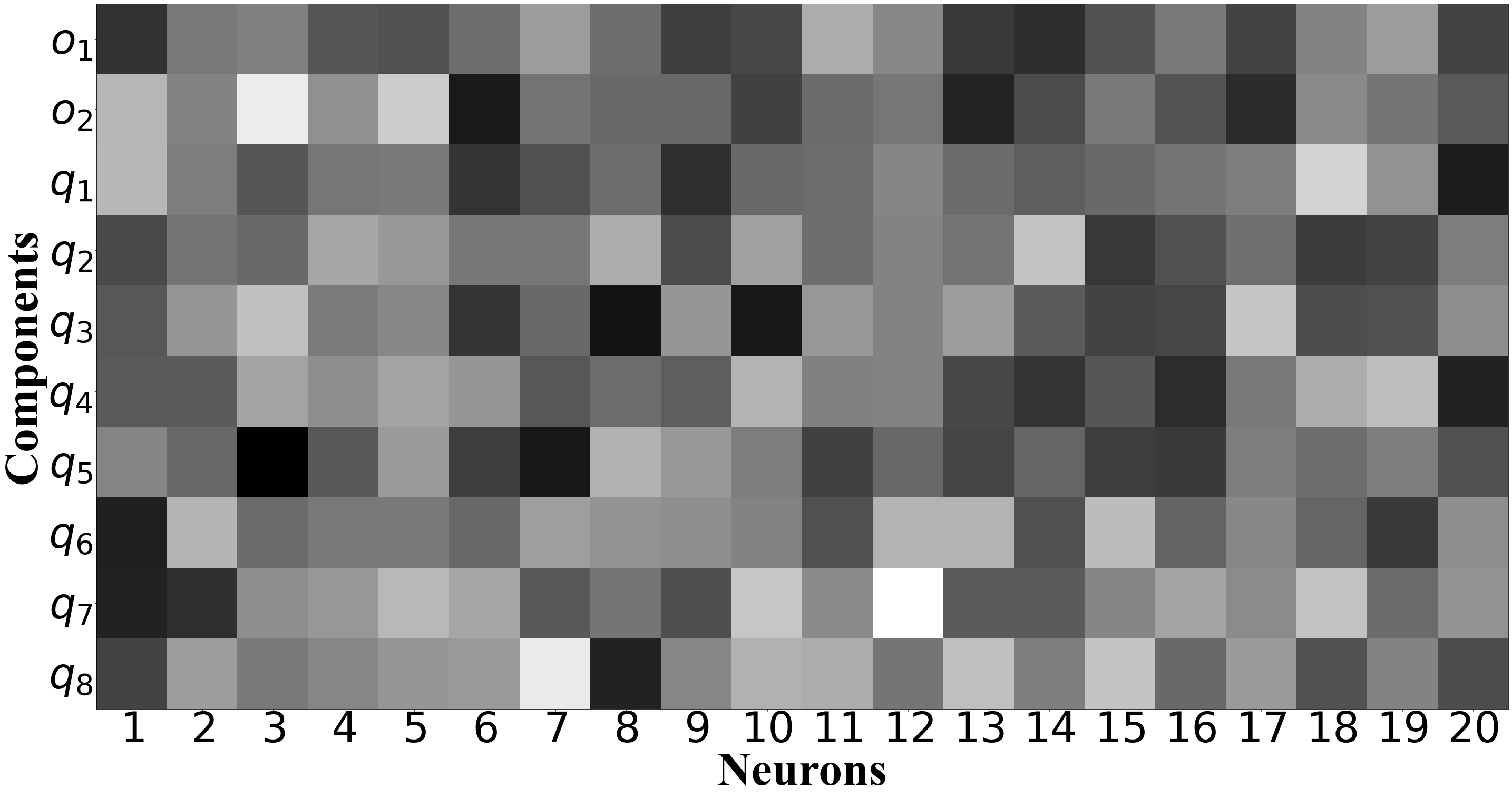}
        \caption{}
        \label{neuron_component_small_2}
    \end{subfigure}
    \caption{Verification of the theoretical findings on synthetic data: (a) Router's components for large router (router 20), (b) Router's components for small router (router 12), (c) Neurons components for expert with large router (expert 20), (d) Neurons components for expert with small router (expert 12) (larger pixel intensity represents larger component of the router weights)}
    \vspace{-1mm}
\end{figure}

\begin{figure}[ht]
    \begin{subfigure}[(a)]{0.48\linewidth}
        \centering
        \includegraphics[width=0.80\linewidth]{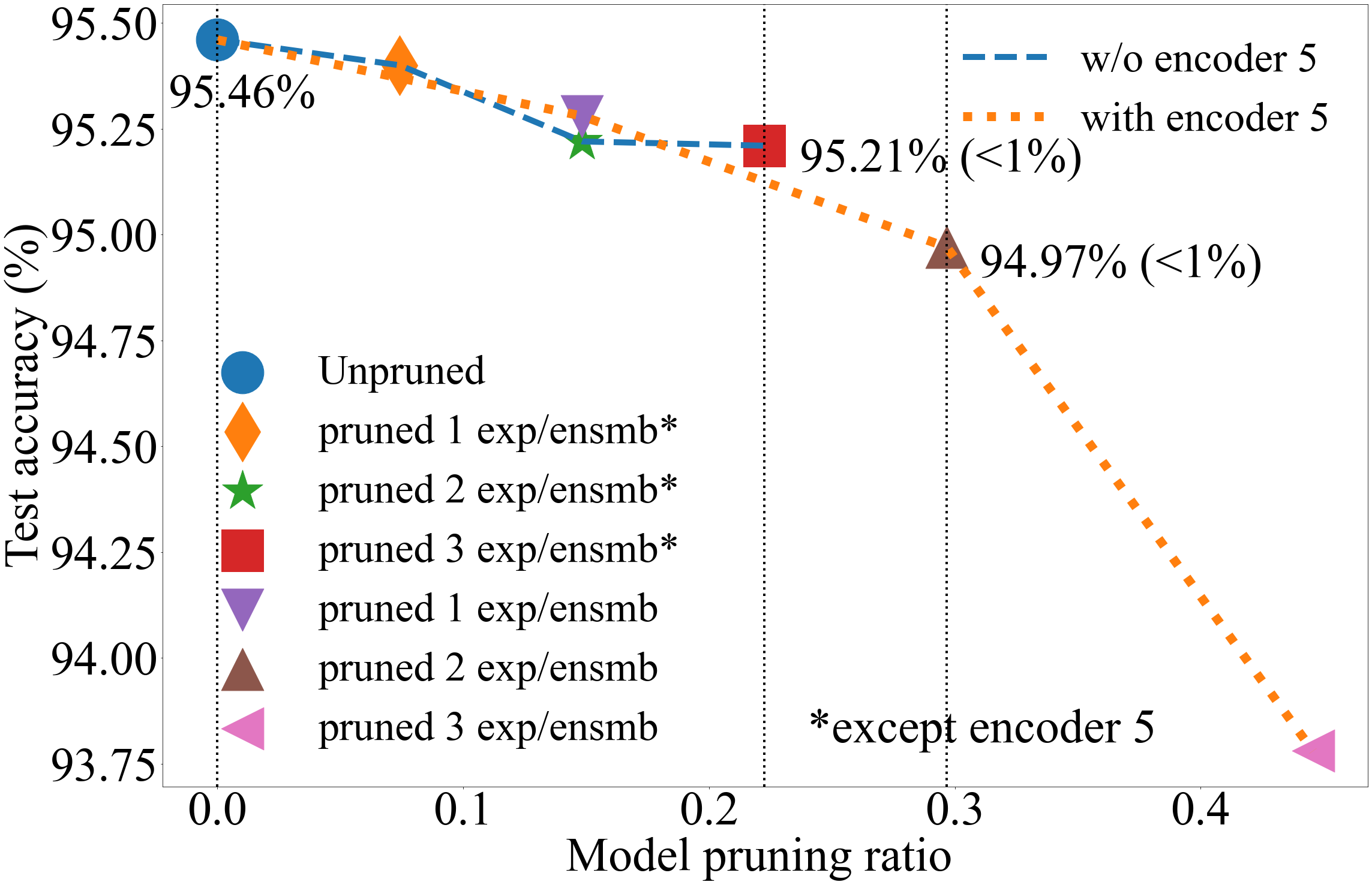}
        \caption{}
        \label{cifar_10_only_pruned_eee}
    \end{subfigure}
    \hfill
    \begin{subfigure}[(b)]{0.48\linewidth}
        \centering
        \includegraphics[width=0.80\linewidth]{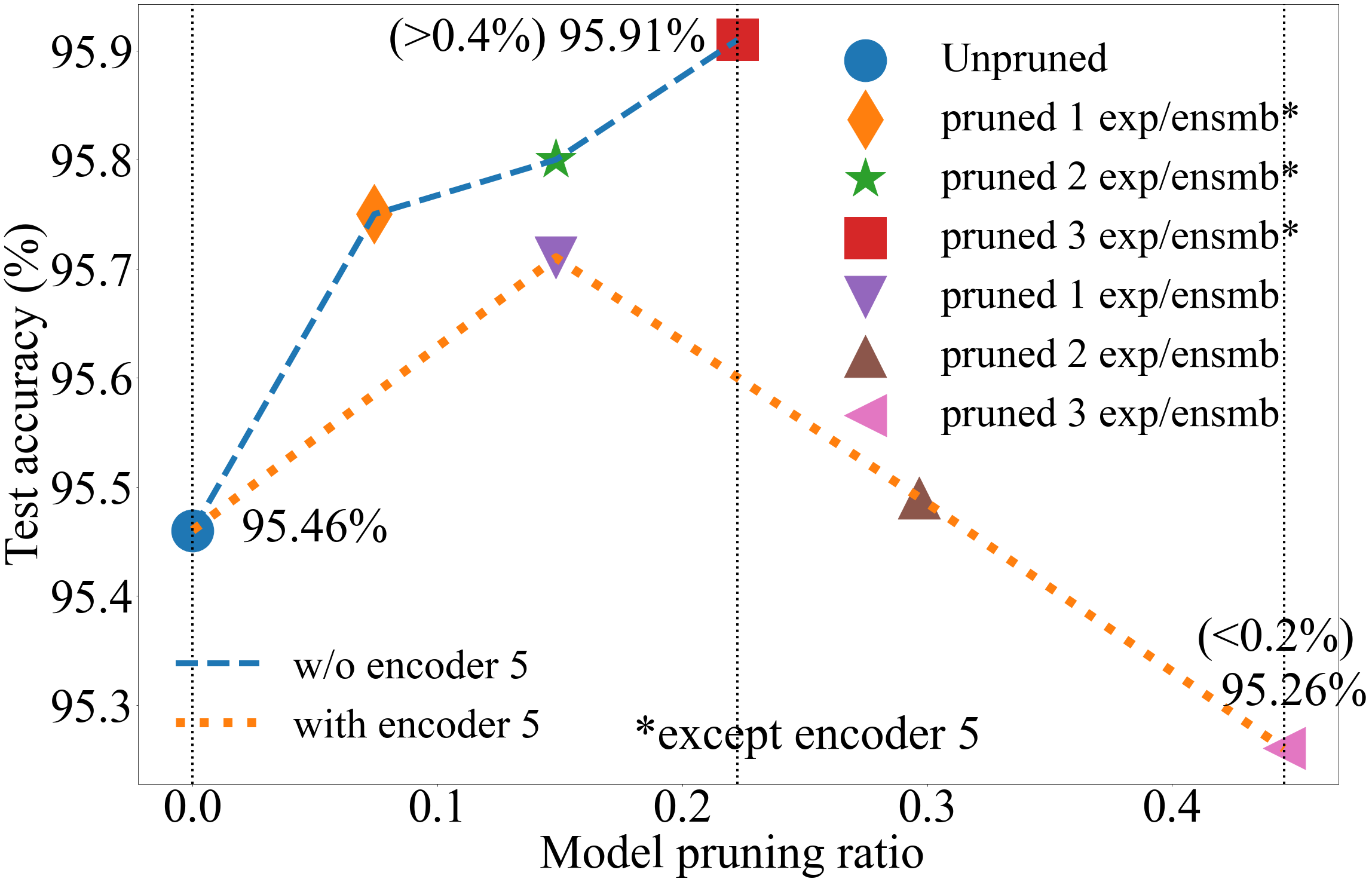}
        \caption{}
        \label{cifar_10_pruned_finetuned_eee}
    \end{subfigure}
    \hfill
    \begin{subfigure}[(c)]{0.48\linewidth}
        \centering
        \includegraphics[width=0.80\linewidth]{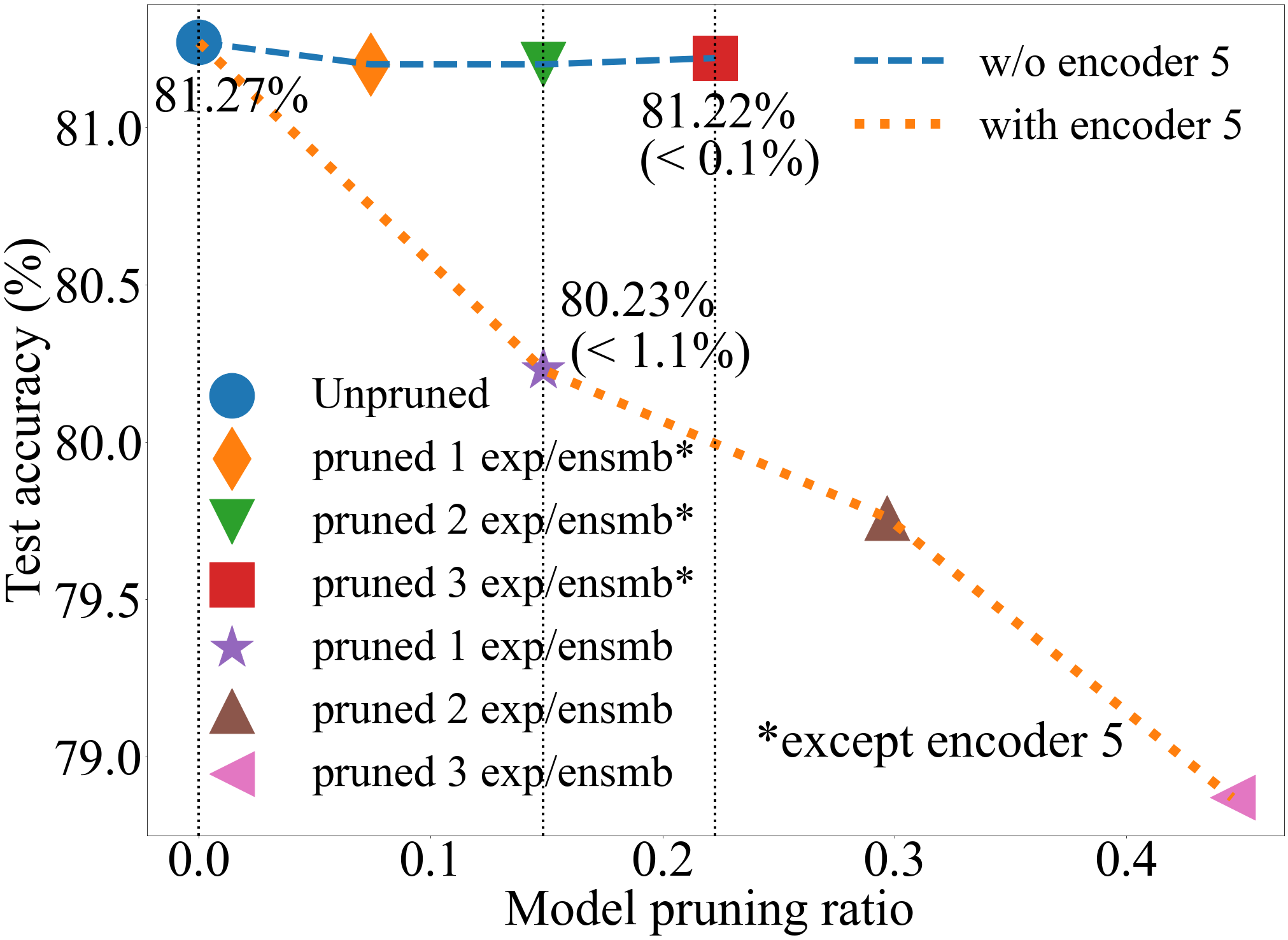}
        \caption{}
        \label{cifar_100_only_pruned_eee}
    \end{subfigure}
    \hfill
    \begin{subfigure}[(d)]{0.48\linewidth}
        \centering
        \includegraphics[width=0.80\linewidth]{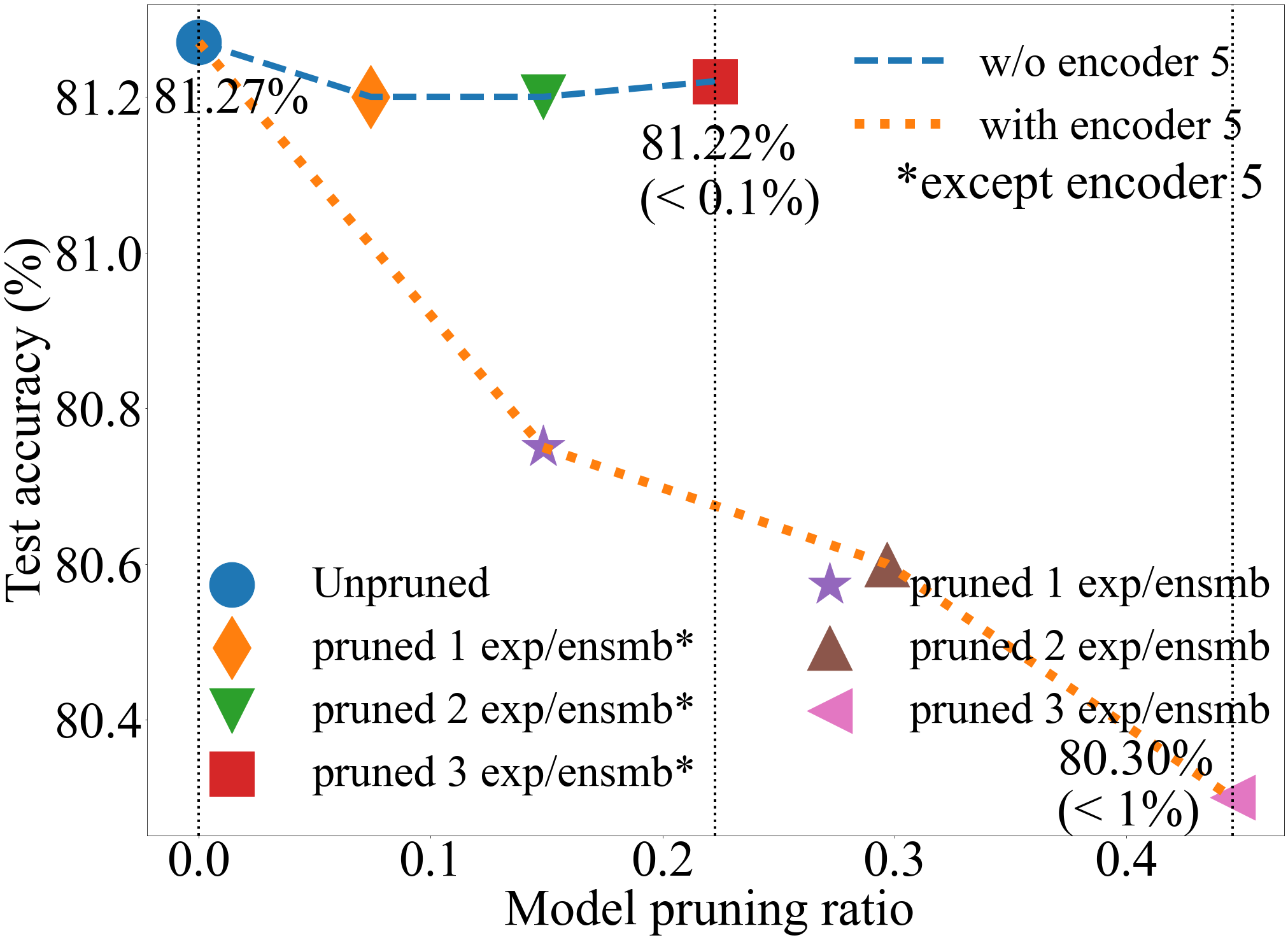}
        \caption{}
        \label{cifar_100_pruned_finetuned_eee}
    \end{subfigure}
    \caption{Generalization performance of the pruned \textbf{$\text{E}^3$-MoE} models: (a) On CIFAR-10 \textbf{without} post-pruning fine-tuning, (b) On CIFAR-10 \text{with} post-pruning fine-tuning, (c) On CIFAR-100 \textbf{without} post-pruning fine-tuning, (d) On CIFAR-100 \textbf{with} post-pruning fine-tuning}
    \vspace{-1mm}
\end{figure}
\begin{figure}[h!]
    \begin{subfigure}[(a)]{0.32\linewidth}
        \centering
        \includegraphics[width=\linewidth]{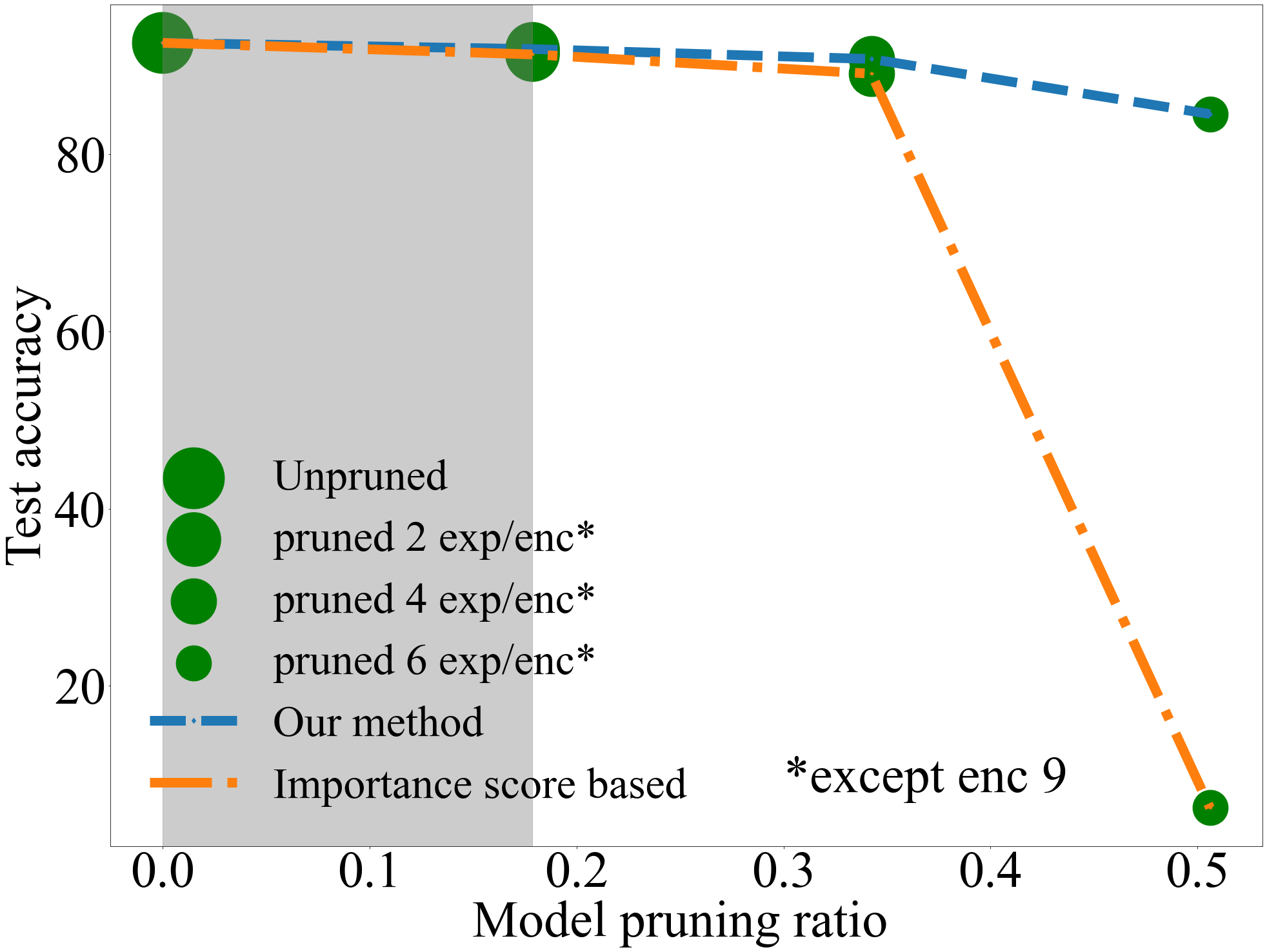}
        \caption{}
        \label{cifar_100_imp_score}
    \end{subfigure}
    \hfill
    \begin{subfigure}[(b)]{0.32\linewidth}
        \centering
        \includegraphics[width=\linewidth]{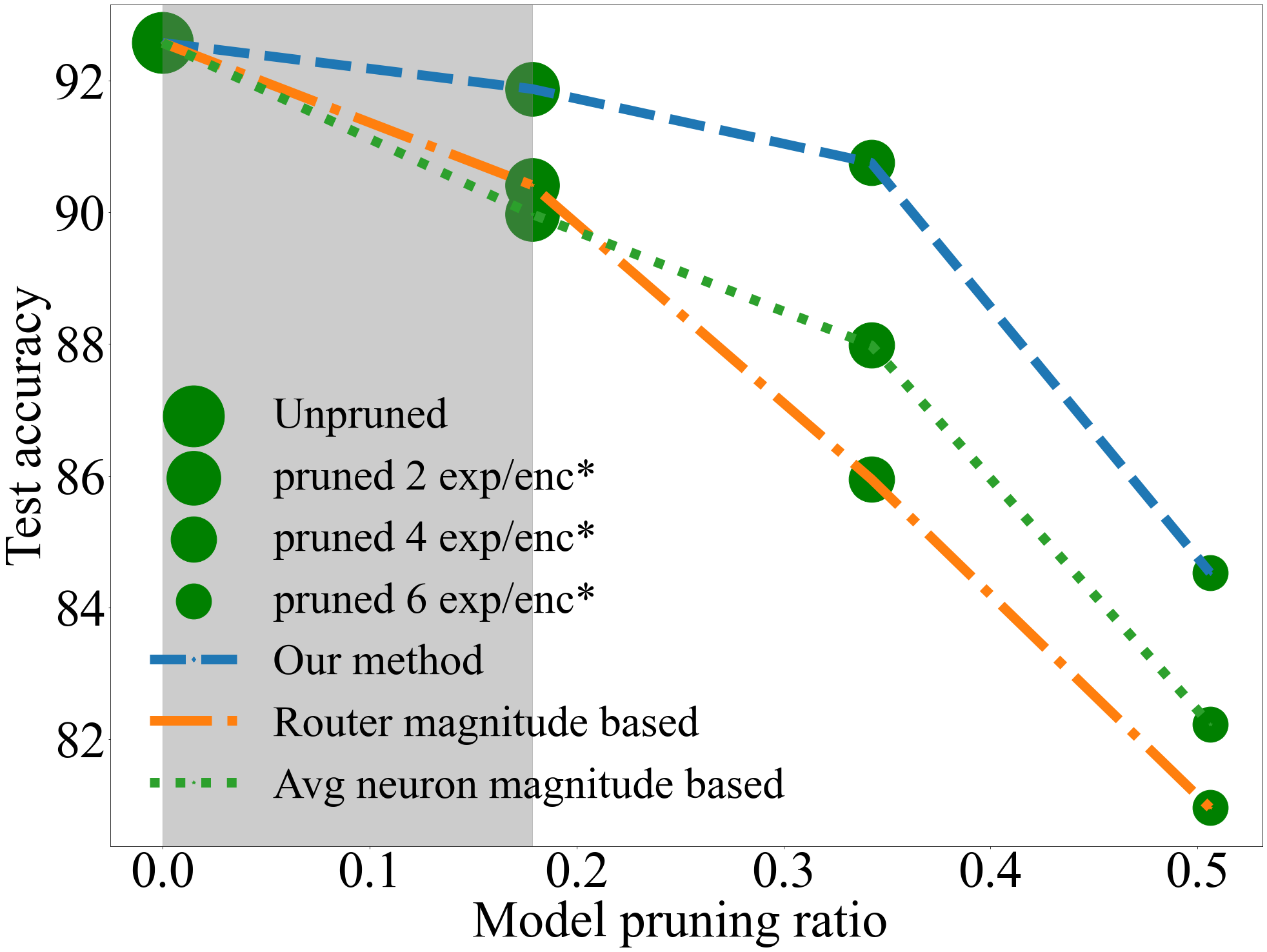}
        \caption{}
        \label{cifar_100_magn}
    \end{subfigure}
    \hfill
    \begin{subfigure}[(d)]{0.32\linewidth}
        \centering
        \includegraphics[width=\linewidth]{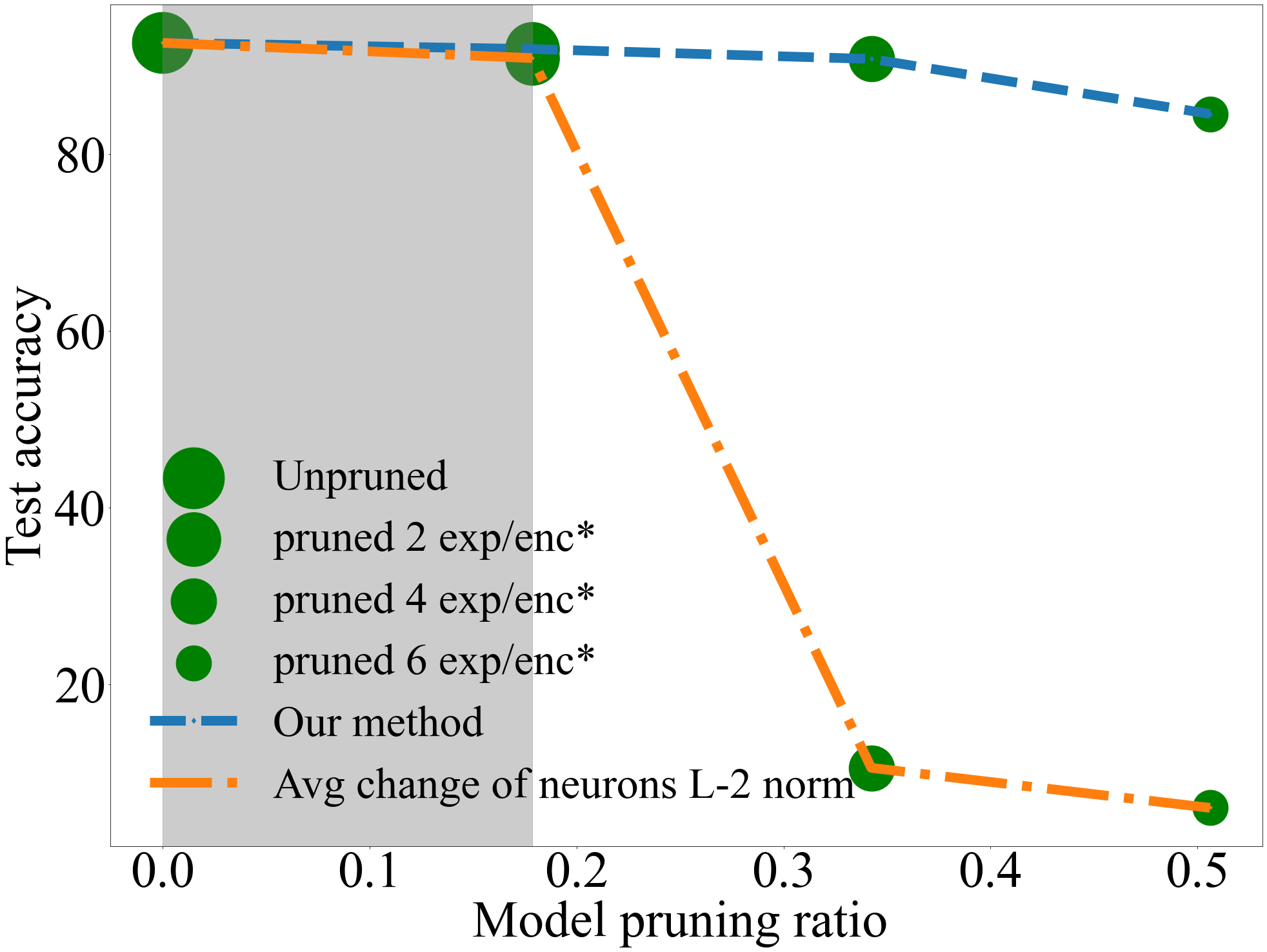}
        \caption{}
        \label{cifar_100_change_magn}
    \end{subfigure}
    \caption{Comparison between different expert pruning methods on CIFAR-100 for pruning V-MoE: (a) vs. importance score, (b) vs. absolute magnitude, (c) vs. average change-in-neurons-magnitude}
    \vspace{-1mm}
\end{figure}
\subsection{Generalization Performance On $\text{E}^3$-MoE}\label{eee_results}

We present generalization results of the pruned $\text{E}^3$-MoE models fine-tuned on CIFAR-10 and CIFAR-100 in Figure \ref{cifar_10_only_pruned_eee}, \ref{cifar_10_pruned_finetuned_eee} and Figure \ref{cifar_100_only_pruned_eee}, \ref{cifar_100_pruned_finetuned_eee}, respectively. As we can see and as reported in the main paper, similar results are found for the pruned $\text{E}^3$-MoE models as for the pruned V-MoE models. More specifically, for the pruned $\text{E}^3$-MoE, 75\% of the experts and 45\% of the whole model can be pruned for both of the datasets while maintaining generalization performance within 1\% of the unpruned model. We do not test the generalization performance of the pruned $\text{E}^3$-MoE on ImageNet as the released $\text{E}^3$-MoE model is pre-trained on ImageNet itself.

\subsection{Comparison with Other Methods for Pruning V-MoE on CIFAR-100}\label{baseline_cifar100}
Figure \ref{cifar_100_imp_score}, \ref{cifar_100_magn} and, \ref{cifar_100_change_magn} presents the comparative results for pruning V-MoE on CIFAR-100 among different expert pruning methods. The proposed method outperforms all other methods for this dataset.

\subsection{On the Inference Efficiency of V-MoE and $\text{E}^3$-MoE}\label{inference_more_results}
We present the results in Figure \ref{cifar_10_time} for inference time in predicting the whole CIFAR-10 test set by the pruned V-MoE. As we described in section \ref{experiments}, inference time falls linearly with pruning ratio. We also present results for linear reduction of FLOPs and inference time in the pruned V-MoE on CIFAR-100 and ImageNet in Figure \ref{cifar_100_flop_vmoe}, \ref{cifar_100_time_vmoe} and Figure \ref{imagenet_flop}, \ref{imagenet_time} and, in the pruned $\text{E}^3$-MoE on CIFAR-10 and CIFAR-100 in Figure \ref{cifar_10_flop_eee},\ref{cifar_10_time_eee} and Figure \ref{cifar_100_flop_eee},\ref{cifar_100_time_eee}, respectively. The slight sublinearity in Figure \ref{cifar_10_time_eee} and \ref{cifar_100_time_eee} occurs from the parallel implementation of ensembles. The relatively lower percentage of reduction of FLOPs and time in $\text{E}^3$-MoE compared to V-MoE occurs from the fact that the unpruned $\text{E}^3$-MoE is much smaller than the unpruned V-MoE in terms of parameters (167 million compared to 979 million).
\begin{figure}[ht]
    \begin{subfigure}[(a)]{0.32\linewidth}
        \centering
        \includegraphics[width=\linewidth]{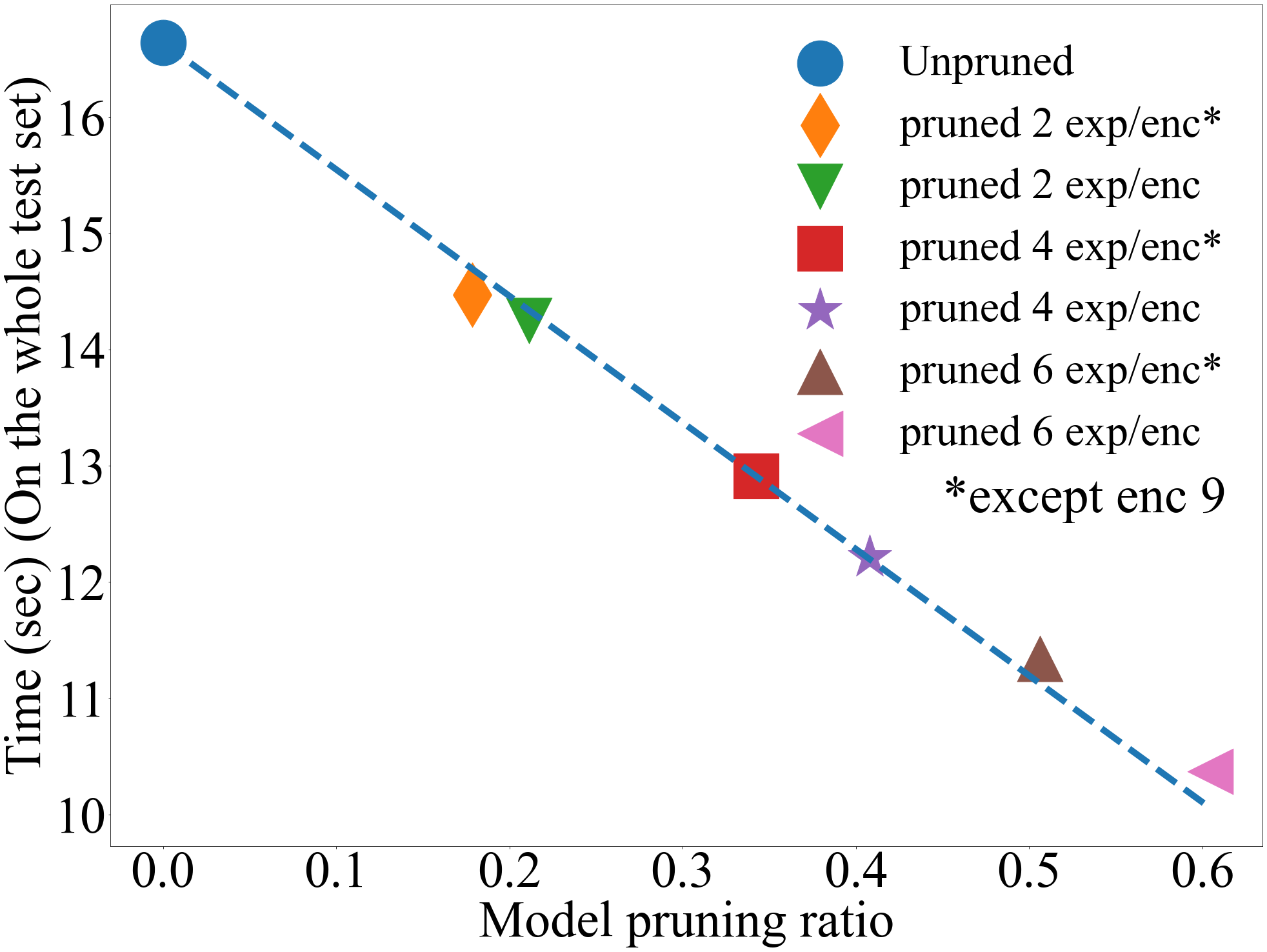}
        \caption{}
        \label{cifar_10_time}
    \end{subfigure}
    \hfill
    \begin{subfigure}[(b)]{0.32\linewidth}
        \centering
        \includegraphics[width=\linewidth]{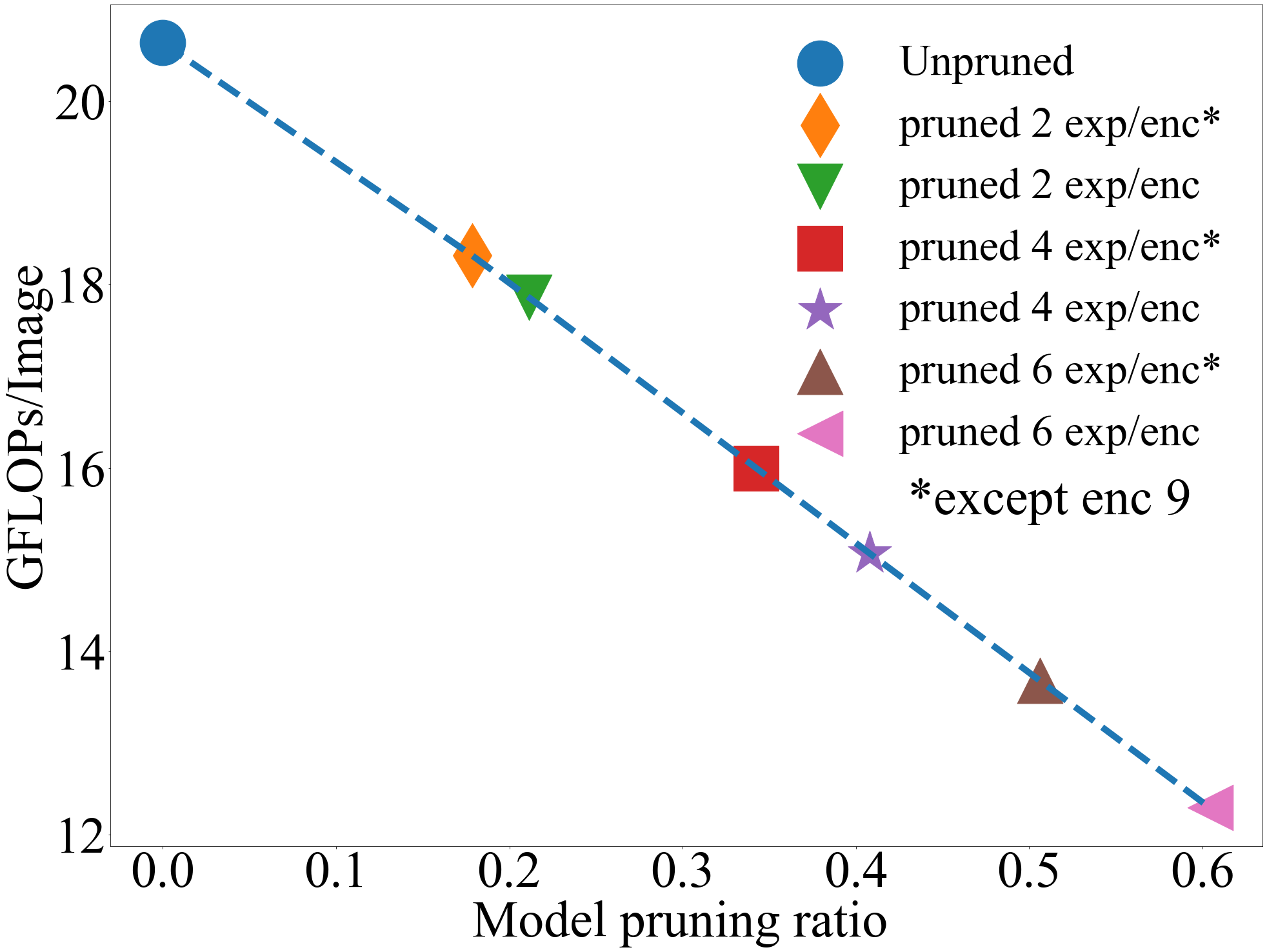}
        \caption{}
        \label{cifar_100_flop_vmoe}
    \end{subfigure}
    \hfill
    \begin{subfigure}[(c)]{0.32\linewidth}
        \centering
        \includegraphics[width=\linewidth]{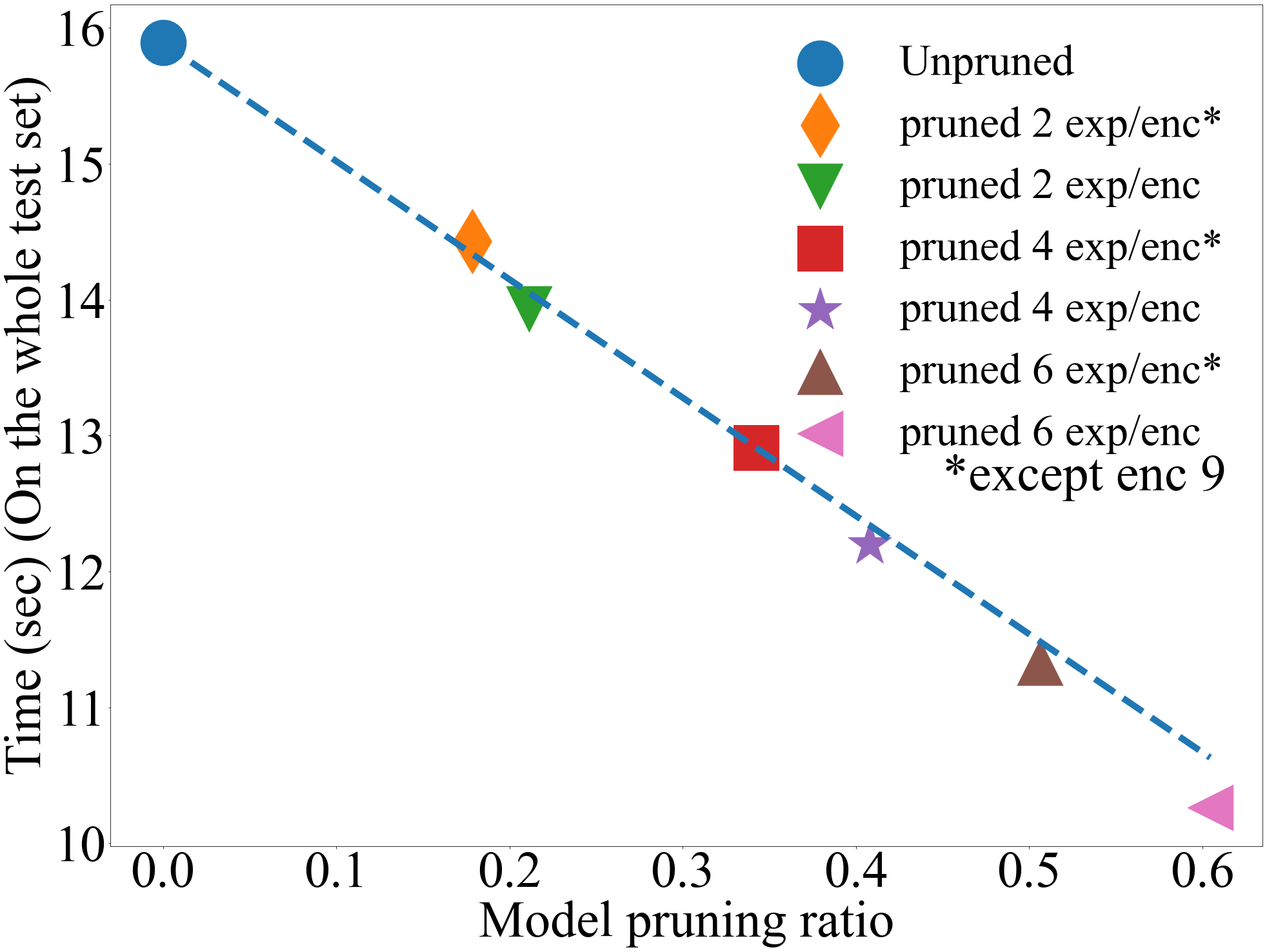}
        \caption{}
        \label{cifar_100_time_vmoe}
    \end{subfigure}
    \hfill
    \begin{subfigure}[(d)]{0.32\linewidth}
        \centering
        \includegraphics[width=\linewidth]{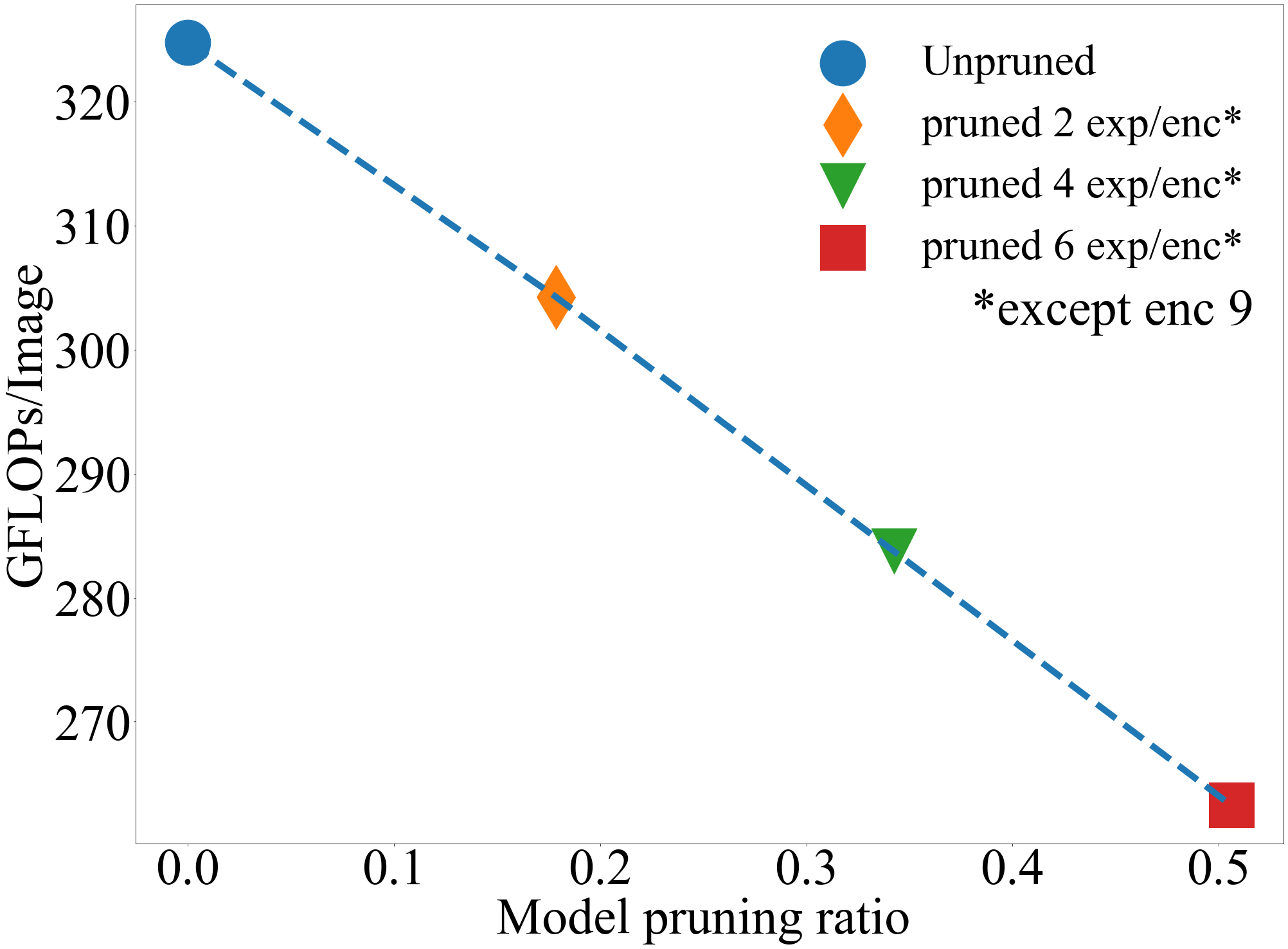}
        \caption{}
        \label{imagenet_flop}
    \end{subfigure}
    ~
    \begin{subfigure}[(e)]{0.32\linewidth}
        \centering
        \includegraphics[width=\linewidth]{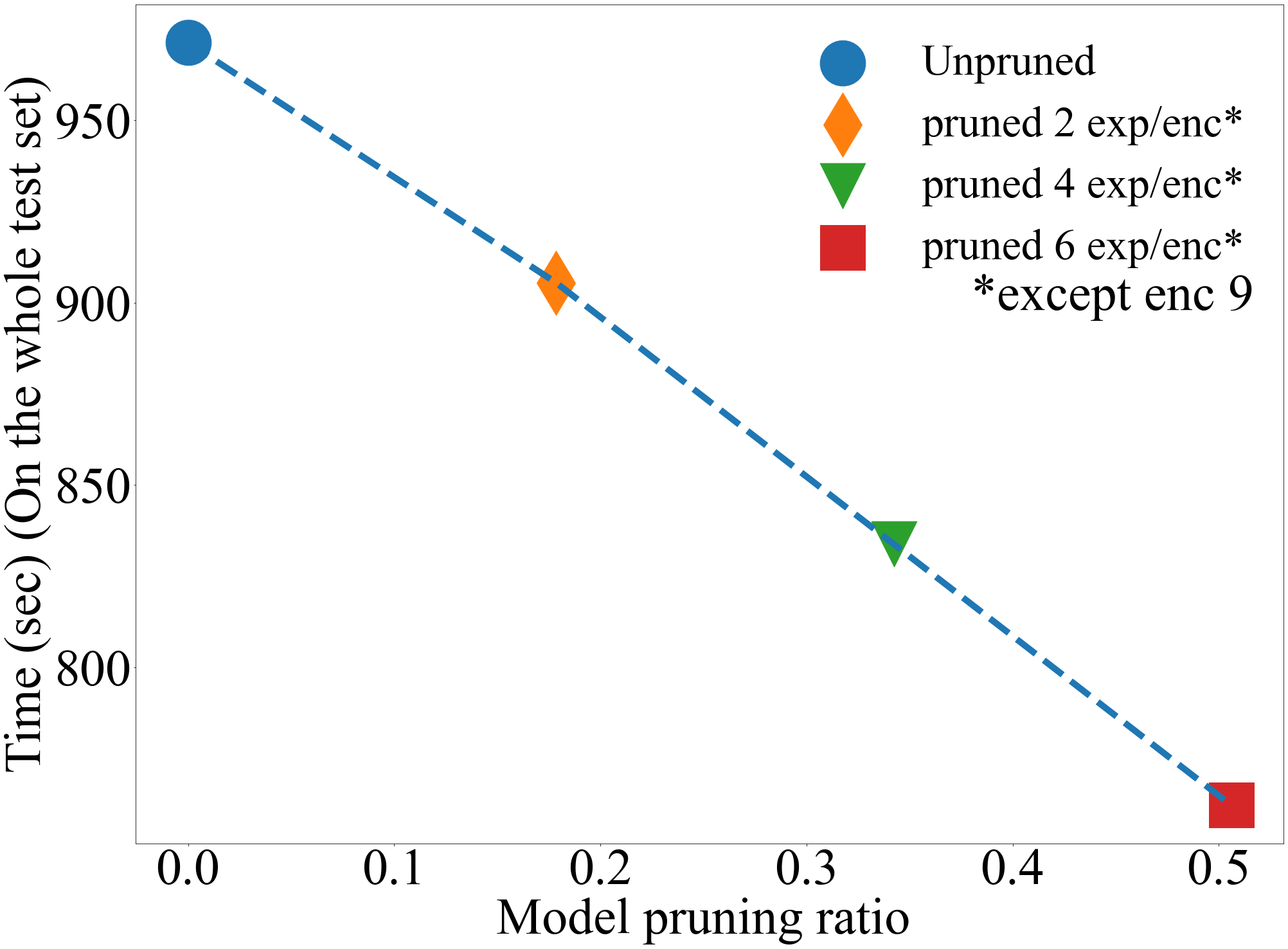}
        \caption{}
        \label{imagenet_time}
    \end{subfigure}
    \hfill
    \begin{subfigure}[(d)]{0.32\linewidth}
        \centering
        \includegraphics[width=\linewidth]{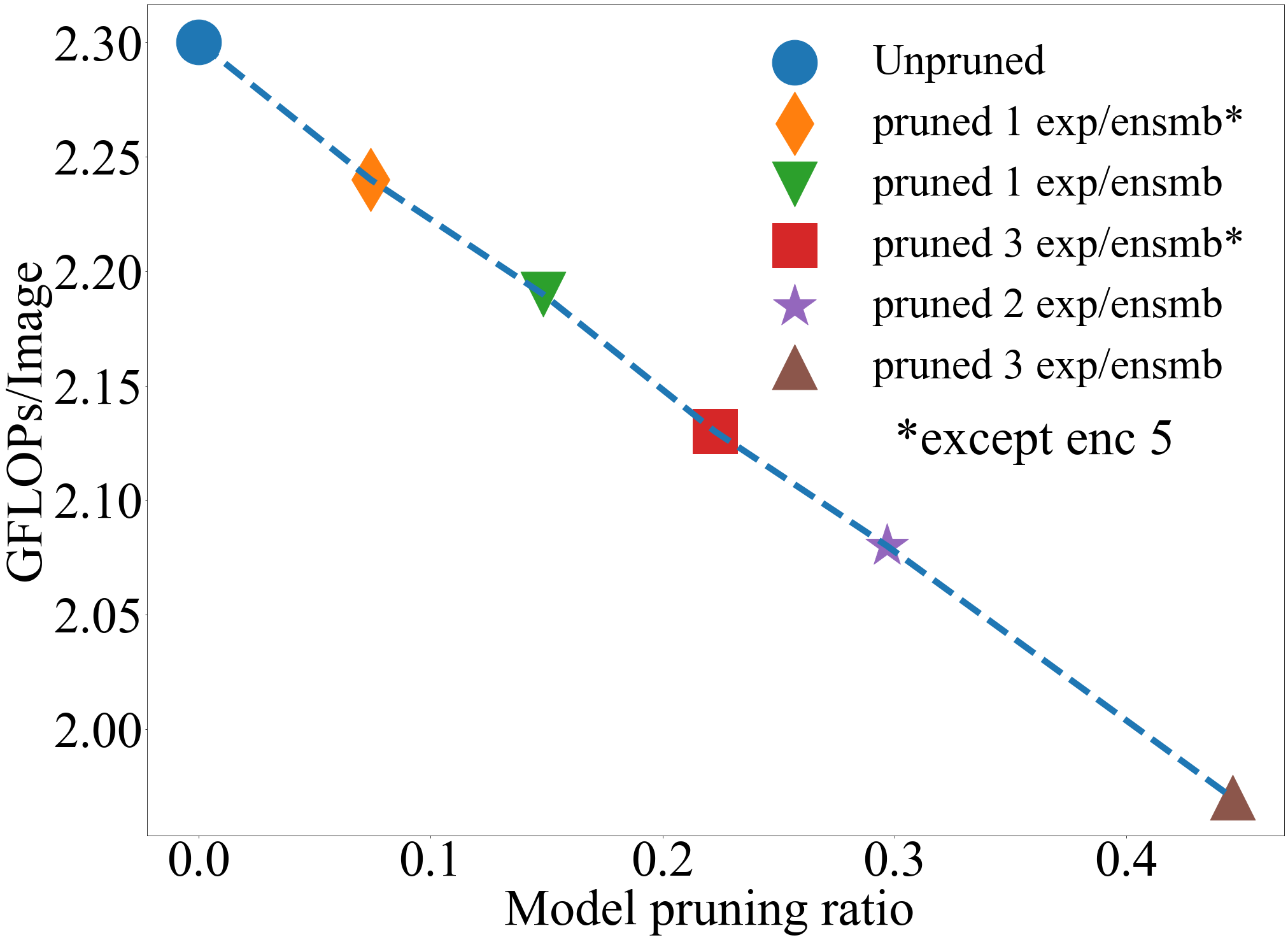}
        \caption{}
        \label{cifar_10_flop_eee}
    \end{subfigure}
    \hfill
    \begin{subfigure}[(d)]{0.32\linewidth}
        \centering
        \includegraphics[width=\linewidth]{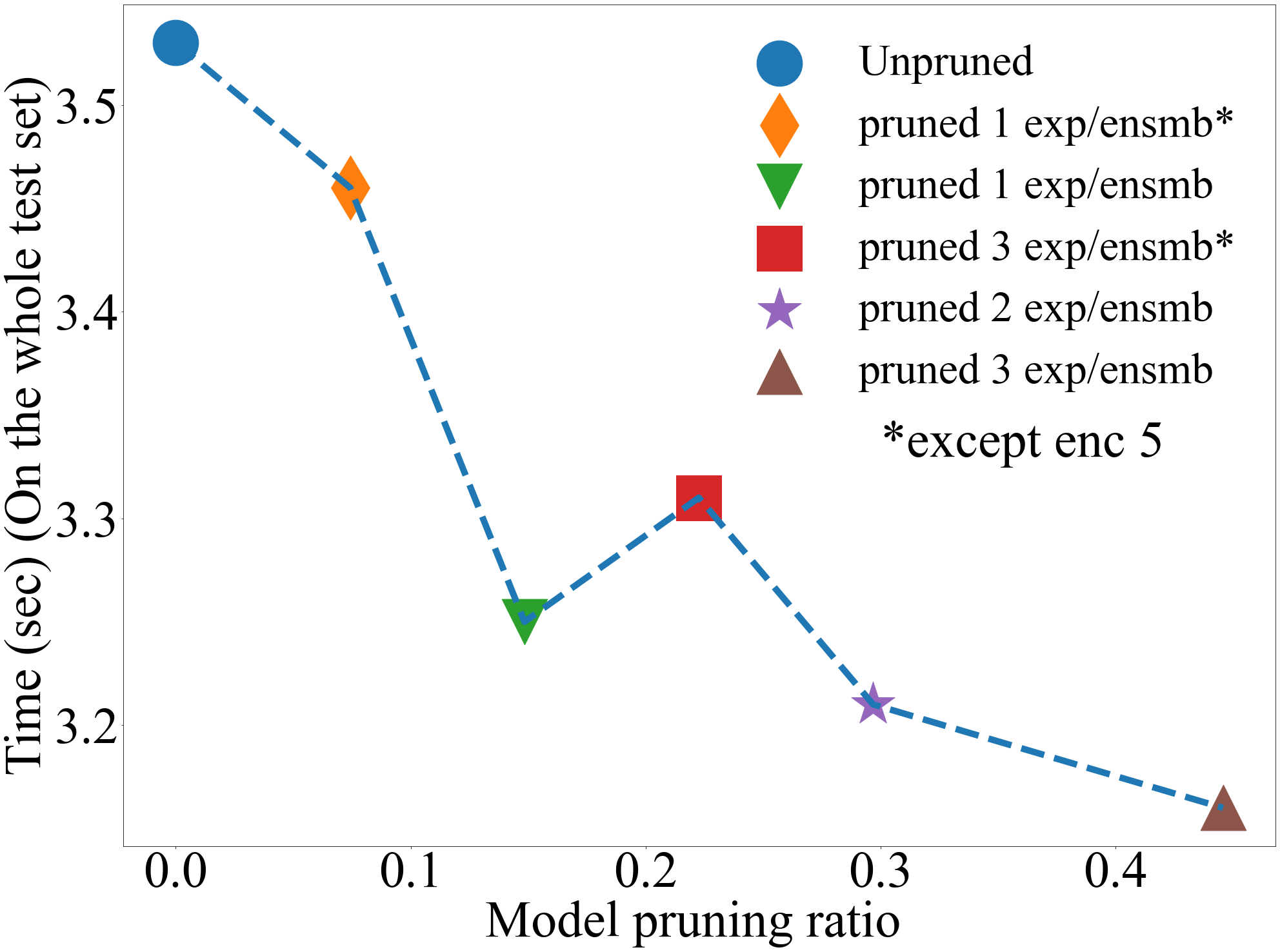}
        \caption{}
        \label{cifar_10_time_eee}
    \end{subfigure}
    \hfill
    \begin{subfigure}[(d)]{0.32\linewidth}
        \centering
        \includegraphics[width=\linewidth]{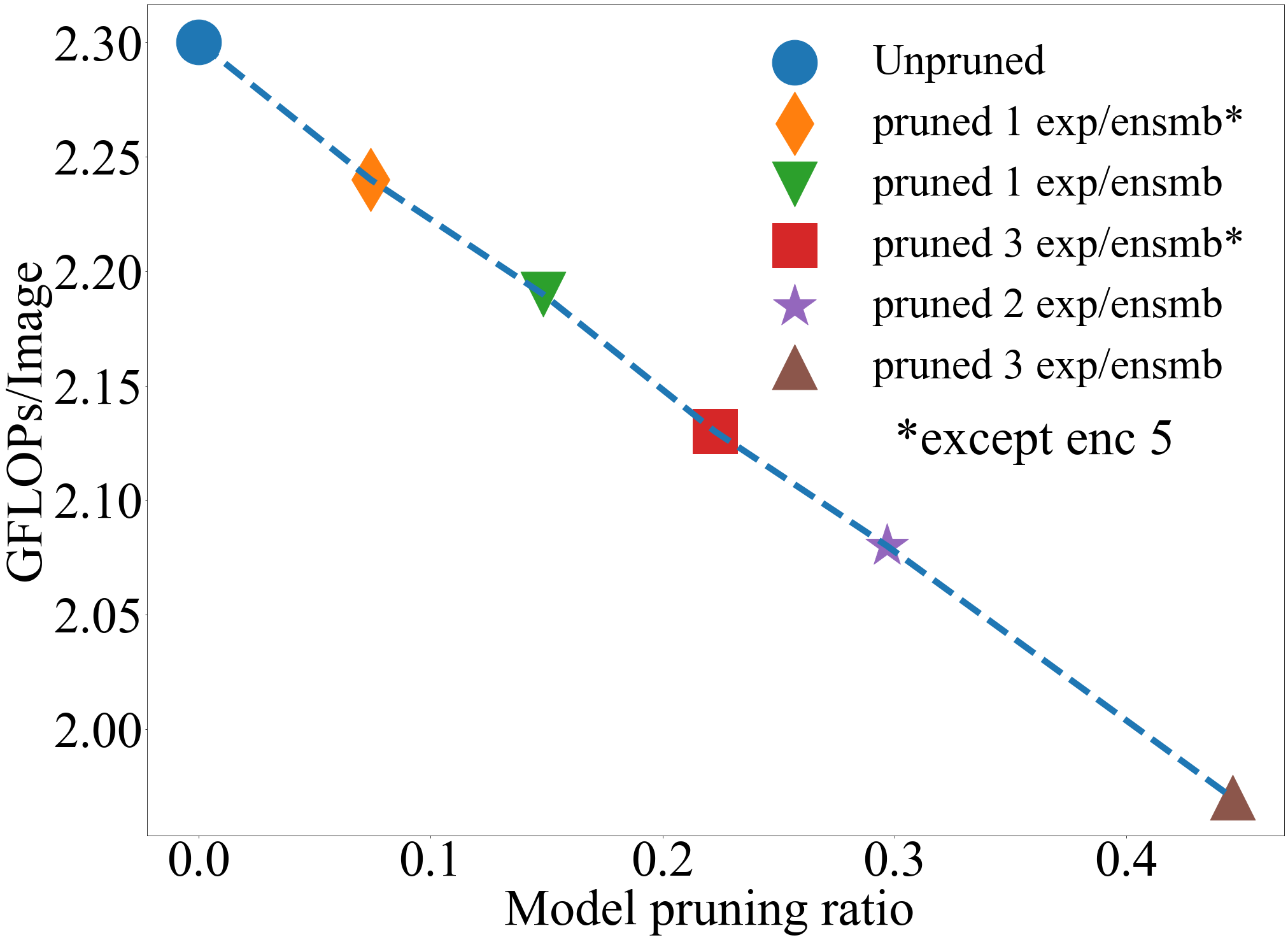}
        \caption{}
        \label{cifar_100_flop_eee}
    \end{subfigure}
    \hfill
    \begin{subfigure}[(d)]{0.32\linewidth}
        \centering
        \includegraphics[width=\linewidth]{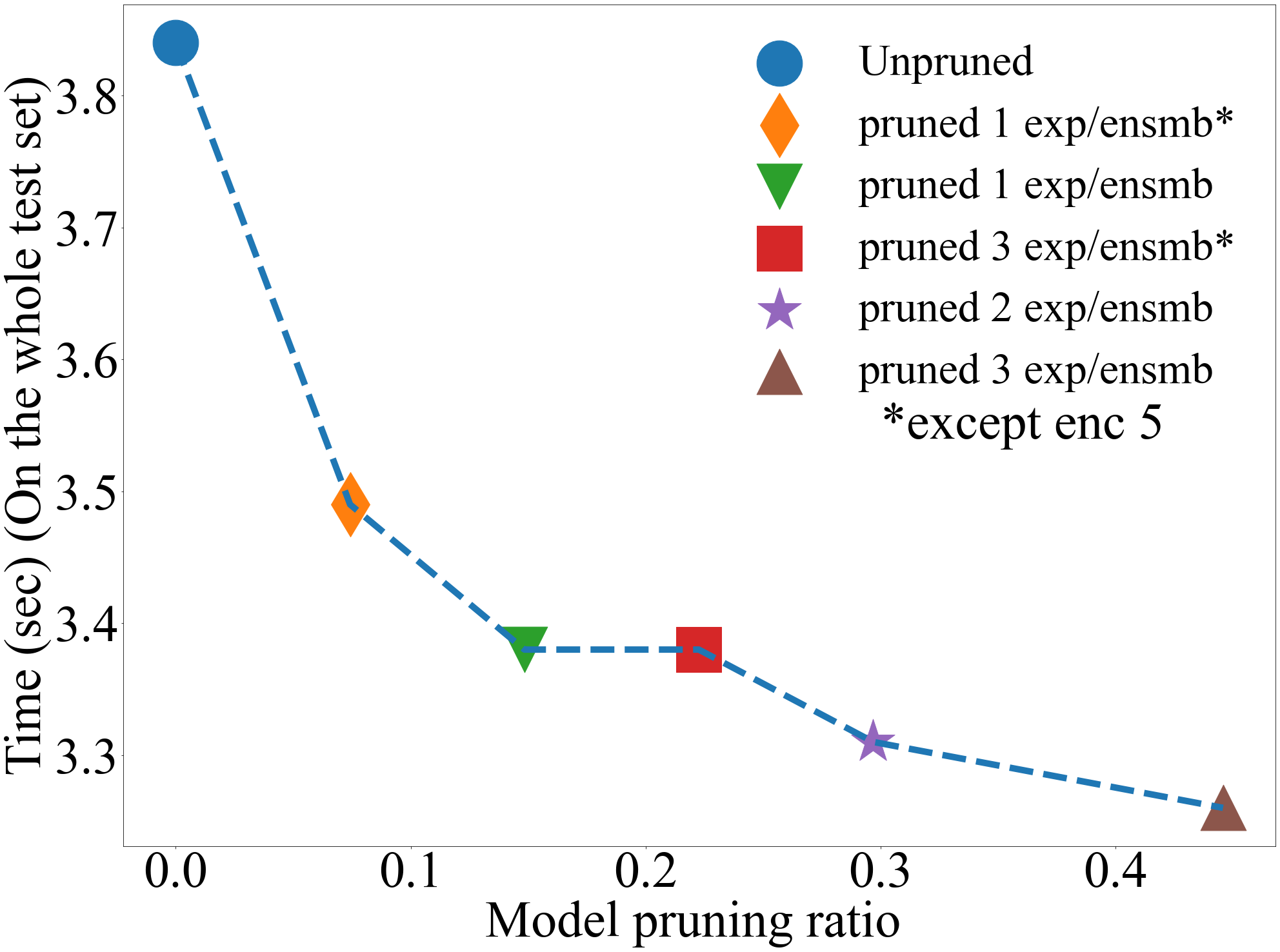}
        \caption{}
        \label{cifar_100_time_eee}
    \end{subfigure}
    \caption{Inference cost of the pruned models: (a) Inference time in V-MoE on CIFAR-10, (b) FLOPs per Image in V-MoE on CIFAR-100, (c) Inference time in V-MoE on CIFAR-100, (d) FLOPs per Image in V-MoE on ImageNet, (e) Inference time in V-MoE on ImageNet, (f) FLOPs per Image in $\text{E}^3$-MoE on CIFAR-10, (g) Inference time in $\text{E}^3$-MoE on CIFAR-10, (h) FLOPs per Image in $\text{E}^3$-MoE on CIFAR-100, (i) Inference time in $\text{E}^3$-MoE on CIFAR-100}
    \vspace{-1mm}
\end{figure}

\section{Preliminaries}
As we analyze the \textit{expert-choice} routing, for any fine-tuning step $t$, we re-write (\ref{analyzed_model}) as,

$f^{(t)}(x)=\sum_{s=1}^ka^{(s)}\sum_{j\in J_s^{(t)}(x)}G_j^{(s,t)}\sum_{r=1}^m\text{ReLU}\left(\langle w_r^{(s,t)},x^{(j)}\rangle\right)$ 

For any input $(x,y)$ and iteration $t$, the gradient of the hidden neuron $r\in[m]$ of the expert $s\in[k]$ is calculated as,

$\cfrac{\partial l^{(t)}(x,y)}{\partial w_r^{(s,t)}}=-ya^{(s)}\sum_{j\in J_s^{(t)}(x)}G_j^{(s,t)}x^{(j)}1_{\langle w_r^{(s,t)},x^{(j)}\rangle\ge0}$

and the gradient of the router of the expert $s\in[k]$ is calculated as,

$\cfrac{\partial l^{(t)}(x,y)}{\partial w_s^{(t)}}=-ya^{(s)}\sum_{j\in J_s^{(t)(x)}}\sigma_j^{(s,t)}G_j^{(s,t)}\sum_{i\in J_s^{(t)}(x)\backslash j}G_i^{(s,t)}(x^{(j)}-x^{(i)})$, where, $\sigma_j^{(s,t)}=\sum_{r=1}^m\text{ReLU}(\langle w_r^{(s,t)},x^{(j)}\rangle)$

We express the batch gradient of SGD at iteration $t$ for the batch $\mathcal{B}_t$ as, $\cfrac{\partial l}{\partial w_r^{(s,t)}}=\cfrac{1}{B}\sum_{x\in\mathcal{B}_t}\cfrac{\partial l^{(t)}(x,y)}{\partial w_r^{(s,t)}}$ for the expert and, $\cfrac{\partial l}{\partial w_s^{(t)}}=\cfrac{1}{B}\sum_{x\in\mathcal{B}_t}\cfrac{\partial l^{(t)}(x,y)}{\partial w_s^{(t)}}$  for the router. 

We present the pruning algorithm considered for analysis at Algorithm \ref{alg_1}\footnote{Here, $TOP_z(S)$ function provides the set of top $z$ values of the set $S$.}. 

\begin{algorithm}\caption{The Expert Pruning Algorithm for the Theoretical Analysis}\label{alg_1}
\textbf{Input} : Training data $\{(x_i,y_i)\}_{i=1}^N$, learning rates $\eta_r$ and $\eta_e$, number of iterations $T$ and $T^\prime$, batch- \\
\indent\hspace{1.2cm}size $B$, expert-pruning-ratio $\rho$\\
\textbf{Step-1}: Initialize the pre-trained weights $\{w_s^{(0)}, w_r^{(s,t)}, a^{(s)}\}_{s\in[k],r\in[m]}$\\
\textbf{Step-2}: for $t=0,1, ... ,T-1$ do:\\
\centerline{$w_{s}^{(t+1)}=w_{s}^{(t)}-\eta_r\cfrac{\partial l}{\partial w_s^{(t)}}, \forall s\in[k]$}\\
\centerline{$w_{r,s}^{(t+1)}=w_{r,s}^{(t)}-\eta_e\cfrac{\partial l}{\partial w_{r,s}^{(t)}}, \hspace{0.2cm}\forall r\in[m], s\in[k]$}\\
\textbf{Step-3}: Construct the set $S_{k^\prime}=\{s\in S_1: \Delta_s^{(T)}\in\text{TOP}_{(1-\rho)|S_1|}\left(\{\Delta_s\}_{s\in S_1}\right)\}\bigcup\{s\in S_2: \Delta_s^{(T)}\in\text{TOP}_{(1-\rho)|S_2|}\left(\{\Delta_s\}_{s\in S_2}\right)\}$\\
\textbf{Step-4}: for $t=0,1, ... ,T^\prime-1$ do:\\
\centerline{$w_{s}^{(t+1)}=w_{s}^{(t)}-\eta_r\cfrac{\partial l}{\partial w_s^{(t)}}, \forall s\in S_{k^\prime}$}\\
\centerline{$w_{r,s}^{(t+1)}=w_{r,s}^{(t)}-\eta_e\cfrac{\partial l}{\partial w_{r,s}^{(t)}}, \hspace{0.2cm}\forall r\in[m], s\in S_{k^\prime}$}\\
\end{algorithm}

\textbf{Notations:}
\begin{enumerate}
    \item $\Tilde{O}(\cdot)$ and $\Tilde{\Omega}(\cdot)$ hides factor $\log(\text{poly}(d))$ with a sufficiently large polynomial $poly(\cdot)$
    \item With high probability (abbreviated as \textit{w.h.p.}) refers to the probability $1-\cfrac{1}{\text{poly}(d)}$
\end{enumerate}

\textbf{Definitions:}\\

We denote, 
\begin{itemize}
    \item $G_1^{(s,t)}(x)$: Gating value of $o_1$ for the sample $x$ at the expert $s$ at time $t$
    \item $G_2^{(s,t)}(x)$: Gating value of $o_2$ for the sample $x$ at the expert $s$ at time $t$
    \item $G_q^{(s,t)}(x)$: Gating value of the task-irrelevant pattern $q\in\mathcal{P}$ for the sample $x$ at the expert $s$ at time $t$
    \item $l_q^{(s,t)}(x):=|\{j\in J_s^{(t)}(x): x^{(j)}=q\}|$, is the number of copies of the task-irrelevant pattern $q\in\mathcal{P}$ in the set of top $l$ tokens for the sample $x$ at the expert $s$ at time $t$
\end{itemize}

We restate the definition of the set, $S_1:=\{s\in[k]:a^{(s)}=+1\}$ and $S_2:=\{s\in[k]:a^{(s)}=-1\}$

We restate the definition of the probability measures,\\ $p_1^{(s,t)}:=\mathbb{P}\left[(x,y)\sim\mathcal{D}: \exists j\in J_s^{(t)}(x) \text{ s.t. } x^{(j)}=o_1 \text{ and } G_1^{(s,0)}\ge\cfrac{1}{l}|y=+1\right]$

and $p_2^{(s,t)}:=\mathbb{P}\left[(x,y)\sim\mathcal{D}: \exists j\in J_s^{(t)}(x) \text{ s.t. } x^{(j)}=o_2 \text{ and } G_2^{(s,0)}\ge\cfrac{1}{l}|y=-1\right]$

For any $x$, $\sigma_j^{(s,t)}(x):=\sum_{r=1}^m\text{ReLU}(\langle w_r^{(s,t)},x^{(j)}\rangle)$ and, for any pattern $q\in\mathcal{P}$, $\sigma_q^{(s,t)}:=\sum_{r=1}^m\text{ReLU}(\langle w_r^{(s,t)},q\rangle)$

$\sigma_1^{(s,t)}:=\sum_{r=1}^m\text{ReLU}(\langle w_r^{(s,t)},o_1\rangle)$ and $\sigma_2^{(s,t)}:=\sum_{r=1}^m\text{ReLU}(\langle w_r^{(s,t)},o_2\rangle)$

For any task-irrelevant pattern $q\in\mathcal{P}$, $\delta_{1,q}^{(s,t)}(x):=\sigma_1^{(s,t)}-\sigma_q^{(s,t)}$ and $\delta_{2,q}^{(s,t)}(x):=\sigma_2^{(s,t)}-\sigma_q^{(s,t)}$. Therefore, $\delta_{q,1}^{(s,t)}=-\delta_{1,q}^{(s,t)}$ and $\delta_{q,2}^{(s,t)}=-\delta_{2,q}^{(s,t)}$

For any two different task-irrelevant patterns $q,q^\prime\in\mathcal{P}$, $\delta_{q,q^\prime}^{(s,t)}:=\sigma_q^{(s,t)}-\sigma_{q^\prime}^{(s,t)}$

Note that, for any $x$ and $q$, $l_q^{(s,t)}(x)G_q^{(s,t)}\le1$.

Let us define, $C_1:=\max\{\left|\left|w_s^{(0)}\right|\right|\}_{s\in[k]}$ and $C_2:=\max\{\left|\left|w_r^{(s,0)}\right|\right|\}_{s\in[k],r\in[m]}$.

WLOG, we analyze the case where $l\ge e^{2C_1}$. Therefore, $\forall s\in[k]$, $\left|\left|w_s^{(0)}\right|\right|\le\cfrac{1}{2}\log l$.

\textbf{Assumptions on the pre-trained model}.
Based on the experts' proficiency measure described in section \ref{setup}, we define the important and unimportant experts for the downstream task in the pre-trained model.
\begin{definition}[Important and Unimportant experts in pre-trained model\footnote{Our definitions of important and unimportant experts are minimal in the sense that (I) for an expert to be important for a particular downstream pattern, it should learn to route the pattern with at least the uniform gating value ($1/l$) for a minimal fraction of samples ($p_1^{(s,0)}=\Omega(1)$) during pre-training and, (II) if an expert does not learn to route the pattern at all ($p_1^{(s,0)}=O(1/d)$), it should be unimportant for the pattern.}]
    An expert $s\in[k]$ is \textit{important} for the class-1 task-specific feature $o_1$ if $p_1^{(s,0)}=\Omega(1)$, and \textit{unimportant} for the feature if $p_1^{(s,0)}=O(1/d)$. 
    Similarly, an expert $s\in[k]$ is \textit{important} for the class-2 task-specific feature $o_2$ if $p_2^{(s,0)}=\Omega(1)$, and \textit{unimportant} for the feature if $p_2^{(s,0)}=O(1/d)$.
\end{definition}
We assume the existence of at least one class-1 important expert in the set $S_1$ and at least one class-2 important expert in the set $S_2$ of the pre-trained model. This assumption is minimal as we need only one important expert per class.

For any class-1 important expert $s$, we assume that the router's component along $o_1$ (i.e., $\langle w_s^{(0)},o_1\rangle$) is minimally separated ($\Omega(1/d)$) from the components along any irrelevant pattern $q\in\mathcal{P}$ (i.e., $\langle w_s^{(0)},q\rangle$), i.e.,\\
$\left|\langle w_s^{(0)},o_1-q\rangle\right|=C_p \text{ where, } C_p=\Theta(1/d)$, for any $s$ such that $p_1^{(s,0)}=\Omega(1)$ and, for any $q\in\mathcal{P}$.

We further assume that, for any class-1 important expert $s$, at least a fraction of the $m$ neurons (i.e., $\Omega(1)$ fraction) of the expert are minimally activated (i.e., $\langle w_r^{(s,0)},o_1\rangle\ge 0$) by $o_1$, i.e.,\\
$\left|\{r\in[m]:\langle w_r^{(s,0)},o_1\rangle\ge 0\}\right|=\Omega(1)$, for any $s$ such that $p_1^{(s,0)}=\Omega(1)$.

We assume that similar assumptions hold for the class-2 important experts. Again, these two assumptions are minimal for an expert to be important for a task-specific pattern.

\textbf{Components of the Router-gradients:}

We calculate the components of the gradient of the router $s\in[k]$ for the input $(x,y)$ along the task-specific patterns, i.e., $o_1$ and $o_2$ and along any task-irrelevant pattern $q\in\mathcal{P}$ at time $t$ as follows:

\begin{equation}\label{eq_a_1}
    \left\langle \cfrac{\partial l(x,y)}{\partial w_s^{(t)}},q\right\rangle=
\begin{cases}
    0 & \text{if $\not\exists j\in J_s^{(t)}(x)$ s.t. $x^{(j)}=q$}\\\\
    ya^{(s)}l_q^{(s,t)}G_q^{(s,t)}\sum_{j\in J_s^{(t)}(x)/\{j:x^{(j)}=q\}}G_j^{(s,t)}\left(\sigma_j^{(s,t)}-\sigma_q^{(s,t)}\right) & \text{if $\exists j \in J_s^{(t)}(x)$ s.t. $x^{(j)}=q$}
\end{cases}
\end{equation}
\begin{equation}\label{eq_a_2}
    \left\langle \cfrac{\partial l(x,y)}{\partial w_s^{(t)}},o_1\right\rangle=
\begin{cases}
    0 & \text{if $y=-1$}\\\\
    0 & \text{if $y=+1$ but}\\ &\text{$\not\exists j\in J_s^{(t)}(x)$ s.t.}\\ &\text{$x^{(j)}=o_1$}\\\\
    a^{(s)}G_1^{(s,t)}(x)\sum_{i\in J_s^{(t)}(x)/\{j:x^{(j)}=o_1\}}G_j^{(s,t)}\left(\sigma_j^{(s,t)}-\sigma_1^{(s,t)}\right) & \text{if $y=+1$ and}\\&\text{$\exists j \in J_s^{(t)}(x)$ s.t.}\\ &\text{$x^{(j)}=o_1$}
\end{cases}
\end{equation}
\begin{equation}\label{eq_a_3}
    \left\langle \cfrac{\partial l(x,y)}{\partial w_s^{(t)}},o_2\right\rangle=
\begin{cases}
    0 & \text{if $y=+1$}\\\\
    0 & \text{if $y=-1$ but}\\ &\text{$\not\exists j\in J_s^{(t)}(x)$ s.t.}\\ &\text{$x^{(j)}=o_2$}\\\\
    -a^{(s)}G_2^{(s,t)}(x)\sum_{i\in J_s^{(t)}(x)/\{j:x^{(j)}=o_2\}}G_j^{(s,t)}\left(\sigma_j^{(s,t)}-\sigma_2^{(s,t)}\right) & \text{if $y=-1$ and}\\&\text{$\exists j \in J_s^{(t)}(x)$ s.t.}\\ &\text{$x^{(j)}=o_2$}
\end{cases}
\end{equation}

\textbf{Components of the Neuron-gradients:}

We calculate the components of the gradient of the hidden neuron $r\in[m]$ of the expert $s\in[k]$ for the input $(x,y)$ along the task-specific patterns, i.e., $o_1$ and $o_2$ and along any task-irrelevant pattern $q\in\mathcal{P}$ at time $t$ as follows:

\begin{equation}\label{eq_a_4}
    \left\langle \cfrac{\partial l(x,y)}{\partial w_r^{(s,t)}},q\right\rangle=
\begin{cases}
    0 & \text{if $\langle w_r^{(s,t)},q\rangle<0$}\\\\
    0 & \text{if $\langle w_r^{(s,t)},q\rangle\ge0$ but $\not\exists j \in J_s^{(t)}(x)$ s.t. $x^{(j)}=q$}\\\\
    -ya^{(s)} l_q^{(s,t)}G_q^{(s,t)} & \text{if $\langle w_r^{(s,t)},q\rangle\ge0$, $\exists j \in J_s^{(t)}(x)$ s.t. $x^{(j)}=q$}
\end{cases}
\end{equation}

\begin{equation}\label{eq_a_5}
    \left\langle \cfrac{\partial l(x,y)}{\partial w_r^{(s,t)}},o_1\right\rangle=
\begin{cases}
    0 & \text{if $y=-1$}\\\\
    0 & \text{if $y=+1$ but $\langle w_r^{(s,t)},o_1\rangle<0$}\\\\
    0 & \text{if $y=+1$ and $\langle w_r^{(s,t)},o_1\rangle\ge0$ but $\not\exists j \in J_s^{(t)}(x)$ s.t. $x^{(j)}=o_1$}\\\\
    -a^{(s)}G_1^{(s,t)}(x) & \text{if $y=+1$, $\langle w_r^{(s,t)},o_1\rangle\ge0$ and $\exists j \in J_s^{(t)}(x)$ s.t. $x^{(j)}=o_1$}
\end{cases}
\end{equation}

\begin{equation}\label{eq_a_6}
    \left\langle \cfrac{\partial l(x,y)}{\partial w_r^{(s,t)}},o_2\right\rangle=
\begin{cases}
    0 & \text{if $y=+1$}\\\\
    0 & \text{if $y=-1$ but $\langle w_r^{(s,t)},o_2\rangle<0$}\\\\
    0 & \text{if $y=-1$ and $\langle w_r^{(s,t)},o_2\rangle\ge0$ but $\not\exists j \in J_s^{(t)}(x)$ s.t. $x^{(j)}=o_2$}\\\\
    a^{(s)}G_2^{(s,t)}(x) & \text{if $y=-1$, $\langle w_r^{(s,t)},o_2\rangle\ge0$ and $\exists j \in J_s^{(t)}(x)$ s.t. $x^{(j)}=o_2$}
\end{cases}
\end{equation}

\section{Proof of Lemma \ref{lemma_1}}

\begin{lemma}[Full version of the \textbf{Lemma \ref{lemma_1}}]\label{lemma_m_1}
Suppose the expert learning rate $\eta_e$ such that, the router learning rate  $\eta_r=O\left(\cfrac{\eta_eC_p}{mdl^2C_2^2}\right)$,  the batch-size $B=\Tilde{\Omega}(l^2d^2)$, and the number of iterations $T=\Omega\left( \cfrac{l^2C_2}{\eta_e}\sqrt{\cfrac{\log l}{C_p}}\right)$. Then,

    For any expert $s\in S_1$ such that $p_1^{(s,0)}=\Omega(1)$, 
    we have
    
    (i) $p_1^{(s,T)}=1$, \\
    (ii) for every $(x,+1)\sim\mathcal{D}$, $G_j^{(s,T)}(x)>1/2$, if $x^{(j)}=o_1$,\\
    (iii) $\langle w_r^{(s,T)},o_1\rangle=\Omega( lC_2\sqrt{d\log l})$, for a constant fraction  $r\in[m]$,\\ 
    (iv) $\Delta_s^{(T)}>\cfrac{3}{2}\log l$.
    
    and, for any expert $s\in S_2$ such that $p_2^{(s,0)}=\Omega(1)$, we have
    
    (v) $p_2^{(s,T)}=1$, \\
    (vi) for every $(x,-1)\sim\mathcal{D}$, $G_j^{(s,T)}(x)>1/2$, if $x^{(j)}=o_2$,\\
    (vii) $\langle w_r^{(s,T)},o_2\rangle=\Omega( lC_2\sqrt{d\log l})$, for a constant fraction  $r\in[m]$,\\ 
    (viii) $\Delta_s^{(T)}>\cfrac{3}{2}\log l$.
\end{lemma}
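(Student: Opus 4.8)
The plan is to treat the router $w_s^{(t)}$ and the hidden neurons $\{w_r^{(s,t)}\}_{r}$ of an important expert $s\in S_1$ as a coupled, self-reinforcing system, seeded by the two initial advantages the assumptions grant the task pattern $o_1$: the router separation $\langle w_s^{(0)},o_1-q\rangle=C_p>0$ and the $\Omega(1)$ fraction of neurons with $\langle w_r^{(s,0)},o_1\rangle\ge 0$. Reading off \eqref{eq_a_5}, for $a^{(s)}=+1$ and a class-$1$ sample in which $o_1$ is selected, an activated neuron gets the increment $-\eta_e\langle\partial l/\partial w_r^{(s,t)},o_1\rangle=\eta_e G_1^{(s,t)}(x)>0$, so $\langle w_r^{(s,t)},o_1\rangle$ never decreases and the neuron stays activated; this grows $\sigma_1^{(s,t)}$, which through \eqref{eq_a_2} makes the router increment along $o_1$ strictly positive (the bracket $\sigma_j-\sigma_1$ is negative once $\sigma_1$ dominates), raising the routing value $g_1$, the gating value $G_1$, and the selection frequency measured by $p_1^{(s,t)}$, which in turn accelerates the neuron growth. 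I would package this as an induction on $t\le T$ carrying the invariants: (a) a constant fraction of neurons remain activated by $o_1$ and grow monotonically; (b) $\delta_{1,q}^{(s,t)}=\sigma_1^{(s,t)}-\sigma_q^{(s,t)}\ge 0$ and increasing for every irrelevant $q$; (c) $p_1^{(s,t)}$ is nondecreasing and stays $\Omega(1)$.

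First I would analyze the neuron side in isolation. The per-step batch increment of $\langle w_r^{(s,t)},o_1\rangle$ equals $\eta_e$ times the empirical mean of $G_1^{(s,t)}(x)$ over class-$1$ samples with $o_1$ selected; using the class balance, $p_1^{(s,t)}=\Omega(1)$, and the elementary fact that a selected token carries gating at least $1/l$, this expectation is $\Omega(\eta_e/l)$. For an irrelevant pattern $q$, the neuron increment in \eqref{eq_a_4} carries the factor $-ya^{(s)}$; since each $q$ occurs in both classes with equal probability the two contributions cancel in expectation, so $\langle w_r^{(s,t)},q\rangle$ and hence $\sigma_q^{(s,t)}$ stay small, while \eqref{eq_a_6} keeps $\sigma_2^{(s,t)}$ small because an $S_1$ neuron is pushed \emph{down} along $o_2$. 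Summing the monotone increments over the $T$ steps gives $\langle w_r^{(s,T)},o_1\rangle=\Omega(\eta_e T/l)$ for the constant fraction of good neurons, and substituting $T=\Omega(l^2C_2\sqrt{\log l/C_p}/\eta_e)$ with $C_p=\Theta(1/d)$ yields exactly the bound $\Omega(lC_2\sqrt{d\log l})$ of (iii)/(vii). The batch size $B=\tilde\Omega(l^2d^2)$ enters here: a Bernstein bound makes every empirical mean concentrate around its expectation to within $o(C_p)$, uniformly over the $O(d)$ patterns and the $T$ iterations, which is what guarantees the good neurons are never deactivated and no $\sigma_q$ is spuriously amplified.

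Next I would transfer this to the router. Subtracting \eqref{eq_a_1} from \eqref{eq_a_2}, the negative-gradient drift of the routing gap $g_1-g_q=\langle w_s^{(t)},o_1-q\rangle$ is driven by terms proportional to $G_1 G_j\,\delta_{1,q}^{(s,t)}$, which are positive once invariant (b) holds, and the cross-class and irrelevant--irrelevant contributions again cancel in expectation. The small router rate $\eta_r=O(\eta_e C_p/(mdl^2C_2^2))$ produces a time-scale separation: the router moves slowly enough that the separations stay controlled while $\sigma_1^{(s,t)}$ is being built up by the faster neuron updates, yet the drift is sustained, in the same direction, for all $T$ steps. With $T$ calibrated as in the hypothesis, the accumulated gap $g_1-g_q$ crosses the threshold $\Theta(\log l)$ beyond which the softmax \eqref{eqn:gate_expert} forces $G_j^{(s,t)}(x)>1/2$ whenever $x^{(j)}=o_1$, giving (ii); once $o_1$ sits strictly atop the routing values it lands in the top-$l$ of every class-$1$ sample with gating above $1/l$, giving $p_1^{(s,T)}=1$, i.e. (i). The same router drift makes $\langle w_s^{(T)},o_1\rangle$ exceed $2\log l$, so with $\|w_s^{(0)}\|\le\tfrac12\log l$ and $\|w_s^{(T)}\|\ge\langle w_s^{(T)},o_1\rangle$ we obtain $\Delta_s^{(T)}>\tfrac32\log l$, i.e. (iv); the $S_2$ statements (v)--(viii) follow verbatim with $a^{(s)}=-1$, $o_2$ and $y=-1$ replacing $a^{(s)}=+1$, $o_1$ and $y=+1$.

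I expect the main obstacle to be closing the induction for the coupled stochastic system uniformly over all $T$ iterations at once: one must simultaneously guarantee that no good neuron is ever deactivated and the activated fraction never falls below its constant floor, that $\delta_{1,q}^{(s,t)}$ stays positive for every one of the $d-2$ irrelevant patterns, and that the $\sigma_1\!\to\!G_1\!\to\!p_1$ feedback neither stalls before $p_1=1$ nor drives the gap past the regime the concentration bound controls. All three reduce to a single uniform concentration statement --- that the batch gradients deviate from their population counterparts by less than the margins $C_p$ and $\delta_{1,q}^{(s,t)}$ that the argument is trying to grow --- which is precisely what fixes $B=\tilde\Omega(l^2d^2)$, and matching the $l,d,C_2,C_p$ dependence through the two time scales so that (iii) and (iv) hold with the stated $T$ is the delicate part of the bookkeeping.
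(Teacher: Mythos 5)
Your proposal is, in outline, the same argument as the paper's: a two-time-scale feature-learning induction in which the active neurons of an important expert gain $\Omega(\eta_e/l)$ along $o_1$ per step, the resulting dominance $\delta_{1,q}^{(s,t)}=\sigma_1^{(s,t)}-\sigma_q^{(s,t)}$ turns the router's ascent direction along $o_1$ positive, the router gap crossing the threshold $\log l$ yields (i)--(ii), summing the neuron increments over $T$ yields (iii) with exactly your arithmetic, and $\|w_s^{(T)}\|\ge\langle w_s^{(T)},o_1\rangle$ together with $\|w_s^{(0)}\|\le\frac{1}{2}\log l$ yields (iv). The roles you assign to $\eta_r$, $B$ and $T$ coincide with the paper's (its Lemmas \ref{lemma_a_2}, \ref{lemma_a_4} and \ref{lemma_a_6} formalize precisely the three mechanisms you describe).

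The genuine gap is the base case of your induction. Your invariant (b) --- $\delta_{1,q}^{(s,t)}\ge 0$ for every irrelevant $q$ --- is not available at $t=0$: the assumptions on the pre-trained expert give only an $\Omega(1)$ fraction of neurons with $\langle w_r^{(s,0)},o_1\rangle\ge 0$ and the two-sided bound $|\delta_{1,q}^{(s,0)}|=O(mC_2)$, so $\sigma_q^{(s,0)}$ may well exceed $\sigma_1^{(s,0)}$. On that initial stretch the router gradient along $o_1$ in (\ref{eq_a_2}) has uncontrolled sign, so your claim that the drift is ``sustained, in the same direction, for all $T$ steps'' is false, and invariant (c) ($p_1$ nondecreasing) has no proof either: the feedback loop you describe ignites only ``once $\sigma_1$ dominates,'' and the proposal never shows the system reaches that point with $p_1$ still $\Omega(1)$. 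The paper closes exactly this hole with an explicit first phase (Lemma \ref{lemma_a_4}): for $T''=O(lC_2/\eta_e)$ steps it uses only sign-free bounds --- per-step router motion at most $O(mC_2\eta_r)$, measured against the assumed separation $|\langle w_s^{(0)},o_1-q\rangle|=C_p$, with $\eta_r$ chosen so that the total router motion over $T''$ steps is a small fraction of $C_p$ --- to conclude that the router's ranking of patterns is \emph{frozen}, hence $p_1^{(s,t)}=p_1^{(s,0)}$ exactly, for long enough that the unconditional neuron growth drives $\delta_{1,q}^{(s,T'')}$ up to $\Omega(mC_2)$ for every $q$; only then does your invariant (b) hold and the directed gap growth begin. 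A second, smaller, gap: the sustained router drift requires $G_1^{(s,t)}(x)\bigl(1-G_1^{(s,t)}(x)\bigr)\ge 1/(4l)$ (Lemma \ref{lemma_a_6}), which is valid only while every gap $\langle w_s^{(t)},o_1-q\rangle$ stays below $2\log l$; ruling out overshoot before the \emph{last} gap crosses $\log l$ is a deterministic softmax-saturation issue that the paper settles by showing the relative gaps $\langle w_s^{(t)},q'-q\rangle$ between irrelevant patterns remain below $\log l$ (inequalities (\ref{eq: 3})--(\ref{eq: 4})), not something that reduces to batch-gradient concentration as your final paragraph suggests.
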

\begin{proof}
    For any expert $s\in S_1$ such that $p_1^{(s,0)}=\Omega(1)$, from Lemma \ref{lemma_a_4}, for any task-irrelevant pattern $q$, $\delta_{1,q}^{(s,T^{\prime\prime})}=\Omega\left(mC_2\right)$ and $|\delta_{(2,q)}^{(s,T^{\prime\prime})}|=O(mC_2)$. Furthermore, for any different two task-irrelevant pattern $q$ and $q^\prime$, $|\delta_{q,q^\prime}^{(s,T^{\prime\prime})}|=O(mC_2)$\\

Therefore, $\langle \cfrac{\partial l}{\partial w_s^{(T^{\prime\prime})}},o_1\rangle\le-\Omega(\cfrac{mC_2}{l})+\Tilde{O}(\cfrac{mC_2}{l\sqrt{B}})$,\\ as from Lemma \ref{lemma_a_4}, $p_1^{(s,T^{\prime\prime})}=p_1^{(s,0)}$, $\forall q$ s.t. $\langle w_s^{(T^{\prime\prime})},q\rangle<\langle w_s^{(T^{\prime\prime})},o_1\rangle$; $\langle w_s^{(T^{\prime\prime})},o_1-q\rangle<2\log l$ which implies from Lemma \ref{lemma_a_6} that, $\forall (x,y=+1)\sim\mathcal{D}$ s.t. $\exists j\in J_s^{(T^{\prime\prime})}(x)$ with $x^{(j)}=q$, $G_1^{(s,T^{\prime\prime})}(x)(1-G_1^{(s,T^{\prime\prime})}(x))\ge\cfrac{1}{4l}$.\\

Therefore, $\langle w_s^{(T^{\prime\prime}+1)},o_1\rangle\ge\langle w_s^{(T^{\prime\prime})},o_1\rangle+\Omega(\cfrac{mC_2}{l}\eta_r)-\Tilde{O}(\cfrac{mC_2}{l\sqrt{B}}\eta_r)$.\\

Now, for any $q$, $\langle \cfrac{\partial l}{\partial w_s^{(T^{\prime\prime})}},q\rangle\ge-O(\cfrac{mC_2}{d})-\Tilde{O}{(\cfrac{mC_2}{\sqrt{B}})}$.\\

Therefore, for any $q$, $\langle w_s^{(T^{\prime\prime}+1)},q\rangle\le\langle w_s^{(T^{\prime\prime})},q\rangle+O(\cfrac{mC_2}{d}\eta_r)+\Tilde{O}({\cfrac{mC_2}{\sqrt{B}}}\eta_r)$.\\

Therefore, for any $q$, $\langle w_s^{(T^{\prime\prime}+1)},o_1-q\rangle\ge\langle w_s^{(T^{\prime\prime})},o_1-q\rangle+\Omega(\cfrac{mC_2}{l}\eta_r)$ as we are selecting $B=\Tilde{\Omega}(l^2d^2)$.\\

Now, from Lemma \ref{lemma_a_4}, for any $q$ s.t. $\langle w_s^{(T^{\prime\prime})},q\rangle<\langle w_s^{(T^{\prime\prime})},o_1\rangle$, $\langle w_s^{(T^{\prime\prime})},o_1-q\rangle=\Omega(C_p)$, which implies $\langle w_s^{(T^{\prime\prime}+1)},o_1-q\rangle\ge\Omega(C_p)+\Omega(\cfrac{mC_2}{l}\eta_r)$.\\

Therefore, for any $q$ s.t. $\langle w_s^{(T^{\prime\prime})},q\rangle<\langle w_s^{(T^{\prime\prime})},o_1\rangle$, $\langle w_s^{(T^{\prime\prime}+1)},o_1\rangle>\langle w_s^{(T^{\prime\prime}+1)},q\rangle$ and hence $p_1^{(s,T^{\prime\prime}+1)}\ge p_1^{(s,T^{\prime\prime})}=p_1^{(s,0)}$.\\

Now, as $p_1^{(s,T^{\prime\prime})}=p_1^{(s,0)}$, using the same procedure as in the proof of Lemma \ref{lemma_a_4}, $\forall q$, $ \delta_{1,q}^{(s,T^{\prime\prime}+1)}=\delta_{1,q}^{(s,T^{\prime\prime})}+\Omega(\cfrac{m\eta_e}{l})$ and, $\delta_{2,q}^{(s,T^{\prime\prime}+1)}\le\delta_{2,q}^{(s,T^{\prime\prime})}=O(mC_2)$. Furthermore, for any two different task-irrelevant patterns, $|\delta_{q,q^\prime}^{(s,T^{\prime\prime}+1)}-\delta_{q,q^\prime}^{(s,T^{\prime\prime})}|=O(\cfrac{m}{d}\eta_e)$.\\

Hence, by induction, $\forall t\ge T^{\prime\prime}$ such that, for any $q$ with $\langle w_s^{(t)},q\rangle<\langle w_s^{(t)},o_1\rangle$ satisfies $\langle w_s^{(t)},o_1-q\rangle\le2\log l$, $p_1^{(s,t)}\ge p_1^{(s,0)}$ and hence,\\

for all $q$,
\begin{equation}\label{eq: 1}
    \delta_{1,q}^{(s,t+1)}-\delta_{1,q}^{(s,t)}=\Omega(\cfrac{m\eta_e}{l})
\end{equation}
$\delta_{2,q}^{(s,t)}=O(mC_2)$, $|\delta_{q,q^\prime}^{(s,t)}|=O(mC_2)$ and hence for all $q$,
\begin{equation}\label{eq: 2}
    \langle w_s^{(t+1)},o_1-q\rangle\ge\langle w_s^{(T^{\prime\prime})},o_1-q\rangle+\Omega(\cfrac{1}{l}mC_2\eta_r)(t-T^{\prime\prime})+\Omega(\cfrac{1}{l^2}m\eta_e\eta_r)\cfrac{(t-T^{\prime\prime})^2-(t-T^{\prime\prime})}{2}
\end{equation}

Now, from Lemma \ref{lemma_a_4}, for all $q$, $\langle w_s^{(T^{\prime\prime})},o_1-q\rangle<\log l$. Let us assume that upto $T$ iterations, for all $q$, $\langle w_s^{(T)},o_1-q\rangle\le2\log l$.\\

Now, if $\exists q^\prime$ s.t. $\langle w_s^{(T)},o_1-q^\prime\rangle\le\log l$, then $\forall q\neq q^\prime$,
\begin{equation}\label{eq: 3}
    \langle w_s^{(T)},o_1-q\rangle\le\langle w_s^{(T)},o_1-q^\prime\rangle+\langle w_s^{(T)},q^\prime-q\rangle\le\langle w_s^{(T)},q^\prime-q\rangle+\log l
\end{equation}

On the other hand, from inequality (\ref{eq: 2}), by selecting $T=\Omega(\cfrac{l^2C_2}{\eta_e}\sqrt{\cfrac{\log l}{C_p}})$ we get,\\ $\langle w_s^{(T+1)},o_1-q^\prime\rangle>\log l$.\\

Now, $\forall q$, and $\forall t$, $\delta_{1,q}^{(s,t)}-\delta_{1,q}^{(s,t-1)}=O(m\eta_e)$ and $\delta_{2,q}^{(s,t-1)}-\delta_{2,q}^{(s,t)}=O(m\eta_e)$ and $|\delta_{q,q^\prime}^{(s,t-1)}-\delta_{q,q^\prime}^{(s,t)}|=O(\cfrac{m}{d}\eta_e)$.\\

Hence, $\langle w_s^{(t+1)},q\rangle\ge\langle w_s^{(0)},q\rangle-O(\cfrac{mC_2}{d}\eta_r)t-O(\cfrac{m}{d}\eta_e\eta_r)\cfrac{t^2-t}{2}$.\\

On the other hand, $\langle w_s^{(t+1)},q^\prime\rangle\le\langle w_s^{(0)},q^\prime\rangle+O(\cfrac{mC_2}{d}\eta_r)t$.\\

Therefore,
\begin{equation}\label{eq: 4}
    \langle w_s^{(t+1)},q^\prime-q\rangle\le\langle w_s^{(0)},q^\prime-q\rangle+O(\cfrac{mC_2}{d}\eta_r)t+O(\cfrac{m}{d}\eta_e\eta_r)t^2
\end{equation}

Now, for the choice of $T=O(\cfrac{l^2C_2}{\eta_e}\sqrt{\cfrac{\log l}{C_p}})$ and as $\langle w_s^{(0)},q^\prime-q\rangle\le\cfrac{1}{2}\log l$, using inequality (\ref{eq: 4}) we get,
\begin{align*}
    \langle w_s^{(T)},q^\prime-q\rangle\le\cfrac{1}{2}\log l+O(\cfrac{l^2}{d}\log l)\le\log l
\end{align*}

Therefore, from inequality (\ref{eq: 3}), for the choice of $T=O(\cfrac{l^2C_2}{\eta_e}\sqrt{\cfrac{\log l}{C_p}})$, $\forall q$, $\langle w_s^{(T)},o_1-q\rangle\le2\log l$ holds.\\

Now, as for any $q$, $\langle w_s^{(T+1)},o_1-q\rangle>\log l$, $p_1^{(s,T+1)}=1$ and $\forall (x,y=+1)\sim\mathcal{D}$, $G_1^{(s,T+1)}(x)>\cfrac{1}{2}$.\\

Now, as for any $t$, $p_1^{(s,T^{\prime\prime})}\ge p_1^{(s,0)}$, at least for $\Omega(1)$ fraction of $r\in[m]$ of $s$, $\langle w_r^{(s,t)},o_1\rangle\ge\langle w_r^{s,0},o_1\rangle+\Omega\left(\cfrac{\eta_e}{l}t\right)$. Therefore, for $T=O(\cfrac{l^2C_2}{\eta_e}\sqrt{\cfrac{\log l}{C_p}})$, $\langle w_r^{(s,T)},o_1\rangle=\Omega( lC_2\sqrt{d\log l})$ for $\Omega(1)$ fraction of $r\in[m]$ of $s$.  

Now, $\left|\left| w_s^{(T+1)}\right|\right|=\left(\langle w_s^{(T+1)},o_1\rangle^2+\langle w_s^{(T+1)},o_2\rangle^2+\sum_{i=1}^{d-2}\langle w_s^{(T+1)},q_i\rangle^2\right)^{\cfrac{1}{2}}\ge\langle w_s^{(T+1)},o_1\rangle>2\log l$ where, $q_i$ denotes the $i$-th task-irrelevant pattern. Now, as $\left|\left|w_s^{(0)}\right|\right|\le\cfrac{1}{2}\log l$, $\Delta_s^{(T)}>\cfrac{3}{2}\log l$.

Similarly, using Lemma \ref{lemma_a_5}, for $s\in S_2$ such that, $p_2^{(s,0)}=\Omega(1)$, we can proof the statements (v), (vi), (vii) and (viii).
\end{proof}

\section{Lemmas Used to Prove the Lemma \ref{lemma_1}}
\begin{lemma}\label{lemma_a_1}
    Let, $S\subset \mathcal{D}$ such that $p:=\mathbb{P}\left[(x,y)\sim{D}: (x,y)\in S\right]$. Then, w.h.p. over any randomly sampled batch $\mathcal{B}_t$ of size $B$ at the iteration $t$, $\bigg|\big|\mathcal{B}_t\cap S\big|-Bp\bigg|=\Tilde{O}\left(\sqrt{B}\right)$.
\end{lemma}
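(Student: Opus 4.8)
The plan is to treat $|\mathcal{B}_t \cap S|$ as a sum of i.i.d.\ Bernoulli random variables and apply a standard concentration inequality. Writing the batch as $\mathcal{B}_t = \{(x_i,y_i)\}_{i=1}^{B}$ with each sample drawn independently from $\mathcal{D}$, I would introduce the indicators $X_i := \mathbf{1}[(x_i,y_i) \in S]$, so that $|\mathcal{B}_t \cap S| = \sum_{i=1}^{B} X_i$. Each $X_i$ is Bernoulli with $\mathbb{E}[X_i] = \mathbb{P}[(x,y)\sim\mathcal{D}: (x,y)\in S] = p$, and by linearity of expectation $\mathbb{E}\big[|\mathcal{B}_t \cap S|\big] = Bp$. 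The quantity to control is therefore the deviation of a bounded i.i.d.\ sum from its mean.

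Next I would apply Hoeffding's inequality. Since each $X_i \in [0,1]$ and the $X_i$ are independent,
\begin{equation*}
\mathbb{P}\left[\Big|\sum_{i=1}^{B} X_i - Bp\Big| \ge \tau\right] \le 2\exp\!\left(-\frac{2\tau^2}{B}\right).
\end{equation*}
Choosing $\tau = \Theta\big(\sqrt{B\,\log(\mathrm{poly}(d))}\big)$ makes the right-hand side at most $2/\mathrm{poly}(d)$, which is exactly the failure probability tolerated by the \emph{w.h.p.}\ convention. Since $\sqrt{\log(\mathrm{poly}(d))}$ is a factor that $\Tilde{O}(\cdot)$ suppresses, this deviation is $\Tilde{O}(\sqrt{B})$, establishing $\big||\mathcal{B}_t \cap S| - Bp\big| = \Tilde{O}(\sqrt{B})$ with high probability.

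There is essentially no deep technical obstacle here; the statement is a textbook concentration bound and should be dispatched in a few lines. The only point requiring a little care is the sampling model. If batches are drawn without replacement from a finite training pool rather than as fresh i.i.d.\ draws from $\mathcal{D}$, the indicators $X_i$ are no longer independent. In that case I would instead invoke the version of Hoeffding's bound valid for sampling without replacement (which holds verbatim), or a Serfling-type refinement, or equivalently note that the indicators are negatively associated so the same sub-Gaussian tail applies. Each of these routes yields an identical $\Tilde{O}(\sqrt{B})$ deviation, so the conclusion is robust to the precise sampling convention used by the SGD updates in Algorithm~\ref{alg_1}.
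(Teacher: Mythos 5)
Your proposal is correct and follows essentially the same route as the paper: represent $|\mathcal{B}_t \cap S|$ as a sum of i.i.d.\ Bernoulli indicators with mean $p$ and apply Hoeffding's inequality, choosing the deviation threshold so the failure probability is $1/\mathrm{poly}(d)$. In fact you spell out the choice of $\tau$ and the tail bound more explicitly than the paper does, and your remark on sampling without replacement is a sensible (though unneeded here) robustness note.
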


\begin{proof}
    Let us define a random variable $X$ associated with any sample $(x,y)\sim\mathcal{D}$ such that,\\ $X:=\begin{cases}
       1 & \text{if $(x,y)\in S $}\\
       0 & \text{if $(x,y)\not\in S$}
    \end{cases}$\\
    
    Therefore, $X\sim\text{Ber}(p)$.\\
    
    Now, for any randomly sampled batch $\mathcal{B}_t:=\{(x_1,y_1),(x_2,y_2), ..., (x_B,y_B)\}$ of size $B$, we can denote the $B$ i.i.d. random variables following the same distribution as $X$ by $X_1, X_2, ..., X_B$ corresponding to the $B$ samples of the batch, respectively.\\
    
    Therefore, $\big|\mathcal{B}_t\cap S \big|=\sum_{i=1}^BX_i$.\\
    
    Now, $\mathbb{E}\left[\big|\mathcal{B}_t\cap S \big|\right]=\sum_{i=1}^B\mathbb{E}\left[X_i\right]=Bp$.\\
    
    Therefore, using the Hoeffding's inequality, $\mathbb{P}\left[\bigg|\big|\mathcal{B}_t\cap S\big|-Bp\bigg|=\Tilde{O}\left(\sqrt{B}\right)\right]\ge1-\cfrac{1}{\text{poly}(d)}$ which completes the proof.
\end{proof}

\begin{lemma}\label{lemma_a_2}
    For any expert $s\in S_1$ such that $p_1^{(s,0)}=\Omega(1)$, \textit{w.h.p.} over a randomly sampled batch of size $B$ we can ensure that,

    (i) $\left|\langle \cfrac{\partial l}{\partial w_s^{(0)}},q\rangle\right|\le O\left(\cfrac{mC_2}{d}\right)+\Tilde{O}\left(\cfrac{mC_2}{\sqrt{B}}\right)$\\
    (ii)$\left|\langle \cfrac{\partial l}{\partial w_s^{(0)}},o
_1\rangle\right|\le O\left(mC_2\right)+\Tilde{O}\left(\cfrac{mC_2}{\sqrt{B}}\right)$\\
(iii) $\langle \cfrac{\partial l}{\partial w_r^{(s,0)}},o_1\rangle\le-\cfrac{1}{2l}+\Tilde{O}\left(\cfrac{1}{l\sqrt{B}}\right)$ and $\langle \cfrac{\partial l}{\partial w_r^{(s,0)}},o_1\rangle\ge-\cfrac{1}{2}-\Tilde{O}\left(\cfrac{1}{\sqrt{B}}\right)$\\
(iv) $\langle \cfrac{\partial l}{\partial w_r^{(s,0)}},o_2\rangle\ge0$ and $\langle \cfrac{\partial l}{\partial w_r^{(s,0)}},o_2\rangle\le\cfrac{1}{2}+\Tilde{O}\left(\cfrac{1}{\sqrt{B}}\right)$\\
(v) $\left|\langle \cfrac{\partial l}{\partial w_r^{(s,0)}},q\rangle\right|\le O\left(\cfrac{1}{d}\right)+\Tilde{O}\left(\cfrac{1}{\sqrt{B}}\right)$

\end{lemma}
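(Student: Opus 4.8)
The plan is to write each batch-gradient component as the empirical average $\frac{1}{B}\sum_{(x,y)\in\mathcal{B}_0}\langle \partial l^{(0)}(x,y)/\partial w,\,v\rangle$ of the per-sample components supplied by the explicit formulas (\ref{eq_a_1})--(\ref{eq_a_6}), and then to control each average by three ingredients: a uniform bound on the per-sample term, a bound on its population mean coming from the data model, and Hoeffding's inequality in the form of Lemma \ref{lemma_a_1}. The per-sample terms are easy to bound at $t=0$ because $\mathcal{P}$ is orthonormal and $\|w_r^{(s,0)}\|\le C_2$, so every inner product $\langle w_r^{(s,0)},\text{pattern}\rangle$ is at most $C_2$, whence $\sigma_j^{(s,0)},\sigma_q^{(s,0)}\le mC_2$ and $|\sigma_j^{(s,0)}-\sigma_q^{(s,0)}|\le mC_2$; moreover $\sum_{j}G_j^{(s,0)}\le1$ and $l_q^{(s,0)}G_q^{(s,0)}\le1$.

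For the router components I would separate patterns by how often they are routed. A task-irrelevant $q$ is present in a sample with probability $O(1/d)$, so the per-sample term in (\ref{eq_a_1}), bounded in magnitude by $l_q^{(s,0)}G_q^{(s,0)}\cdot(\sum_j G_j^{(s,0)})\cdot mC_2=O(mC_2)$, is nonzero on only an $O(1/d)$ fraction of samples; its mean is therefore $O(mC_2/d)$, and Lemma \ref{lemma_a_1} adds a deviation $\tilde O(mC_2/\sqrt{B})$, giving (i). For $o_1$ the per-sample term in (\ref{eq_a_2}) is again $O(mC_2)$ in magnitude but now nonzero on a constant fraction of class-$1$ samples, so the mean is $O(mC_2)$ and the deviation $\tilde O(mC_2/\sqrt{B})$, giving (ii); note this direction uses only $G_1^{(s,0)}\le1$, not importance.

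For the neuron components the sign is pinned down by $a^{(s)}=+1$ (as $s\in S_1$). By (\ref{eq_a_5}), on a neuron with $\langle w_r^{(s,0)},o_1\rangle\ge0$ the per-sample term equals $-G_1^{(s,0)}(x)\in[-1,0]$ on class-$1$ samples that route $o_1$ and is $0$ otherwise; its mean is negative, of magnitude at most $\tfrac12$ (since $G_1^{(s,0)}\le1$ and $\mathbb{P}[y=+1]=\tfrac12$ by class balance) and at least $\tfrac12\cdot\tfrac1l\,p_1^{(s,0)}=\Omega(1/l)$, the lower bound using the importance assumption $p_1^{(s,0)}=\Omega(1)$ on whose event $G_1^{(s,0)}\ge1/l$; Lemma \ref{lemma_a_1} then yields the two-sided bound (iii). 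Claim (iv) follows because, with $a^{(s)}=+1$, the per-sample term in (\ref{eq_a_6}) equals $+G_2^{(s,0)}(x)\ge0$, so the average is $\ge0$ term-wise, while $G_2^{(s,0)}\le1$ and $\mathbb{P}[y=-1]=\tfrac12$ give the $\le\tfrac12+\tilde O(1/\sqrt{B})$ upper bound. Claim (v) mirrors (i) at the neuron level: by (\ref{eq_a_4}) the term is $\mp\,l_q^{(s,0)}G_q^{(s,0)}$, of magnitude $\le1$, nonzero only on the $O(1/d)$ fraction of samples routing $q$, giving mean $O(1/d)$ and deviation $\tilde O(1/\sqrt{B})$.

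The routine part is this magnitude-times-probability bookkeeping together with Hoeffding. The one delicate point I expect is the \emph{lower} bound in (iii): showing the $o_1$-gradient is bounded away from $0$ at the $\Omega(1/l)$ scale (written $-1/(2l)$) is exactly where the pre-training importance assumption $p_1^{(s,0)}=\Omega(1)$ enters, since it guarantees a constant fraction of class-$1$ samples on which $o_1$ is routed with gating value at least $1/l$; matching the advertised $\tilde O(1/(l\sqrt{B}))$ deviation there requires applying the concentration to the routed-sample count and carrying the $1/l$ gating scale, rather than a crude $[0,1]$ Hoeffding bound. This sign-and-scale separation between the $o_1$ direction (large, negative) and the irrelevant directions (small) is precisely what the downstream feature-learning argument of Lemma \ref{lemma_m_1} exploits.
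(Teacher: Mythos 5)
Your proposal follows essentially the same route as the paper's proof: write each batch gradient as an average of the per-sample components given by (\ref{eq_a_1})--(\ref{eq_a_6}), bound each per-sample term by $O(mC_2)$ (router directions) or $O(1)$ (neuron directions) using orthonormality of $\mathcal{P}$ and $\|w_r^{(s,0)}\|\le C_2$, restrict to the event sets on which the term is nonzero (probability $O(1/d)$ for a task-irrelevant $q$, constant via $p_1^{(s,0)}=\Omega(1)$ for $o_1$), and invoke Lemma \ref{lemma_a_1} on the batch counts. Your treatment of the delicate point in (iii) --- applying the concentration to the routed-sample count and carrying the $1/l$ gating scale, with the $p_1^{(s,0)}=\Omega(1)$ factor absorbed into the constant, and restricting to neurons with $\langle w_r^{(s,0)},o_1\rangle\ge 0$ --- matches what the paper does (implicitly, and as it is later used in Lemma \ref{lemma_a_4}).
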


\begin{proof}
    For any task-irrelevant pattern $q$,

$|\delta_{1,q}^{(s,0)}|=|\sigma_1^{(s,0)}-\sigma_q^{(s,0)}|=|\sum_{r=1}^m\text{ReLU}(\langle w_r^{(s,0)},o_1\rangle)-\sum_{r=1}^m\text{ReLU}(\langle w_r^{(s,0)},q\rangle)|= O(mC_2)$

Similarly, $|\delta_{2,q}^{(s,0)}|= O(mC_2)$ and for any two different task-irrelevant patterns $q,q^\prime\in\mathcal{P}$, $\left|\delta_{q,q^\prime}^{(s,t)}\right|=O(mC_2)$.\\\\

We denote $\mathcal{B}_0$ as the randomly sampled batch before the first update of SGD.\\\\
(i)\\

$\displaystyle\langle \cfrac{\partial l}{\partial w_s^{(0)}},q\rangle=\cfrac{1}{B}\sum_{x\in\mathcal{B}_0}\langle \cfrac{\partial l(x,y)}{\partial w_s^{(0)}},q\rangle$\\

Let us define the set $S_q:=\{(x,y)\sim\mathcal{D}:\exists j\in J_s^{(0)} \text{ s.t. } x^{(j)}=q\}$. Then, $p_q^{(s,0)}:=\mathbb{P}\left[(x,y)\sim\mathcal{D}:(x,y)\in S_q\right]$\\ Here, $p_q^{(s,0)}=O(\cfrac{1}{d})$\\

Therefore, $\displaystyle\langle \cfrac{\partial l}{\partial w_s^{(0)}},q\rangle=\cfrac{1}{B}\sum_{x\in\mathcal{B}_0\cap S_q}\langle \cfrac{\partial l(x,y)}{\partial w_s^{(0)}},q\rangle+\cfrac{1}{B}\sum_{x\in\mathcal{B}_0\cap S^\prime_q}\langle \cfrac{\partial l(x,y)}{\partial w_s^{(0)}},q\rangle$\\

Now, from equation (\ref{eq_a_1}), for any $(x,y)\in \mathcal{D}\backslash S_q$, $\langle \cfrac{\partial l(x,y)}{\partial w_s^{(0)}},q\rangle=0$\\

Therefore, $\displaystyle\langle \cfrac{\partial l}{\partial w_s^{(0)}},q\rangle=\cfrac{1}{B}\sum_{x\in\mathcal{B}_0\cap S_q}\langle \cfrac{\partial l(x,y)}{\partial w_s^{(0)}},q\rangle$\\

Now, as $s\in S_1$, $a^{(s)}=+1$\\
Furthermore, for any $(x,y)$, $l_q^{(s,0)}(x)G_q^{(s,0)}(x)\le1$\\
Therefore, as $|\delta_{1,q}^{(s,0)}|=O(mC_2)$, $|\delta_{2,q}^{(s,0)}|= O(mC_2)$ and $|\delta_{q,q^\prime}^{(s,0)}|= O(mC_2)$, from equation (\ref{eq_a_1}), $\displaystyle\left|\langle \cfrac{\partial l}{\partial w_s^{(0)}},q\rangle\right|\le\cfrac{\left|\mathcal{B}_0\cap S_q\right|}{B}O(mC_2)$\\
Now, from Lemma \ref{lemma_a_1}, \textit{w.h.p.} $\cfrac{\left|\mathcal{B}_0\cap S_q\right|}{B}\le p_q^{(s,0)}+\Tilde{O}\left(\cfrac{1}{\sqrt{B}}\right)$ which implies,\\ $\left|\langle \cfrac{\partial l}{\partial w_s^{(0)}},q\rangle\right|\le O\left(\cfrac{mC_2}{d}\right)+\Tilde{O}\left(\cfrac{mC_2}{\sqrt{B}}\right)$\\\\

(ii) Using equation (\ref{eq_a_2}) and the fact that $p_1^{(s,0)}=\Omega(1)$, by following the same procedure as in the proof of statement (i), we can complete the proof.\\\\

(iii) Using equation (\ref{eq_a_5}) and the fact that $p_1^{(s,0)}=\Omega(1)$,  by following the same procedure as in the proof of the statement (i) we can complete the proof.\\\\

(iv) As $p_2^{(s,0)}\ge0$, using equation (\ref{eq_a_6}) and the same procedure as in the proof of the statement (i) we can show that $\langle \cfrac{\partial l}{\partial w_r^{(s,0)}},o_2\rangle\ge0$. Similarly, as $G_2^{(s,0)}(x)\le 1$ we can show that $\langle \cfrac{\partial l}{\partial w_r^{(s,0)}},o_2\rangle\le\cfrac{1}{2}+\Tilde{O}\left(\cfrac{1}{\sqrt{B}}\right)$.\\

(v) Using equation (\ref{eq_a_4}) and the fact that for any $(x,y)\sim\mathcal{D}$, $l_q^{(s,t)}(x)G_q^{(s,t)}(x)\le1$ and by following the same procedure as in the proof of the statement (i) we can complete the proof.
\end{proof}

\begin{lemma}\label{lemma_a_3}
    For any expert $s\in S_2$ such that $p_2^{(s,0)}=\Omega(1)$, \textit{w.h.p.} over a randomly sampled batch of size $B$ we can ensure that,

    (i) $\left|\langle \cfrac{\partial l}{\partial w_s^{(0)}},q\rangle\right|\le O\left(\cfrac{mC_2}{d}\right)+\Tilde{O}\left(\cfrac{mC_2}{\sqrt{B}}\right)$\\
    (ii) $\left|\langle \cfrac{\partial l}{\partial w_s^{(0)}},o
_2\rangle\right|\le O\left(mC_2\right)+\Tilde{O}\left(\cfrac{mC_2}{\sqrt{B}}\right)$\\
(iii) $\langle \cfrac{\partial l}{\partial w_r^{(s,0)}},o_1\rangle\le\cfrac{1}{2}+\Tilde{O}\left(\cfrac{1}{\sqrt{B}}\right)$ and $\langle \cfrac{\partial l}{\partial w_r^{(s,0)}},o_1\rangle\ge0$\\
(iv) $\langle \cfrac{\partial l}{\partial w_r^{(s,0)}},o_2\rangle\ge-\cfrac{1}{2}-\Tilde{O}\left(\cfrac{1}{\sqrt{B}}\right)$ and $\langle \cfrac{\partial l}{\partial w_r^{(s,0)}},o_2\rangle\le-\cfrac{1}{2l}+\Tilde{O}\left(\cfrac{1}{l\sqrt{B}}\right)$\\
(v) $\left|\langle \cfrac{\partial l}{\partial w_r^{(s,0)}},q\rangle\right|\le O\left(\cfrac{1}{d}\right)+\Tilde{O}\left(\cfrac{1}{\sqrt{B}}\right)$
\end{lemma}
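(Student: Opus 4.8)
The plan is to treat Lemma~\ref{lemma_a_3} as the exact class-2 mirror of Lemma~\ref{lemma_a_2}. The conceptual reason a mirror works is a symmetry of the setup: the data model is invariant under the relabeling $y\mapsto -y$, $o_1\leftrightarrow o_2$ (since each $q_i$ appears in both classes with equal probability and $o_1,o_2$ play symmetric roles), and under the joint map $(y,a^{(s)})\mapsto(-y,-a^{(s)})$ the factor $-ya^{(s)}$ appearing in both the router gradient and the neuron gradient is invariant, because $(-y)(-a^{(s)})=ya^{(s)}$. Passing from $S_1$ to $S_2$ flips $a^{(s)}$ from $+1$ to $-1$, so an expert $s\in S_2$ with $p_2^{(s,0)}=\Omega(1)$ behaves toward $o_2$ exactly as a $p_1^{(s,0)}=\Omega(1)$ expert of $S_1$ behaves toward $o_1$. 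I would first record the initialization bounds $|\delta_{1,q}^{(s,0)}|,|\delta_{2,q}^{(s,0)}|,|\delta_{q,q'}^{(s,0)}|=O(mC_2)$, which follow from $\|w_r^{(s,0)}\|\le C_2$, the orthonormality of $\mathcal{P}$ (so each component $\langle w_r^{(s,0)},\cdot\rangle$ along a single pattern is $O(C_2)$), and summing the bounded $\mathrm{ReLU}$ terms over the $m$ neurons.

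For statements (i) and (ii), which are two-sided magnitude bounds, the sign of $a^{(s)}$ is irrelevant and the arguments are identical to Lemma~\ref{lemma_a_2}(i),(ii). For (i) I would expand the batch router-gradient using (\ref{eq_a_1}), restrict the sum to the event $S_q:=\{(x,y)\sim\mathcal{D}:\exists j\in J_s^{(0)}\text{ s.t. }x^{(j)}=q\}$ (the component vanishes off this event), bound each surviving term by $O(mC_2)$ via $l_q^{(s,0)}G_q^{(s,0)}\le 1$ together with the delta bounds, and then convert the empirical fraction $|\mathcal{B}_0\cap S_q|/B$ into $p_q^{(s,0)}+\Tilde{O}(1/\sqrt B)=O(1/d)+\Tilde{O}(1/\sqrt B)$ using the Hoeffding estimate of Lemma~\ref{lemma_a_1}. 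For (ii) the same decomposition is run on (\ref{eq_a_3}), the $o_2$ router-component, which is nonzero only for $y=-1$; here the relevant event has probability $\Theta(p_2^{(s,0)})=\Theta(1)$, yielding the $O(mC_2)+\Tilde{O}(mC_2/\sqrt B)$ bound.

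The sign-sensitive content is (iii), (iv), (v). Because $a^{(s)}=-1$, (\ref{eq_a_5}) gives the neuron-$o_1$ component $-a^{(s)}G_1^{(s,0)}(x)=+G_1^{(s,0)}(x)\ge0$ on the activating event, so the batch average is non-negative and, using $G_1^{(s,0)}\le 1$ and $\mathbb{P}[y=+1]=\tfrac{1}{2}$, at most $\tfrac{1}{2}+\Tilde{O}(1/\sqrt B)$; this is precisely the sign reversal relative to Lemma~\ref{lemma_a_2}(iii). Symmetrically, (\ref{eq_a_6}) gives the neuron-$o_2$ component $a^{(s)}G_2^{(s,0)}(x)=-G_2^{(s,0)}(x)\le0$, so the lower bound $-\tfrac{1}{2}-\Tilde{O}(1/\sqrt B)$ follows from $G_2^{(s,0)}\le1$ and the class probability, while the upper bound $-\tfrac{1}{2l}+\Tilde{O}(1/(l\sqrt B))$ follows because $p_2^{(s,0)}=\Omega(1)$ forces a constant fraction of class-2 samples to have $o_2$ selected with $G_2^{(s,0)}\ge 1/l$, producing a drift of order $1/l$. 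Statement (v) is the neuron-$q$ bound from (\ref{eq_a_4}), controlled identically to Lemma~\ref{lemma_a_2}(v) via $l_q^{(s,0)}G_q^{(s,0)}\le 1$ and $p_q^{(s,0)}=O(1/d)$, with Lemma~\ref{lemma_a_1} handling the empirical fraction.

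The calculation has no genuine obstacle; the only point requiring care is bookkeeping the sign $a^{(s)}=-1$ so that the neuron-gradient components along $o_1$ and $o_2$ emerge with the correct opposite signs in (iii) and (iv)—non-negative along $o_1$ and negative along $o_2$—which is exactly the feature distinguishing a negatively-connected important expert from the positively-connected case treated in Lemma~\ref{lemma_a_2}.
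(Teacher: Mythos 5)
Your proposal is correct and takes essentially the same approach as the paper: the paper's own proof of Lemma~\ref{lemma_a_3} is a one-line statement that the result follows by applying Lemma~\ref{lemma_a_1} and repeating the procedure of Lemma~\ref{lemma_a_2}, which is precisely the mirror argument you spell out. Your sign bookkeeping---the neuron-gradient components along $o_1$ and $o_2$ flipping sign relative to Lemma~\ref{lemma_a_2} because $a^{(s)}=-1$, with (i), (ii), (v) unchanged as two-sided magnitude bounds---correctly fills in the details the paper leaves implicit.
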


\begin{proof}
    Using the Lemma \ref{lemma_a_1} and following the same procedure as in Lemma \ref{lemma_a_2}, we can complete the proof.
\end{proof}

\begin{lemma}\label{lemma_a_4}
    For any expert $s\in S_1$ such that $p_1^{(s,0)}=\Omega(1)$, by selecting $\eta_r=O\left(\cfrac{\eta_eC_p}{ml^2C_2^2}\right)$ and $B=\Tilde{\Omega}\left(l^2d^2\right)$, we can ensure that after $T^{\prime\prime}=O\left(\cfrac{lC_2}{\eta_e}\right)$ iterations,
    
    (i) for any task-irrelevant pattern $q$, $\delta_{1,q}^{(s,T^{\prime\prime})}=\Omega\left(mC_2\right)$, $|\delta_{2,q}^{(s,T^{\prime\prime})}|=O(mC_2)$ and, for any different two task-irrelevant pattern $q$ and $q^\prime$ $|\delta_{q,q^\prime}^{(s,T^{\prime\prime})}|=O(mC_2)$\\
    (ii) for any task-irrelevant pattern $q$ such that $\langle w_s^{(0)},q\rangle<\langle w_s^{(0)},o_1\rangle$, $\langle w_s^{(T^{\prime\prime})},o_1\rangle-\langle w_s^{(T^{\prime\prime})},q\rangle=\Omega(C_p)$ and $\langle w_s^{(T^{\prime\prime})},o_1\rangle-\langle w_s^{(T^{\prime\prime})},q\rangle<\log l$\\
    (iii) for any task-irrelevant pattern $q$ such that $\langle w_s^{(0)},q\rangle>\langle w_s^{(0)},o_1\rangle$, $\langle w_s^{(T^{\prime\prime})},q\rangle-\langle w_s^{(T^{\prime\prime})},o_1\rangle=\Omega(C_p)$ and $\langle w_s^{(T^{\prime\prime})},q\rangle-\langle w_s^{(T^{\prime\prime})},o_1\rangle<\log l$\\
    (iii) $p_1^{(s,T^{\prime\prime})}=p_1^{(s,0)}$
\end{lemma}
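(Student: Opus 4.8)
The plan is to establish all four parts by a joint induction over $t = 0, 1, \ldots, T''$, maintaining two coupled invariants: that the router of expert $s$ stays essentially frozen (its components along every pattern drift by at most $O(C_p/l)$ from initialization), and that consequently the routing decisions — hence $p_1^{(s,t)}$ — do not change while the neurons slowly accumulate mass along $o_1$. The choice $\eta_r = O(\eta_e C_p/(ml^2 C_2^2))$ together with $T'' = O(lC_2/\eta_e)$ is exactly what makes this separation of timescales work: the neurons move fast enough to build an $\Omega(mC_2)$ gap in the $\sigma$-values, while the router moves too slowly to disturb the top-$l$ selection.

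First I would control the router drift. Reading off the router-gradient components from (\ref{eq_a_1})--(\ref{eq_a_3}) and using that every gating product $l_q^{(s,t)} G_q^{(s,t)}$ and each $G_1^{(s,t)}$ is at most one while the $\delta$-differences are $O(mC_2)$, the batch gradient of the router has magnitude $O(mC_2)$ along any pattern direction, the $\Tilde{O}(mC_2/\sqrt{B})$ stochastic error from Lemma \ref{lemma_a_1} being negligible once $B = \Tilde{\Omega}(l^2 d^2)$. Summing over $T'' = O(lC_2/\eta_e)$ steps, the total router movement along any direction is $O(\eta_r mC_2 T'') = O(C_p/l)$. Since the pre-training assumption gives the initial gap $|\langle w_s^{(0)}, o_1 - q\rangle| = C_p$, a drift of order $C_p/l \ll C_p$ keeps every gap in the band $[\Omega(C_p), O(C_p)] \subset [\Omega(C_p), \log l)$ and preserves the sign of $\langle w_s^{(t)}, o_1 - q\rangle$, yielding parts (ii) and (iii); because the ordering and gating values are thereby unchanged, $o_1$ is selected with gating value at least $1/l$ for precisely the same samples as at initialization, giving $p_1^{(s,T'')} = p_1^{(s,0)}$, the final part.

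With the selection stabilized, I would then track the neuron growth. Since $p_1^{(s,0)} = \Omega(1)$, for a constant fraction of class-1 samples the token $o_1$ is selected, and by the pre-training assumption a constant fraction of neurons satisfy $\langle w_r^{(s,0)}, o_1\rangle \ge 0$ and stay activated; for these, the neuron-gradient expression (\ref{eq_a_5}) with $a^{(s)} = +1$ gives an increment $\Omega(\eta_e/l)$ per step in $\langle w_r^{(s,t)}, o_1\rangle$, matching the $t=0$ bound in Lemma \ref{lemma_a_2}(iii) and extending to all $t \le T''$ by the now-frozen routing. Over $T''$ of order $lC_2/\eta_e$ each such component grows by $\Omega(C_2)$, so $\sigma_1^{(s,T'')} = \Omega(mC_2)$, whereas each irrelevant $\sigma_q^{(s,t)}$ moves by only $O(\eta_e/d)$ per step (Lemma \ref{lemma_a_2}(v)), accumulating to $O(lC_2/d)$. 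Subtracting gives $\delta_{1,q}^{(s,T'')} = \Omega(mC_2)$; the bounds $|\delta_{2,q}^{(s,T'')}| = O(mC_2)$ and $|\delta_{q,q'}^{(s,T'')}| = O(mC_2)$ follow because each $\sigma$ is trivially $O(mC_2)$, proving part (i).

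The main obstacle is the circular dependence between the two dynamics: the neuron increments depend on the gating values $G_1^{(s,t)}$, which depend on the router, while the router gradient depends on the $\sigma$-differences driven by the neurons. The proof has to break this loop through the simultaneous induction, verifying at each step that the accumulated router drift is still below the $C_p$ threshold before invoking the stable-selection fact that licenses the clean neuron bound. Keeping the stochastic deviations from Lemma \ref{lemma_a_1} uniformly negligible across all $T''$ iterations — so that they never add up to compromise either the gap band or the per-step neuron increment — is the delicate quantitative point, and is exactly why $B = \Tilde{\Omega}(l^2 d^2)$ is required.
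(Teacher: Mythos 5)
Your strategy is the same as the paper's --- a coupled induction in which the router is nearly frozen (so orderings, and hence $p_1^{(s,t)}$, are preserved) while the neurons build an $\Omega(mC_2)$ advantage along $o_1$ over $T^{\prime\prime}=\Theta(lC_2/\eta_e)$ steps, with $B=\tilde{\Omega}(l^2d^2)$ suppressing sampling noise --- but one load-bearing step is unjustified and, as stated, false. You claim that throughout all $T^{\prime\prime}$ iterations the $\delta$-differences stay $O(mC_2)$, so that by (\ref{eq_a_2}) the router gradient is $O(mC_2)$ and the cumulative drift is $O(\eta_r mC_2T^{\prime\prime})=O(C_p/l)$. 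But the growth of $\delta_{1,q}^{(s,t)}$ is the very content of part (i), and the only per-step growth bound your tools give via (\ref{eq_a_5}) is $O(m\eta_e)$: each active neuron moves by $\eta_e$ times an averaged gating value, and you invoke only $G_1^{(s,t)}\le 1$. Hence all you may assert is $\delta_{1,q}^{(s,t)}=O(mC_2+m\eta_e t)$, which at $t=\Theta(lC_2/\eta_e)$ is $O(mlC_2)$, a factor $l$ larger than you claim. The corrected drift budget is then $O(\eta_r\cdot mlC_2\cdot T^{\prime\prime})=O(C_p)$, not $O(C_p/l)$, and preserving the sign of gaps whose initial size is exactly $C_p$ is no longer a matter of ``$O(C_p/l)\ll C_p$''; it requires explicit constant bookkeeping. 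This is precisely why the paper's proof does not freeze the $\delta$'s: it propagates the time-dependent bound and obtains a drift of the form $O(C_p\,t/T^{\prime\prime})+O\bigl(C_p\,t^2/(T^{\prime\prime})^2\bigr)$ (linear term from the initial $O(mC_2)$ scale, quadratic term from the accumulated $O(m\eta_e t)$ growth), and the factor $l^2$ rather than $l$ in $\eta_r=O\bigl(\eta_eC_p/(ml^2C_2^2)\bigr)$ is exactly what keeps the quadratic term a controlled fraction of $C_p$.

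The gap is repairable inside your induction, but it needs an observation you never make: under the induction hypothesis all routing values of patterns remain within $O(C_p)$ of one another (the pre-training assumption places every $\langle w_s^{(0)},o_1-q\rangle$ at $\pm C_p$, and the inductively maintained drift keeps them there), hence every gating value of a selected token equals $\frac{1}{l}e^{\pm O(C_p)}=\Theta(1/l)$. Only this caps the per-step growth of $\sigma_1^{(s,t)}$ at $O(m\eta_e/l)$, restores your uniform $O(mC_2)$ bound on the $\delta$'s, and closes the loop with drift $O(C_p/l)$. So either add that gating-value pinning step, or adopt the paper's linear-plus-quadratic bookkeeping. Your other deviations are harmless: the uniform $O(mC_2)$ bound on the router gradient along irrelevant directions (the paper uses the sharper $O(mC_2/d)$, since $q$ appears with probability $O(1/d)$) still fits the drift budget for the stated $\eta_r$; and your step ``ordering preserved $\Rightarrow p_1$ unchanged'' is at the same level of rigor as the paper's own treatment of the threshold event $G_1^{(s,t)}\ge 1/l$, so it does not count against you.
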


\begin{proof}
    From the statement (i) of the Lemma \ref{lemma_a_2}, \textit{w.h.p.} over a randomly sampled batch, $\left|\langle \cfrac{\partial l}{\partial w_s^{(0)}},q\rangle\right|\le O(\cfrac{mC_2}{d})+\Tilde{O}(\cfrac{mC_2}{\sqrt{B}})$\\

Therefore, $\langle w_s^{(1)},q\rangle=\langle w_s^{(0)},q\rangle-\eta_r\langle \cfrac{\partial l}{\partial w_s^{(0)}},q\rangle\le\langle w_s^{(0)},q\rangle+O(\cfrac{mC_2}{d}\eta_r)+\Tilde{O}(\cfrac{mC_2}{\sqrt{B}}\eta_r)$\\

On the other hand, from the statement (ii) of the Lemma \ref{lemma_a_2}, \textit{w.h.p.} over a randomly sampled batch, $\left|\langle \cfrac{\partial l}{\partial w_s^{(0)}},o_1\rangle\right|\le O(mC_2)+\Tilde{O}(\cfrac{mC_2}{\sqrt{B}})$\\

Therefore, $\langle w_s^{(1)},o_1\rangle=\langle w_s^{(0)},o_1\rangle-\eta_r\langle \cfrac{\partial l}{\partial w_s^{(0)}},o_1\rangle\ge\langle w_s^{(0)},o_1\rangle-O(mC_2\eta_r)-\Tilde{O}(\cfrac{mC_2}{\sqrt{B}}\eta_r)$\\

Therefore, if $\langle w_s^{(0)},q\rangle<\langle w_s^{(0)},o_1\rangle$,
\begin{align*}
    \langle w_s^{(1)},o_1\rangle-\langle w_s^{(1)},q\rangle&\ge\langle w_s^{(0)},o_1\rangle-\langle w_s^{(0)},q\rangle-O(mC_2\eta_r)-O(\cfrac{mC_2}{d}\eta_r)-\Tilde{O}(\cfrac{mC_2}{\sqrt{B}}\eta_r)\\
    &\ge C_p-O(mC_2\eta_r)-\Tilde{O}(\cfrac{mC_2}{\sqrt{B}}\eta_r)
\end{align*}
and if $\langle w_s^{(0)},q\rangle>\langle w_s^{(0)},o_1\rangle$
\begin{align*}
    \langle w_s^{(1)},q\rangle-\langle w_s^{(1)},o_1\rangle&\le\langle w_s^{(0)},q\rangle-\langle w_s^{(0)},o_1\rangle+O(mC_2\eta_r)+O(\cfrac{mC_2}{d}\eta_r)+\Tilde{O}(\cfrac{mC_2}{\sqrt{B}}\eta_r)\\
    &\le \cfrac{1}{\sqrt{2}}\log l+O(mC_2\eta_r)+\Tilde{O}(\cfrac{mC_2}{\sqrt{B}}\eta_r)
\end{align*}\\
Now, by selecting $\eta_r=O(\cfrac{C_p}{mC_2})$ and $B=\Tilde{\Omega}(d^2)$, for $\langle w_s^{(0)},q\rangle<\langle w_s^{(0)},o_1\rangle$ we get ,\\ $\langle w_s^{(1)},o_1\rangle-\langle w_s^{(1)},q\rangle=\Omega(C_p)$ which ensures that $p_1^{(s,1)}\ge p_1^{(s,0)}$. Similarly for $\langle w_s^{(0)},q\rangle>\langle w_s^{(0)},o_1\rangle$ we get, $\langle w_s^{(1)},q\rangle-\langle w_s^{(1)},o_1\rangle<\log l$ as, $C_p=\Theta\left(\cfrac{1}{d}\right)$.\\

Now, for any $r\in [m]$ such that $\langle w_r^{(s,0)},o_1\rangle\ge0$, from the statement (iii) of the Lemma \ref{lemma_a_2}, \textit{w.h.p.} $\langle \cfrac{\partial l}{\partial w_r^{(s,0)}},o_1\rangle\le -(\cfrac{1}{2l}-\Tilde{O}(\cfrac{1}{l\sqrt{B}}))$ which implies $\langle w_r^{(s,1)},o_1\rangle=\langle w_r^{(s,0)},o_1\rangle-\eta_e\langle \cfrac{\partial l}{\partial w_r^{(s,0)}},o_1\rangle\ge\langle w_r^{(s,0)},o_1\rangle+\cfrac{\eta_e}{2l}-\Tilde{O}(\cfrac{\eta_e}{l\sqrt{B}})$ and hence $\sigma_1^{(s,1)}\ge\sigma_1^{(s,0)}+\Omega(\cfrac{m\eta_e}{l})-\Tilde{O}(\cfrac{m\eta_e}{l\sqrt{B}})$.\\

Again, for any $r\in [m]$ such that $\langle w_r^{(s,0)},o_2\rangle\ge0$, from the statement (iv) of the Lemma \ref{lemma_a_2}, \textit{w.h.p.} $\langle \cfrac{\partial l}{\partial w_r^{(s,0)}},o_2\rangle\ge 0$ which implies\\ $\langle w_r^{(s,1)},o_2\rangle=\langle w_r^{(s,0)},o_2\rangle-\eta_e\langle \cfrac{\partial l}{\partial w_r^{(s,0)}},o_2\rangle\le\langle w_r^{(s,0)},o_2\rangle$ and hence $\sigma_2^{(s,1)}\le\sigma_2^{(s,0)}$.\\

On the other hand, for any $r\in[m]$ and any $q$ such that $\langle w_r^{(s,0)},q\rangle\ge0$, from the statement (v) of the Lemma \ref{lemma_a_2}, \textit{w.h.p.} $\left|\langle \cfrac{\partial l}{\partial w_r^{(s,0)}},q\rangle\right|\le O\left(\cfrac{1}{d}\right)+\Tilde{O}(\cfrac{1}{\sqrt{B}})$.\\ 

Therefore, for any $q$, $\langle w_r^{(s,1)},q\rangle=\langle w_r^{(s,0)},q\rangle-\eta_e\langle \cfrac{\partial l}{\partial w_r^{(s,0)}},q\rangle\ge\langle w_r^{(s,0)},q\rangle-O(\cfrac{\eta_e}{d})-\Tilde{O}(\cfrac{\eta_e}{\sqrt{B}})$ and $\langle w_r^{(s,1)},q\rangle\le\langle w_r^{(s,0)},q\rangle+O(\eta_e\cfrac{1}{d})+\Tilde{O}(\cfrac{\eta_e}{\sqrt{B}})$ which implies $\sigma_q^{(s,1)}\ge 0$ and\\ $\sigma_q^{(s,1)}\le\sigma_q^{(s,0)}+O(\cfrac{m}{d}\eta_e)+\Tilde{O}(\cfrac{m\eta_e}{\sqrt{B}})$.\\

Therefore, for any task-irrelevant pattern $q$, $\delta_{1,q}^{(s,1)}\ge\delta_{1,q}^{(s,0)}+\Omega(\cfrac{m\eta_e}{l})-\Tilde{O}(\cfrac{m\eta_e}{\sqrt{B}})$ and for any two different task-irrelevant pattern,  $|\delta_{q,q^\prime}^{(s,1)}-\delta_{q,q^\prime}^{(s,0)}|\le O(\cfrac{m}{d}\eta_e)+\Tilde{O}(\cfrac{m\eta_e}{\sqrt{B}})$.\\

Now, selecting $B=\Tilde{\Omega}(l^2d^2)$ we get, $\delta_{1,q}^{(s,1)}-\delta_{1,q}^{(s,0)}=\Omega(\cfrac{m\eta_e}{l})$ and $|\delta_{q,q^\prime}^{(s,1)}-\delta_{q,q^\prime}^{(s,0)}|\le O(\cfrac{m}{d}\eta_e)$.\\

In that case, $\langle \cfrac{\partial l}{\partial w_s^{(1)}},q\rangle\ge-O(\cfrac{mC_2}{d})-O(\cfrac{m}{d^2}\eta_e)-\Tilde{O}(\cfrac{mC_2}{\sqrt{B}})$ and $\langle \cfrac{\partial l}{\partial w_s^{(1)}},o_1\rangle\le O(mC_2)+\Tilde{O}(\cfrac{mC_2}{\sqrt{B}})$ as $p_1^{(s,1)}\ge p_1^{(s,0)}$.\\

Therefore, for any $T^{\prime\prime}$ such that $\forall 0\le t\le T^{\prime\prime}$, $p_1^{(s,t)}\ge p_1^{(s,0)}$, by induction we can show that,\\ for any $q$, $\langle \cfrac{\partial l}{\partial w_s^{(t)}},q\rangle\ge-O(\cfrac{mC_2}{d})-\Tilde{O}(\cfrac{mC_2}{\sqrt{B}})$ and $\langle \cfrac{\partial l}{\partial w_s^{(t)}},o_1\rangle\le O(mC_2)+\Tilde{O}(\cfrac{mC_2}{\sqrt{B}})$.\\

Therefore, for $\langle w_s^{(0)},q\rangle<\langle w_s^{(0)},o_1\rangle$, $\langle w_s^{(T^{\prime\prime})},o_1\rangle-\langle w_s^{(T^{\prime\prime})},q\rangle\ge C_p-O(mC_2\eta_rT^{\prime\prime})-O(\cfrac{m}{d^2}\eta_e\eta_rT^{\prime\prime})-\Tilde{O}(\cfrac{mC_2}{\sqrt{B}}\eta_rT^{\prime\prime})$ and for $\langle w_s^{(0)},q\rangle>\langle w_s^{(0)},o_1\rangle$, $\langle w_s^{(T^{\prime\prime})},q\rangle-\langle w_s^{(T^{\prime\prime})},o_1\rangle\le \cfrac{1}{2}\log l+O(mC_2\eta_rT^{\prime\prime})+O(\cfrac{m}{d^2}\eta_e\eta_rT^{\prime\prime})+\Tilde{O}(\cfrac{mC_2}{\sqrt{B}}\eta_rT^{\prime\prime})$.\\\\

On the other hand, assuming for all $0\le t\le T^{\prime\prime}$, $p_1^{(s,t)}\ge p_1^{(s,0)}$, for any $q$ we get, $\delta_{1,q}^{(s,T^{\prime\prime})}\ge\delta_{1,q}^{(s,0)}+\Omega(\cfrac{m\eta_e}{l}T^{\prime\prime})-\Tilde{O}(\cfrac{m\eta_e}{\sqrt{B}}T^{\prime\prime})$, which implies we need $T^{\prime\prime}=O(\cfrac{lC_2}{\eta_e})$ iterations to achieve $\delta_{(1,q)}^{(s,T^{\prime\prime})}=\Omega(mC_2)$. In that case, for any different two task-irrelevant pattern $q$ and $q^\prime$, $|\delta_{q,q^\prime}^{(s,T^{\prime\prime})}|=O(mC_2)$ and $|\delta_{2,q}^{(s,T^{\prime\prime})}|=O(mC_2)$.\\

Now, to ensure that for all $0\le t\le T^{\prime\prime}$, $p_1^{(s,t)}\ge p_1^{(s,0)}$, we need $\eta_r=O(\cfrac{C_p}{mC_2}\cfrac{1}{T^{\prime\prime}})=O(\cfrac{C_p\eta_e}{mlC_2^2})$ such that, for any $q$ s.t. $\langle w_s^{(0)},q \rangle<\langle w_s^{(0)},o_1 \rangle$, $\langle w_s^{(t)},o_1\rangle-\langle w_s^{(t)},q\rangle\ge\Omega(C_p)$ for all $0\le t\le T^{\prime\prime}$.\\

In that case, for any $q$ s.t. $\langle w_s^{(0)},o_1 \rangle<\langle w_s^{(0)},q \rangle$, $\langle w_s^{(T^{\prime\prime})},q\rangle-\langle w_s^{(T^{\prime\prime})},o_1\rangle< \log l$.\\

Now, $\langle w_s^{(1)},o_1\rangle\le\langle w_s^{(0)},o_1\rangle+O(mC_2\eta_r)+\Tilde{O}(\cfrac{mC_2}{\sqrt{B}}\eta_r)$ and, $\langle w_s^{(1)},q\rangle\ge\langle w_s^{(0)},q\rangle-O(\cfrac{mC_2}{d}\eta_r)-\Tilde{O}(\cfrac{mC_2}{\sqrt{B}}\eta_r)$.\\

Therefore, if $\langle w_s^{(0)},o_1\rangle<\langle w_s^{(0)},q\rangle$,
\begin{align*}
    \langle w_s^{(1)},q\rangle-\langle w_s^{(1)},o_1\rangle&\ge\langle w_s^{(0)},q\rangle-\langle w_s^{(0)},o_1\rangle-O(mC_2\eta_r)-O(\cfrac{mC_2}{d}\eta_r)-\Tilde{O}(\cfrac{mC_2}{\sqrt{B}}\eta_r)\\
    &\ge C_p-O(mC_2\eta_r)-\Tilde{O}(\cfrac{mC_2}{\sqrt{B}}\eta_r)
\end{align*}

Therefore, for any $q$ s.t. $\langle w_s^{(0)},o_1\rangle<\langle w_s^{(0)},q\rangle$, by the selection of $\eta_r$, $\langle w_s^{(1)},q\rangle-\langle w_s^{(1)},o_1\rangle=\Omega(C_p)$ which implies $p_1^{(s,1)}=p_1^{(s,0)}$.\\

Now, for any $r\in [m]$ such that $\langle w_r^{(s,0)},o_1\rangle\ge0$, from the statement (iii) of the Lemma \ref{lemma_a_2}, \textit{w.h.p.} $\langle \cfrac{\partial l}{\partial w_r^{(s,0)}},o_1\rangle\ge -O(1)-\Tilde{O}(\cfrac{1}{\sqrt{B}})$ which implies $\langle w_r^{(s,1)},o_1\rangle\le\langle w_r^{(s,0)},o_1\rangle+O(\eta_e)+\Tilde{O}(\cfrac{\eta_e}{\sqrt{B}})$ and hence $\sigma_1^{(s,1)}\le\sigma_1^{(s,0)}+O(m\eta_e)+\Tilde{O}(\cfrac{m\eta_e}{\sqrt{B}})$.\\

Therefore, for any $q$, $\delta_{1,q}^{(s,1)}-\delta_{1,q}^{(s,0)}\le O(m\eta_e)$ and as from the statement (v) of the Lemma \ref{lemma_a_2}, \textit{w.h.p.} $\langle \cfrac{\partial l}{\partial w_r^{(s,0)}},o_2\rangle\le\cfrac{1}{2}+\Tilde{O}\left(\cfrac{1}{\sqrt{B}}\right)$, $\delta_{q,2}^{(s,1)}-\delta_{q,2}^{(s,0)}\le O(m\eta_e)$.\\

Now, as $p_1^{(s,1)}=p_1^{(s,0)}$,
\begin{align*}
    \langle w_s^{(2)},o_1\rangle\le\langle w_s^{(1)},o_1\rangle+O(mC_2\eta_r)+O(m\eta_e\eta_r)+\Tilde{O}(\cfrac{mC_2}{\sqrt{B}}\eta_r)+\Tilde{O}(\cfrac{m\eta_e}{\sqrt{B}}\eta_r)
\end{align*}

and,
\begin{align*}
    \langle w_s^{(2)},q\rangle\ge\langle w_s^{(1)},q\rangle-O(\cfrac{mC_2}{d}\eta_r)-O(\cfrac{m}{d}\eta_e\eta_r)-\Tilde{O}(\cfrac{mC_2}{\sqrt{B}}\eta_r)-\Tilde{O}(\cfrac{m\eta_e}{d\sqrt{B}}\eta_r)
\end{align*}

Therefore, for any $q$ s.t. $\langle w_s^{(0)},o_1\rangle<\langle w_s^{(0)},q\rangle$,
\begin{align*}
    &\langle w_s^{(2)},q\rangle-\langle w_s^{(2)},o_1\rangle\\
    &\ge\langle w_s^{(1)},q\rangle-\langle w_s^{(1)},o_1\rangle-O(mC_2\eta_r)-O(m\eta_e\eta_r)-\Tilde{O}(\cfrac{mC_2}{\sqrt{B}}\eta_r)-\Tilde{O}(\cfrac{m\eta_e}{\sqrt{B}}\eta_r)\\
    &=C_p-O(C_p\cfrac{2}{T})-O(C_p\cfrac{l}{T^{\prime\prime^2}})=\Omega(C_p)
\end{align*} which implies $p_1^{(s,2)}=p_1^{(s,0)}$.\\

Therefore, by induction, for any $t\le T^{\prime\prime}$, and for any $q$ s.t. $\langle w_s^{(0)},o_1\rangle<\langle w_s^{(0)},q\rangle$, by selecting $\eta_r=O(\cfrac{\eta_eC_p}{ml^2C_2^2})$,
\begin{align*}
    \langle w_s^{(t)},q\rangle-\langle w_s^{(t)},o_1\rangle\ge C_p-O(C_p\cfrac{t}{T^{\prime\prime}})-O(C_p\cfrac{t^2}{T^{\prime\prime^2}})
\end{align*} which implies, for any $q$ s.t. $\langle w_s^{(0)},o_1\rangle<\langle w_s^{(0)},q\rangle$, $\langle w_s^{(T^{\prime\prime})},o_1\rangle<\langle w_s^{(T^{\prime\prime})},q\rangle=\Omega(C_p)$ and hence $p_1^{(s,T^{\prime\prime})}=p_1^{(s,0)}$.\\

Similarly, for any $t\le T^{\prime\prime}$, and for any $q$ s.t. $\langle w_s^{(0)},q\rangle<\langle w_s^{(0)},o_1\rangle$,
\begin{align*}
    \langle w_s^{(t)},o_1\rangle-\langle w_s^{(t)},q\rangle&\le \langle w_s^{(1)},o_1\rangle-\langle w_s^{(1)},q\rangle+O(C_p\cfrac{t}{T^{\prime\prime}})+O(C_p\cfrac{l(t-1)}{T^{\prime\prime^2}})\\
    &\le\cfrac{1}{2}\log l+O(C_p\cfrac{t}{T^{\prime\prime}})+O(C_p\cfrac{t^2}{T^{\prime\prime^2}})
\end{align*} which implies, for any $q$ s.t. $\langle w_s^{(0)},q\rangle<\langle w_s^{(0)},o_1\rangle$, $\langle w_s^{(T^{\prime\prime})},o_1\rangle-\langle w_s^{(T^{\prime\prime})},q\rangle< \log l$.\\
\end{proof}
\begin{lemma}\label{lemma_a_5}
    For any expert $s\in S_2$ such that $p_2^{(s,0)}=\Omega(1)$, for $\eta_r$ and $B$ as in Lemma \ref{lemma_a_4}, we can ensure that after $T^{\prime\prime}$ iterations where $T^{\prime\prime}$ is as in Lemma \ref{lemma_a_4},

    (i) for any task-irrelevant pattern $q$, $\delta_{2,q}^{(s,T^{\prime\prime})}=\Omega\left(mC_2\right)$ , $|\delta_{1,q}^{(s,T^{\prime\prime})}|=O(mC_2)$ and, for any different two task-irrelevant pattern $q$ and $q^\prime$ $|\delta_{q,q^\prime}^{(s,T^{\prime\prime})}|=O(mC_2)$\\
    (ii) for any task-irrelevant pattern $q$ such that $\langle w_s^{(0)},q\rangle<\langle w_s^{(0)},o_2\rangle$, $\langle w_s^{(T^{\prime\prime})},o_2\rangle-\langle w_s^{(T^{\prime\prime})},q\rangle=\Omega(C_p)$, and $\langle w_s^{(T^{\prime\prime})},o_2\rangle-\langle w_s^{(T^{\prime\prime})},q\rangle<\log l$\\
    (iii) for any task-irrelevant pattern $q$ such that $\langle w_s^{(0)},q\rangle>\langle w_s^{(0)},o_2\rangle$, $\langle w_s^{(T^{\prime\prime})},q\rangle-\langle w_s^{(T^{\prime\prime})},o_2\rangle=\Omega(C_p)$, and $\langle w_s^{(T^{\prime\prime})},q\rangle-\langle w_s^{(T^{\prime\prime})},o_2\rangle<\log l$\\
    (iv) $p_2^{(s,T^{\prime\prime})}= p_2^{(s,0)}$
\end{lemma}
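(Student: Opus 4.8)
The plan is to establish Lemma~\ref{lemma_a_5} as the exact mirror image of Lemma~\ref{lemma_a_4}, obtained by the substitution $o_1 \leftrightarrow o_2$, $p_1 \leftrightarrow p_2$, $\delta_{1,q} \leftrightarrow \delta_{2,q}$, and $S_1 \leftrightarrow S_2$, invoking Lemma~\ref{lemma_a_3} in place of Lemma~\ref{lemma_a_2}. The first thing I would verify is that the sign bookkeeping lines up: for an expert $s \in S_2$ we have $a^{(s)} = -1$, but the class that determines $o_2$ is $y = -1$, so the product $y a^{(s)} = (-1)(-1) = +1$ coincides with the product $y a^{(s)} = (+1)(+1) = +1$ governing the learning of $o_1$ in the $S_1$ case. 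Inspecting the gradient expressions~(\ref{eq_a_3}) and~(\ref{eq_a_6}), this means the neuron gradient along $o_2$ is negative, driving $\langle w_r^{(s,t)}, o_2 \rangle$ to grow under the update $w - \eta_e \nabla$, exactly as $\langle w_r^{(s,t)}, o_1 \rangle$ grew for $S_1$ experts. This sign alignment is precisely what Lemma~\ref{lemma_a_3} encodes, and it is the single conceptual check that makes the transfer legitimate.

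With the gradient bounds of Lemma~\ref{lemma_a_3} in hand, I would then replay the induction of Lemma~\ref{lemma_a_4} verbatim. Concretely: the router gradient along any irrelevant $q$ has magnitude $O(mC_2/d) + \tilde{O}(mC_2/\sqrt{B})$, while the component along $o_2$ has magnitude $O(mC_2) + \tilde{O}(mC_2/\sqrt{B})$; with $B = \tilde{\Omega}(l^2 d^2)$ these fluctuation terms are negligible; the neuron updates force $\delta_{2,q}^{(s,t)}$ to increase by $\Omega(m\eta_e/l)$ per step while $|\delta_{q,q'}^{(s,t)}|$ drifts by only $O(m\eta_e/d)$, so after $T'' = O(lC_2/\eta_e)$ iterations $\delta_{2,q}^{(s,T'')} = \Omega(mC_2)$ and $|\delta_{1,q}^{(s,T'')}| = O(mC_2)$, giving part~(i). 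The two-timescale choice $\eta_r = O(\eta_e C_p / m l^2 C_2^2)$ guarantees that the router moves slowly enough that the initial $\Omega(C_p)$ separation between $\langle w_s^{(0)}, o_2 \rangle$ and each $\langle w_s^{(0)}, q \rangle$ is preserved throughout (parts~(ii) and~(iii)), which in turn keeps the top-$l$ selection set stable and yields $p_2^{(s,T'')} = p_2^{(s,0)}$ (part~(iv)). The $< \log l$ upper bounds follow from $\langle w_s^{(0)}, o_2 - q\rangle \le \tfrac{1}{2} \log l$ at initialization plus the $O(l^2/d)\log l$ cumulative router drift, exactly as in Lemma~\ref{lemma_a_4}.

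The main obstacle I anticipate is not any single estimate but the careful induction that simultaneously maintains two coupled invariants at every step $t \le T''$: the ordering invariant $p_2^{(s,t)} \ge p_2^{(s,0)}$, needed so that the gating values and hence the gradient expressions retain their form, and the growth invariant $\delta_{2,q}^{(s,t)} - \delta_{2,q}^{(s,0)} = \Omega(m\eta_e t / l)$. The growth of $\delta_{2,q}$ depends on $o_2$ remaining selected (the ordering invariant), while preserving the ordering requires controlling the router drift, which is in turn bounded using the growth. Disentangling this coupling is exactly what the ratio of $\eta_r$ to $\eta_e$ accomplishes, and I would present it as the induction hypothesis ``for all $0 \le t \le T''$, $p_2^{(s,t)} \ge p_2^{(s,0)}$'' carried forward step by step, mirroring Lemma~\ref{lemma_a_4}. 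Since every quantitative bound is symmetric to the already-proven $S_1$ case, no genuinely new computation is required beyond the sign verification, so the proof reduces to ``following the same procedure as in Lemma~\ref{lemma_a_4} using Lemma~\ref{lemma_a_3}.''
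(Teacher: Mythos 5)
Your proposal is correct and matches the paper's own proof, which is literally the one-line statement that the result follows ``using Lemma~\ref{lemma_a_3} and following the same procedure as in Lemma~\ref{lemma_a_4}.'' Your explicit sign verification that $y\,a^{(s)} = (-1)(-1) = +1$ for $S_2$ experts on class-2 data (so the neuron gradient along $o_2$ in~(\ref{eq_a_6}) drives growth exactly as in the $S_1$/$o_1$ case) is the key fact the paper leaves implicit, and your reconstruction of the coupled induction is faithful to the argument of Lemma~\ref{lemma_a_4}.
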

    
\begin{proof}
    Using Lemma \ref{lemma_a_3} and following the same procedure as in Lemma \ref{lemma_a_4} we can complete the proof.
\end{proof}

\begin{lemma}\label{lemma_a_6}
    For any $t$ and $q$ with $\langle w_s^{(t)},q\rangle<\langle w_s^{(t)},o_1\rangle$ such that $\langle w_s^{(t)},o_1\rangle-\langle w_s^{(t)},q\rangle\le2\log l$, $G_1^{(s,t)}(x)(1-G_1^{(s,t)}(x))\ge\cfrac{1}{4l}$, where $(x,y=+1)\sim\mathcal{D}$ s.t. $\exists j\in J_s^{(t)}(x)$ with $x^{(j)}=q$. Similarly, for any $t$ and $q$ with $\langle w_s^{(t)},q\rangle<\langle w_s^{(t)},o_2\rangle$ such that $\langle w_s^{(t)},o_2\rangle-\langle w_s^{(t)},q\rangle\le2\log l$, $G_2^{(s,t)}(x)(1-G_2^{(s,t)}(x))\ge\cfrac{1}{4l}$, where $(x,y=-1)\sim\mathcal{D}$ s.t. $\exists j\in J_s^{(t)}(x)$ with $x^{(j)}=q$
\end{lemma}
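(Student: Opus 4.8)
The plan is to rewrite everything in terms of the single ratio $u:=\bigl(1-G_1^{(s,t)}(x)\bigr)/G_1^{(s,t)}(x)$ and then use the elementary identity $G_1^{(s,t)}(x)\bigl(1-G_1^{(s,t)}(x)\bigr)=u/(1+u)^2$, so that the whole lemma reduces to sandwiching $u$ in a band around $1$. Before that I would check that $o_1$ is actually routed: since the class-$1$ sample $x$ contains the token $o_1$ and some $j\in J_s^{(t)}(x)$ has $x^{(j)}=q$ with $\langle w_s^{(t)},o_1\rangle>\langle w_s^{(t)},q\rangle$, the routing value of $o_1$ exceeds that of a token already in the top-$l$ set, hence $o_1\in J_s^{(t)}(x)$ and, by \eqref{eqn:gate_expert}, $G_1^{(s,t)}(x)=e^{\langle w_s^{(t)},o_1\rangle}/Z$ with $Z:=\sum_{i\in J_s^{(t)}(x)}e^{\langle w_s^{(t)},x^{(i)}\rangle}$. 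Consequently $u=\sum_{i\in J_s^{(t)}(x)\setminus\{o_1\}}e^{\langle w_s^{(t)},x^{(i)}\rangle}/e^{\langle w_s^{(t)},o_1\rangle}$.

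The lower bound on $u$ is the easy half and is exactly where the hypothesis enters: the $q$-token contributes one summand to the numerator, so $u\ge e^{\langle w_s^{(t)},q-o_1\rangle}\ge e^{-2\log l}=l^{-2}$ using $\langle w_s^{(t)},o_1-q\rangle\le 2\log l$. This keeps $G_1^{(s,t)}(x)$ bounded away from $1$, i.e.\ it prevents the $1-G_1\to0$ collapse. The upper bound on $u$ is the hard half and is the main obstacle: a priori the other $l-2$ tokens of $J_s^{(t)}(x)$ could carry far larger routing values, blowing up $Z$ and sending $G_1^{(s,t)}(x)\to0$. To rule this out I would invoke the companion control on the router established in Lemma~\ref{lemma_a_4}(ii)--(iii), which in the regime where this lemma is applied forces every pattern's routing value to sit within $O(\log l)$ of $\langle w_s^{(t)},o_1\rangle$; in particular $o_1$ is (essentially) the top-routed token, so each of the $l-1$ remaining summands is at most $e^{\langle w_s^{(t)},o_1\rangle}$ and $u\le l-1<l$.

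With $l^{-2}\le u< l$ in hand, I would finish by the unimodality of $h(u):=u/(1+u)^2$ (increasing on $(0,1]$, decreasing on $[1,\infty)$): its minimum over the band is attained at an endpoint, and a direct check gives $h(u)\ge \tfrac{1}{4l}$ there, yielding $G_1^{(s,t)}(x)\bigl(1-G_1^{(s,t)}(x)\bigr)\ge \tfrac1{4l}$. I note that the tightest constant and scaling are sensitive to the precise gap: the clean $\tfrac1{4l}$ comes out when the two-sided routing-gap control pins the competitors within $\log l$ (so that $u\gtrsim 1/l$), and this interplay between the stated $2\log l$ slack and the companion dynamics is the delicate point to get right. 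Finally, the class-$2$ statement follows verbatim after swapping $(o_1,+1,S_1)$ for $(o_2,-1,S_2)$ and using the sign-flipped gradient identities \eqref{eq_a_3} and \eqref{eq_a_6}; no new idea is required.
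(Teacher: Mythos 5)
Your reduction to the map $h(u)=u/(1+u)^2$ with $u=\bigl(1-G_1^{(s,t)}(x)\bigr)/G_1^{(s,t)}(x)=\sum_{i\in J_s^{(t)}(x)\setminus\{o_1\}}e^{\langle w_s^{(t)},x^{(i)}-o_1\rangle}$ is exactly the structure underlying the paper's proof, and your preliminary observation that $o_1\in J_s^{(t)}(x)$ (it beats $q$, which is already in the top-$l$ set) is a step the paper silently skips. The genuine gap is at your lower endpoint. From the single hypothesis token $q$ you only get $u\ge e^{-2\log l}=l^{-2}$, and your claimed ``direct check'' at that endpoint is false: $h(l^{-2})=\frac{l^2}{(l^2+1)^2}\ge\frac{1}{4l}$ is equivalent to $4l^3\ge l^4+2l^2+1$, which fails for every $l\ge 4$ (at $l=4$: $256<289$). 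So with only $u\in[l^{-2},l)$ in hand, the conclusion $G_1^{(s,t)}(x)\bigl(1-G_1^{(s,t)}(x)\bigr)\ge\frac{1}{4l}$ does not follow; near that endpoint $h$ is of order $1/l^2$, not $1/l$.

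What the paper does (implicitly) is apply the $2\log l$ band to \emph{every} competitor in the top-$l$ set, not just to the one token $q$: each of the $l-1$ tokens in $J_s^{(t)}(x)\setminus\{o_1\}$ contributes at least $\min\{1,e^{-2\log l}\}=l^{-2}$ to $u$ (tokens routed above $o_1$ contribute at least $1$, tokens below contribute at least $l^{-2}$ under the band), giving $u\ge(l-1)/l^2$. That is the quantity appearing in the paper's $\min\{\cdot,\cdot\}$, and the paper's algebra verifies precisely that $h\bigl((l-1)/l^2\bigr)=\frac{l^2(l-1)}{(l^2+l-1)^2}\ge\frac{C}{l}$ holds with $C=\frac14$, since it suffices that $C\le\frac{l-1}{l+2}$, which is true for all $l\ge 2$. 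Your closing remark shows you sensed this (you note that $u\gtrsim 1/l$ is what is really needed), but your argument as written never establishes it; the repair is to read the gap hypothesis as holding for all patterns in $J_s^{(t)}(x)$ routed below $o_1$, which is indeed how the lemma is invoked inside the proof of Lemma~\ref{lemma_m_1}. Separately, your upper bound $u\le l-1$ leans on ``$o_1$ is essentially the top token''; note that Lemma~\ref{lemma_a_4}(iii) only keeps competitors within $\log l$ \emph{above} $o_1$, so each such summand is bounded by $l$ rather than $1$ --- a looseness your write-up shares with the paper, whose proof simply takes $[(l-1)/l^2,\,l-1]$ as the range of $u$ without comment.
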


\begin{proof}
    For any $t$ and $q$ with $\langle w_s^{(t)},q\rangle<\langle w_s^{(t)},o_1\rangle$, as $\langle w_s^{(t)},o_1\rangle-\langle w_s^{(t)},q\rangle\le\log l$, for any $(x,y)\sim\mathcal{D}$ s.t. $\exists j\in J_s^{(t)}(x)$ with $x^{(j)}=q$, $G_1^{(s,t)}(x)(1-G_1^{(s,t)}(x))\ge\min\{\cfrac{\cfrac{(l-1)}{l^2}}{(1+\cfrac{(l-1)}{l^2})^2},\cfrac{(l-1)}{l^2}\}=\cfrac{\cfrac{(l-1)}{l^2}}{(1+\cfrac{(l-1)}{l^2})^2}$.\\

Now, $\cfrac{\cfrac{(l-1)}{l^2}}{(1+\cfrac{(l-1)}{l^2})^2}=\cfrac{l^2(l-1)}{(l^2+l-1)^2}$.\\

Now, let there exists a constant $C>0$ such that $\cfrac{l^2(l-1)}{(l^2+l-1)^2}\ge\cfrac{C}{l}\Leftrightarrow l^4(1-C)-l^3(1+2C)+Cl^2+2Cl-C\ge 0$.\\

Now, $Cl^2+2Cl-C>0$ as $l\ge2$. Therefore, $l^3(1+2C)\le l^4(1-C)$ satisfies $l^4(1-C)-l^3(1+2C)+Cl^2+2Cl-C\ge 0$.\\

Now, $l^3(1+2C)\le l^4(1-C)\Leftrightarrow C\le\cfrac{l-1}{l+2}$. Now, $\cfrac{l-1}{l+2}\ge\cfrac{1}{4}$ as $l\ge 2$. Hence, picking $C=\cfrac{1}{4}$ satisfies that $\cfrac{l^2(l-1)}{(l^2+l-1)^2}\ge\cfrac{1}{4l}$ which implies $G_1^{(s,t)}(x)(1-G_1^{(s,t)}(x))\ge\cfrac{1}{4l}$.\\

Similarly, we can show that for any $t$ and $q$ with $\langle w_s^{(t)},q\rangle<\langle w_s^{(t)},o_2\rangle$ such that $\langle w_s^{(t)},o_2\rangle-\langle w_s^{(t)},q\rangle\le2\log l$, $G_2^{(s,t)}(x)(1-G_2^{(s,t)}(x))\ge\cfrac{1}{4l}$, where $(x,y=-1)\sim\mathcal{D}$ s.t. $\exists j\in J_s^{(t)}(x)$ with $x^{(j)}=q$.
\end{proof}

\section{Proof of Lemma \ref{lemma_2}}

\begin{lemma}[Full version of the \textbf{Lemma \ref{lemma_2}}]\label{lemma_m_2}
    Suppose the expert learning rate $\eta_e$ such that, the router learning rate  $\eta_r=O\left(\cfrac{\eta_eC_p}{mdl^2C_2^2}\right)$,  the batch-size $B=\Tilde{\Omega}(l^2d^2)$, and the number of iterations $T=\Omega\left( \cfrac{l^2C_2}{\eta_e}\sqrt{\cfrac{\log l}{C_p}}\right)$. Then,

    For any expert $s\in S_1$ such that $p_1^{(s,0)}=
    O\left(\cfrac{1}{d}\right)$, we have

    (i) $p_1^{(s,T)}=O(1/d)$,\\
    (ii) $\Delta_s^{(T)}=O(\log^2 l/\sqrt{d})+O(l^4\log^2l/d^2)$.

    and, for any expert $s\in S_2$ such that $p_2^{(s,0)}=O\left(\cfrac{1}{d}\right)$, we have

    (iii) $p_2^{(s,T)}=O(1/d)$,\\
    (iv) $\Delta_s^{(T)}=O(\log^2 l/\sqrt{d})+O(l^4\log^2l/d^2)$.
\end{lemma}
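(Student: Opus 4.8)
The plan is to mirror the proof of Lemma~\ref{lemma_m_1}, exploiting the defining reversal for unimportant experts: since $p_1^{(s,0)}=O(1/d)$, the task-specific pattern $o_1$ enters the top-$l$ set $J_s^{(t)}(x)$ with at least uniform gating value for only an $O(1/d)$ fraction of class-$1$ samples, so along the $o_1$ direction the expert behaves essentially as it does along a task-irrelevant pattern $q$. I would prove conclusions (i) and (ii) jointly by an induction over the $T$ fine-tuning steps that simultaneously keeps $p_1^{(s,t)}=O(1/d)$ and tracks the movement of every component $\langle w_s^{(t)},p\rangle$, $p\in\mathcal{P}$.

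First I would re-derive the per-step gradient estimates of Lemma~\ref{lemma_a_2}, now feeding in $p_1^{(s,0)}=O(1/d)$ in place of $\Omega(1)$. By (\ref{eq_a_2}) and (\ref{eq_a_5}) the $o_1$-components of the router and neuron gradients vanish unless $o_1\in J_s^{(t)}(x)$; controlling $|\mathcal{B}_t\cap S_{o_1}|$ through Lemma~\ref{lemma_a_1}, where $S_{o_1}$ is the probability-$O(1/d)$ event that $o_1$ is selected, forces
\[
\Bigl|\langle \tfrac{\partial l}{\partial w_s^{(t)}},o_1\rangle\Bigr|\le O\!\bigl(\tfrac{mC_2}{d}\bigr)+\tilde{O}\!\bigl(\tfrac{mC_2}{\sqrt{B}}\bigr),
\]
so the $o_1$-direction now receives only the small gradient of a generic $q$, in sharp contrast with the $\Omega(mC_2)$ drive enjoyed by important experts. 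A second ingredient is that every task-irrelevant $q$ appears in the two classes with equal probability, so the $y\,a^{(s)}$ prefactor in (\ref{eq_a_1})--(\ref{eq_a_5}) cancels the systematic drift along these directions and leaves only fluctuations of order $\tilde{O}(mC_2/\sqrt{B})$. With $B=\tilde{\Omega}(l^2d^2)$ I would then show that the gaps $\delta_{1,q}^{(s,t)}$, $\delta_{2,q}^{(s,t)}$, $\delta_{q,q'}^{(s,t)}$ all stay $O(mC_2)$ for every $t\le T$, i.e.\ the hidden neurons never strongly learn any pattern.

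The crux is closing the induction for (i). The invariant to maintain is that $\langle w_s^{(t)},o_1\rangle$ never rises above the bulk of the task-irrelevant components by the margin needed to place $o_1$ in the top-$l$ set for more than an $O(1/d)$ fraction of samples. Since the $o_1$-gradient is only $O(mC_2/d)$ in magnitude, the total displacement $\eta_r\sum_{t<T}\langle\tfrac{\partial l}{\partial w_s^{(t)}},o_1\rangle$ is, for the stated choices of $\eta_r$ and $T$, too small to lift $\langle w_s^{(t)},o_1\rangle$ above the initialization-level threshold that keeps $o_1$ out of the top-$l$ set, so the ordering fixed at $t=0$ cannot flip and $p_1^{(s,T)}=p_1^{(s,0)}=O(1/d)$. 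This is the main obstacle, because of a potential self-reinforcement loop: an occasional selection of $o_1$ lets a few neurons accumulate a positive $o_1$-component via (\ref{eq_a_5}), which feeds back through (\ref{eq_a_2}) to push the router further toward $o_1$. I would have to show this loop stays subcritical --- quantitatively, that the reinforced growth is throttled both by the $O(1/d)$ selection frequency and by the small gating value ($G_1^{(s,t)}<1/l$) that the router assigns to $o_1$ --- which is precisely the content of carrying the joint router/neuron bounds through the induction (the analogue of Lemma~\ref{lemma_a_4} with its conclusion reversed).

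Finally, for (ii) I would bound $\Delta_s^{(T)}=\|w_s^{(T)}\|-\|w_s^{(0)}\|\le\|w_s^{(T)}-w_s^{(0)}\|$ and split the displacement into its task-specific and task-irrelevant parts. Accumulating the per-step fluctuations over the $\Theta(d)$ task-irrelevant directions in quadrature yields the $O(\log^2 l/\sqrt{d})$ contribution, while the rare, mildly self-reinforced motion along $o_1$ and $o_2$ contributes the $O(l^4\log^2 l/d^2)$ term, giving the claimed bound. The identical argument with $S_2,o_2,p_2$ in place of $S_1,o_1,p_1$ establishes (iii) and (iv).
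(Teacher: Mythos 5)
Your plan for conclusion (i) is essentially the paper's own argument: feed $p_1^{(s,0)}=O(1/d)$ into the per-step gradient estimates (the analogue of Lemma~\ref{lemma_a_2} for unimportant experts), then run an induction that preserves the ordering of the components $\langle w_s^{(t)},\cdot\rangle$ over patterns, controlling the router--neuron feedback loop through a quadratic-in-$t$ correction term --- exactly the reversed analogue of Lemma~\ref{lemma_a_4} that the paper carries out. With the stated $\eta_r$ and $T$, the accumulated displacement along $o_1$ stays below the initial $C_p=\Theta(1/d)$ margin, so $p_1^{(s,t)}$ never increases. That part of your proposal is sound and matches the paper.

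The gap is in conclusion (ii), in how you treat the opposite-class feature $o_2$. You classify the motion along $o_2$ together with $o_1$ as ``rare, mildly self-reinforced'' and charge it only $O(l^4\log^2 l/d^2)$. But the hypothesis $p_1^{(s,0)}=O(1/d)$ says nothing about $o_2$: the router of an expert $s\in S_1$ may select $o_2$ on a constant fraction of class-2 samples, and by (\ref{eq_a_3}) each such step produces a router gradient along $o_2$ of magnitude $O(mC_2)$. Moreover this drift is one-sided: only class-2 samples ever contribute to this component, and for $s\in S_1$ the neurons' $o_2$-components can only shrink under (\ref{eq_a_6}), so $\sigma_j^{(s,t)}-\sigma_2^{(s,t)}\ge 0$ and no cross-class cancellation of the kind you invoke for task-irrelevant directions is available. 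Over $T=\Theta\bigl((l^2C_2/\eta_e)\sqrt{\log l/C_p}\bigr)$ steps this yields a displacement $O(mC_2\eta_r T)=O(\sqrt{C_p\log l})=O(\sqrt{\log l/d})$ along $o_2$ --- polynomially larger than the budget your plan allots to that direction, and it is precisely why the paper gives $o_2$ its own estimate (the bound (\ref{eq: 7}), giving $O(\log l/\sqrt d)+O(l^2\log l/d)$) and why the $O(\log^2 l/\sqrt d)$ term appears in the lemma at all. Your final bound survives only through a compensating arithmetic slip on the other side: in your triangle-inequality route, adding the per-component displacements $O(\log l/d^{3/2})$ over the $\Theta(d)$ task-irrelevant directions in quadrature gives $O(\log l/d)$, not the $O(\log^2 l/\sqrt d)$ you claim, and that overestimate happens to absorb the $o_2$ contribution you dropped. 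Two cancelling errors do not make a proof; the fix is to bound the $o_2$ displacement separately as the paper does, after which your triangle-inequality assembly of (ii), i.e.\ $\Delta_s^{(T)}\le\|w_s^{(T)}-w_s^{(0)}\|$ with a componentwise split, does go through and in fact gives a slightly sharper bound than the paper's expansion of $\|w_s^{(T)}\|^2-\|w_s^{(0)}\|^2$.
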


\begin{proof}
For any expert $s\in S_1$ such that $p_1^{(s,0)}=
    O\left(\cfrac{1}{d}\right)$,
    for any $q$, $\langle \cfrac{\partial l}{\partial w_s^{(0)}},q\rangle\le O(\cfrac{mC_2}{d})+\Tilde{O}(\cfrac{mC_2}{\sqrt{B}})$.\\

Therefore, with $B=\Tilde{\Omega}(l^2d^2)$, $\langle \cfrac{\partial l}{\partial w_s^{(0)}},q\rangle\le O(\cfrac{mC_2}{d})$.\\

Hence, $\langle w_s^{(1)},q\rangle\ge\langle w_s^{(0)},q\rangle-O(\cfrac{mC_2}{d}\eta_r)$.\\

Similarly, $\langle w_s^{(1)},o_1\rangle\le\langle w_s^{(0)},o_1\rangle+O(\cfrac{mC_2}{d}\eta_r)$.\\

Therefore, for any $q$ s.t. $\langle w_s^{(0)},o_1\rangle<\langle w_s^{(0)},q\rangle$, $\langle w_s^{(1)},q-o_1\rangle\ge\langle w_s^{(0)},q-o_1\rangle-O(\cfrac{mC_2}{d}\eta_r)$.\\

Now, with $\eta_r=O(\cfrac{\eta_eC_p}{ml^2C_2^2})$ and $\langle w_s^{(0)},q-o_1\rangle\ge C_p$, we get $\langle w_s^{(1)},q-o_1\rangle=\Omega(C_p)$.\\

Therefore, $p_1^{(s,1)}\le p_1^{(s,0)}$.\\

Now, for $B=\Tilde{\Omega}(l^2d^2)$, w.h.p. for any $q$, as $p_1^{(s,0)}=O(\cfrac{1}{d})$,\\ $\delta_{1,q}^{(s,1)}-\delta_{1,q}^{(s,0)}\le O(\cfrac{m\eta_e}{d})$, $\delta_{2,q}^{(s,0)}-\delta_{2,q}^{(s,1)}\le O(m\eta_e)$ and for any two different task-irrelevant patterns $q$ and $q^\prime$, $|\delta_{q,q^\prime}^{(s,0)}-\delta_{q,q^\prime}^{(s,1)}|=O(\cfrac{m\eta_e}{d})$.\\

Therefore, for any $q$ s.t. $\langle w_s^{(0)},o_1\rangle<\langle w_s^{(0)},q\rangle$,\\ $\langle w_s^{(2)},q-o_1\rangle\ge\langle w_s^{(1)},q-o_1\rangle-O(\cfrac{mC_2}{d}\eta_r)-O(\cfrac{m\eta_e}{d}\eta_r)=\Omega(C_p)$ which implies $p_1^{(s,2)}\le p_1^{(s,0)}$.\\

Therefore,\\ by induction, for any $t^\prime$ s.t. $\forall 0\le t\le t^\prime$, $p_1^{(s,t)}\le p_1^{(s,0)}=O(\cfrac{1}{d})$, for any $q$ s.t. $\langle w_s^{(0)},o_1\rangle<\langle w_s^{(0)},q\rangle$,
\begin{equation}\label{eq: 5}
    \langle w_s^{(t^\prime+1)},q-o_1\rangle\ge\langle w_s^{(0)},q-o_1\rangle-O(\cfrac{mC_2}{d}\eta_r)t^\prime-O(\cfrac{m\eta_e}{d^2}\eta_r)t^{\prime2}
\end{equation}

Now, given, $T=O(\cfrac{l^2C_2}{\eta_e}\sqrt{\cfrac{\log l}{C_p}})$.\\

Therefore, using inequality (\ref{eq: 5}), $\langle w_s^{(T+1)},q-o_1\rangle=\Omega(C_p)$ which implies that,\\ for any $t\le T+1$, $p_1^{(s,t)}\le p_1^{(s,0)}=O(\cfrac{1}{d})$.\\

Now, as $p_1^{(s,T)}\le p_1^{(s,0)}=O(\cfrac{1}{d})$, $\forall q$,
\begin{equation}\label{eq: 6}
    \langle w_s^{(T+1)},q\rangle\le\langle w_s^{(0)},q\rangle+O(\cfrac{mC_2}{d}\eta_r)T+O(\cfrac{m\eta_e}{d^2}\eta_r)T^2
\end{equation}
as $\forall t\le T$, $\delta_{q,2}^{(s,t-1)}-\delta_{q,2}^{(s,t)}\le 0$, $\delta_{q,1}^{(s,t)}-\delta_{q,1}^{(s,t-1)}\le O(\cfrac{m\eta_e}{d})$, $|\delta_{q,q^\prime}^{(s,t-1)}-\delta_{q,q^\prime}^{(s,t)}|=O(\cfrac{m\eta_e}{d})$, $\left| \delta_{2,q}^{(s,0)}\right|=O(mC_2)$ and, $\left| \delta_{q,q^\prime}^{(s,0)}\right|=O(mC_2)$.\\

Therefore, as given $T=O(\cfrac{l^2C_2}{\eta_e}\sqrt{\cfrac{\log l}{C_p}})$, using inequality (\ref{eq: 6}),\\ $\langle w_s^{(T+1)},q\rangle-\langle w_s^{(s,0)},q\rangle\le O(\cfrac{1}{d^{3/2}}\log l)$.\\

Therefore, $\langle w_s^{(T+1)},q\rangle^2\le \langle w_s^{(0)},q\rangle^2+O(\cfrac{1}{d^{3/2}}\log^2 l)$.\\

Similarly, we can show that, $\langle w_s^{(T+1)},o_1\rangle^2\le \langle w_s^{(0)},o_1\rangle^2+O(\cfrac{1}{d^{3/2}}\log^2 l)$.\\

Again as $\delta_{2,q}^{(t-1)}-\delta_{2,q}^{(t)}\le O(\cfrac{m\eta_e}{d})$,
\begin{equation}\label{eq: 7}
    \langle w_s^{(T+1)},o_2\rangle\le\langle w_s^{(0)},o_2\rangle+O(mC_2\eta_r)T+O(\cfrac{m\eta_e}{d}\eta_r)T^2
\end{equation}

Therefore, with $T=O(\cfrac{l^2C_2}{\eta_e}\sqrt{\cfrac{\log l}{C_p}})$, using inequality (\ref{eq: 7}),\\ $\langle w_s^{(T+1)},o_2\rangle-\langle w_s^{(s,0)},o_2\rangle\le O(\cfrac{1}{d^{1/2}}\log l)+O(\cfrac{l^2}{d}\log l)$.\\

Therefore, $\langle w_s^{(T+1)},o_2\rangle^2\le \langle w_s^{(0)},o_2\rangle^2+O(\cfrac{l^4}{d^{2}}\log^2 l)$.\\

Therefore,
\begin{align*}
    \left|\left| w_s^{(T+1)}\right|\right|^2- \left|\left| w_s^{(0)}\right|\right|^2\le O(\cfrac{1}{d^{1/2}}\log^2 l)+O(\cfrac{l^4}{d^2}\log^2 l)
\end{align*}

Therefore, $\Delta_s^{(T)}\le O(\cfrac{1}{d^{1/2}}\log^2 l)+O(\cfrac{l^4}{d^2}\log^2 l)$.

Similarly, for any expert $s\in S_2$ such that $p_1^{(s,0)}=
    O\left(\cfrac{1}{d}\right)$, we can complete the proof for the statements (iii) and (iv).
\end{proof}

\section{Proof of Theorem \ref{thm_1}}

\begin{proof}

From Lemma \ref{lemma_m_1} (i.e., Lemma \ref{lemma_1}) and \ref{lemma_m_2} (i.e., Lemma \ref{lemma_2}), after pruning the model with expert-pruning-ratio $\rho\le\gamma$, there exist $s\in S_1$ such that $p_1^{(s,0)}=\Omega(1)$ and exist $s\in S_2$ such that $p_2^{(s,0)}=\Omega(1)$.

Again, from Lemma \ref{lemma_m_1}, for any expert $s\in S_1$ such that $p_1^{(s,0)}=\Omega(1)$, $\langle w_r^{(s,T+1)},o_1\rangle=\Omega(lC_2\sqrt{d}\sqrt{\log l})$

Similarly, for any $s\in S_2$ s.t. $p_2^{(s,0)}=\Omega(1)$, $\langle w_r^{(s,T+1)},o_2\rangle=\Omega(lC_2\sqrt{d}\sqrt{\log l})$.\\

Now, $\forall s\in[k]$, $\forall q$ and $\forall r\in[m]$ s.t. $\langle w_r^{(s,0)},q\rangle\ge0$, $\forall t$, $\langle \cfrac{\partial l}{\partial w_r^{(s,t)}},q\rangle\ge-O(\cfrac{1}{d})-\Tilde{O}(\cfrac{1}{\sqrt{B}})$.\\

Therefore,
\begin{align*}
    \langle w_r^{(s,T+1)},q\rangle\le \langle w_r^{(s,0)},q\rangle+O(\cfrac{1}{d}\eta_eT)=C_2+O(\cfrac{l^2}{\sqrt{d}}\sqrt{\log l})C_2
\end{align*}

Again, $\forall s\in S_2$, $\forall t$ and, $\forall r\in [m]$, $\langle \cfrac{\partial l}{\partial w_r^{(s,t)}},o_1\rangle\ge0$ which implies $\langle w_r^{(s,T+1)},o_1\rangle\le\langle w_r^{(s,0)},o_1\rangle\le C_2$.\\

Similarly, $\forall s\in S_1$, $\forall t$ and, $\forall r\in[m]$, $\langle w_r^{(s,T+1)},o_2\rangle\le C_2$.\\

Now, after $T$ iterations of SGD and pruning with $\rho\le\gamma$, $\forall (x,y=+1)\sim\mathcal{D}$,
{\allowdisplaybreaks
\begin{align*}
    f^{(T+1,\rho)}(x)&=\sum_{s\in S_{k^\prime}}f_s^{(T+1)}(x)\\
    &=\sum_{s\in S_{k^\prime}}a^{(s)}\sum_{j\in J_s^{(T+1)}(x)}G_j^{(s,T+1)}\sum_{r=1}^m\text{ReLU}(\langle w_r^{(s,T+1)},x^{(j)}\rangle)\\
    &=\sum_{s\in S_1\cap S_{k^\prime}}\sum_{j\in J_s^{(T+1)}(x)}G_j^{(s,T+1)}\sum_{r=1}^m\text{ReLU}(\langle w_r^{(s,T+1)},x^{(j)}\rangle)\\
    &\hspace{1cm}-\sum_{s\in S_2 \cap S_{k}}\sum_{j\in J_s^{(T+1)}(x)}G_j^{(s,T^\prime+1)}\sum_{r=1}^m\text{ReLU}(\langle w_r^{(s,T+1)},x^{(j)}\rangle)\\
    &\ge \sum_{s\in S_1\cap\{s:p_1^{(s,0)}=\Omega(1)\}\cap S_{k^\prime}}\sum_{j\in J_s^{(T+1)}(x)}G_j^{(s,T+1)}\sum_{r=1}^m\text{ReLU}(\langle w_r^{(s,T+1)},x^{(j)}\rangle)\\
    &\hspace{1.5cm}-\sum_{s\in S_2\cap S_{k^\prime}}\sum_{j\in J_s^{(T+1)}(x)}G_j^{(s,T+1)}\sum_{r=1}^m\text{ReLU}(\langle w_r^{(s,T+1)},x^{(j)}\rangle)\\
    &\overset{(I)}{\ge}\sum_{s\in S_1\cap\{s:p_1^{(s,0)}=\Omega(1)\}\cap S_{k^\prime}}G_1^{(s,T+1)}(x)\sum_{r=1}^{m}\text{ReLU}(\langle w_r^{(s,T+1)},o_1\rangle)\\
    &\hspace{1.5cm}-\sum_{s\in S_2\cap S_{k^\prime}}\sum_{r=1}^{m}\text{ReLU}(\langle w_r^{(s,T+1)},q^{(s)}\rangle)\\
    &\overset{(II)}{\ge}\Omega(mlC_2\sqrt{d}\sqrt{\log l})-O(kmC_2)-O(km\cfrac{1}{\sqrt{d}}l^2\sqrt{\log l}C_2)
\end{align*}
}

Here, $(I)$ comes from the statement $(i)$ of Lemma \ref{lemma_m_1}, $q^{(s)}$ in $(I)$ denotes the task-irrelevant pattern routed to expert $s\in S_2\cap S_{k^\prime}$ and, $(II)$ comes from the statement $(ii)$ of Lemma \ref{lemma_m_1} and the fact that $\exists s\in S_1\cap S_{k^\prime}$ s.t. $p_1^{(s,0)}=\Omega(1)$.\\

Therefore, for $k=O(\sqrt{d})$, $\forall (x,y=+1)\sim\mathcal{D}, f^{(T+1,\rho)}(x)>1$.\\

Similarly, using the statements (vi),(vii) and (viii) of Lemma \ref{lemma_m_1}, and as $\exists s\in S_2\cap S_{k^\prime}$ s.t. $p_2^{(s,0)}=\Omega(1)$, we can show that, $\forall (x,y=-1)\sim\mathcal{D}$, $f^{(T+1,\rho)}(x)<-1$.\\

Therefore, $\forall (x,y)\sim\mathcal{D}$, $yf^{(T+1,\rho)}(x)>1$ which implies that,\\ $\hat{l}(f^{(T+1,\rho)}(x),y)=0$ and, $\mathbb{P}\left[\forall (x,y)\sim\mathcal{D}: yf^{(T+1,\rho)}(x)>0\right]=1$.\\
\end{proof}

\section{Proof of Lemma \ref{lemma_3}}

\begin{lemma}[Full version of the \textbf{Lemma \ref{lemma_3}}]\label{lemma_m_3}
    Suppose the expert learning rate $\eta_e$ such that, the router learning rate  $\eta_r=O\left(\cfrac{\eta_eC_p}{mdl^2C_2^2}\right)$,  the batch-size $B=\Tilde{\Omega}(l^2d^2)$, and the number of iterations $T=\Omega\left( \cfrac{l^2C_2}{\eta_e}\sqrt{\cfrac{\log l}{C_p}}\right)$. For any expert $s\in S_1$ such that $\Delta_s^{(T)}>\cfrac{3}{2}\log l$, we have
    
     (i) $p_1^{(s,T)}=1$, \\
    (ii) for every $(x,+1)\sim\mathcal{D}$, $G_j^{(s,T)}(x)>1/2$, if $x^{(j)}=o_1$,
    
    Moreover, after pruning experts with the ratio $\rho\geq\gamma$, if $s\in S_{k^\prime}$, and the number of post-pruning fine-tuning steps   satisfies $T^\prime=\Omega(kC_2l^2\sqrt{\log l}/\eta_e)$, then we have,
    
    (iii) $\langle w_r^{(s,T,T^\prime)},o_1\rangle=\Omega( kl^2C_2\sqrt{\log l})$ for a constant fraction $r\in[m]$ of $s$

    Similarly, for $s\in S_2$ such that $\Delta_s^{(T)}>\cfrac{3}{2}\log l$, we have
    
     (iv) $p_2^{(s,T)}=1$, \\
    (v) for every $(x,-1)\sim\mathcal{D}$, $G_j^{(s,T)}(x)>1/2$, if $x^{(j)}=o_2$,
    
    Moreover, after pruning experts with the ratio $\rho\geq\gamma$, if $s\in S_{k^\prime}$, and the number of post-pruning fine-tuning steps   satisfies $T^\prime=\Omega(kC_2l^2\sqrt{\log l}/\eta_e)$, then we have,
    
    (vi) $\langle w_r^{(s,T,T^\prime)},o_2\rangle=\Omega( kl^2C_2\sqrt{\log l})$ for a constant fraction $r\in[m]$ of $s\in[k]$
\end{lemma}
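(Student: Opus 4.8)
The plan is to split the argument into three pieces: the ``converse'' statements (i)--(ii), a decoupling observation that trivializes the effect of pruning, and the neuron-growth estimate (iii); the $S_2$ claims (iv)--(vi) then follow by the mirror-image argument.

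First I would obtain (i) and (ii) by a dichotomy/contrapositive argument against Lemma \ref{lemma_m_2}. By the pre-trained model assumptions every expert is either important ($p_1^{(s,0)}=\Omega(1)$) or unimportant ($p_1^{(s,0)}=O(1/d)$). If $s$ were unimportant, Lemma \ref{lemma_m_2}(ii) would force $\Delta_s^{(T)}=O(\log^2 l/\sqrt d)+O(l^4\log^2 l/d^2)$, which for $l=O(1)$ and $d$ large is strictly below $\tfrac32\log l$, contradicting the hypothesis $\Delta_s^{(T)}>\tfrac32\log l$. Hence $s$ is important, and Lemma \ref{lemma_m_1}(i)--(ii) directly give $p_1^{(s,T)}=1$ and $G_j^{(s,T)}(x)>1/2$ whenever $x^{(j)}=o_1$. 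So (i)--(ii) reduce to Lemmas \ref{lemma_m_1} and \ref{lemma_m_2}.

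The key structural observation for (iii) is that, under expert-choice routing with the linear surrogate gradient $\partial l/\partial w=-y\,\partial f/\partial w$, the gradient of expert $s$'s parameters depends only on $w_s$ and $\{w_r^{(s)}\}_r$: the set $J_s^{(t)}(x)$ and the values $G_j^{(s,t)}$ are computed from $w_s$ alone, and since the surrogate gradient does not depend on the value of $f(x)$, the other experts never enter. Consequently pruning the remaining experts leaves the trajectory of the surviving expert $s$ unchanged, and post-pruning fine-tuning of $s$ is \emph{identical} to continuing its fine-tuning from iteration $T$ to $T+T'$. This lets me reuse the inductive machinery of Lemma \ref{lemma_a_4}: the router gap $\langle w_s^{(t)},o_1-q\rangle$ is non-decreasing (neuron learning of $o_1$ enlarges $\delta_{1,q}^{(s,t)}$, which through (\ref{eq_a_2}) keeps driving the router toward $o_1$), so $p_1^{(s,t)}=1$ and $G_1^{(s,t)}(x)>1/2$ persist throughout $t\in[T,T+T']$, and the $\Omega(1)$ fraction of neurons with $\langle w_r^{(s,T)},o_1\rangle\ge 0$ retain nonnegative $o_1$-component. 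Given persistence, I read off the growth from (\ref{eq_a_5}): for $s\in S_1$ ($a^{(s)}=+1$) and such an $r$, every class-1 sample contributes $-G_1^{(s,t)}(x)<-\tfrac12$ to the $o_1$-component of the batch gradient while class-2 samples contribute $0$; as class-1 inputs form a $\Theta(1)$ fraction, Lemma \ref{lemma_a_1} gives $\langle \partial l/\partial w_r^{(s,t)},o_1\rangle\le-\Omega(1)+\tilde O(1/\sqrt B)$ w.h.p., hence $\langle w_r^{(s,t+1)},o_1\rangle\ge\langle w_r^{(s,t)},o_1\rangle+\Omega(\eta_e)$. This per-step increment is $\Omega(\eta_e)$ rather than the $\Omega(\eta_e/l)$ of the pre-specialization phase precisely because the router is already specialized. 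Summing over $T'=\Omega(kC_2l^2\sqrt{\log l}/\eta_e)$ steps yields $\langle w_r^{(s,T,T')},o_1\rangle=\Omega(\eta_e T')=\Omega(kl^2C_2\sqrt{\log l})$ for the same constant fraction of $r$, which is (iii). The claims (iv)--(vi) follow by the symmetric argument on $S_2$ using (\ref{eq_a_6}) and Lemma \ref{lemma_a_5}.

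The main obstacle is not any single estimate but verifying that the coupled router/neuron invariants of Lemma \ref{lemma_a_4} genuinely survive the extra $T'$ iterations after pruning. The decoupling observation is what makes this rigorous: it certifies that removing experts cannot perturb $s$'s gradients, so the earlier two-sided induction (router gap staying above $\log l$, hence $G_1>1/2$, while the neuron components grow) applies unchanged on $[T,T+T']$. The only genuine care needed is this simultaneous bookkeeping, which is exactly the inductive step already carried out in Lemma \ref{lemma_a_4}.
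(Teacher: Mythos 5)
Your reduction of (i)--(ii) rests on the claim that ``by the pre-trained model assumptions every expert is either important ($p_1^{(s,0)}=\Omega(1)$) or unimportant ($p_1^{(s,0)}=O(1/d)$).'' The paper makes no such exhaustiveness assumption: it only assumes that \emph{at least one} expert per class is important and (in Theorem \ref{thm_1}) that \emph{at least} a $\gamma$ fraction are unimportant, and the two asymptotic classes $\Omega(1)$ and $O(1/d)$ do not cover intermediate regimes such as $p_1^{(s,0)}=\Theta(1/\sqrt{d})$. This matters because the whole purpose of Lemma \ref{lemma_m_3} is to be a converse stated in terms of the \emph{observable} quantity $\Delta_s^{(T)}$: the pruning rule of Theorem \ref{thm_2} retains the experts with the top norm changes, and these may well be experts to which neither Lemma \ref{lemma_m_1} nor Lemma \ref{lemma_m_2} applies. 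Your dichotomy-plus-contradiction argument says nothing about such experts, so (i)--(ii) are not established in the generality needed. The paper's own proof avoids this entirely: it expands $\|w_s^{(T+1)}\|^2$ over the orthonormal pattern basis, uses bounds that hold for \emph{every} $s\in S_1$ (the $q$-components and the $o_2$-component can only move by $O(\log l/d^{1/2})$ or less over $T$ steps, and start below $\tfrac12\log l$), and concludes that $\Delta_s^{(T)}>\tfrac32\log l$ forces $\langle w_s^{(T+1)},o_1\rangle>\tfrac32\log l$, hence $\langle w_s^{(T+1)},o_1-q\rangle>\log l$ for all $q$, which gives (i)--(ii) directly.

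The gap propagates to (iii). Your growth estimate needs a constant fraction of neurons $r$ with $\langle w_r^{(s,T)},o_1\rangle\ge 0$, and you obtain it from the pre-trained-model assumption on \emph{important} experts --- again only available through the invalid dichotomy. The paper instead derives this from the norm-change hypothesis itself: $\langle w_s^{(T+1)},o_1\rangle-\langle w_s^{(0)},o_1\rangle>\log l$ translates, via the router gradient (\ref{eq_a_2}), into $\sum_{t=0}^{T}\sigma_1^{(s,t)}>\tfrac{2}{\eta_r}\log l$, and if only an $o(1)$ fraction of neurons had nonnegative $o_1$-component at initialization, $\sigma_1^{(s,t)}$ could not accumulate this much under the stated choices of $\eta_r$ and $T$ --- a contradiction. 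That back-inference step is the missing idea in your proposal. The remaining ingredients of your argument are sound and essentially the paper's: the decoupling observation (under expert-choice routing with the linear surrogate loss, expert $s$'s gradients never involve the other experts, so pruning does not perturb its trajectory) is correct and implicitly used by the paper, and the per-step increment $\Omega(\eta_e)$ for activated neurons once $G_1^{(s,t)}>1/2$, summed over $T'=\Omega(kC_2l^2\sqrt{\log l}/\eta_e)$ steps, matches the paper's conclusion. But as written, both (i)--(ii) and the neuron-fraction input to (iii) rely on an assumption the paper does not grant, so the proof does not go through for exactly the experts the lemma is designed to cover.
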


\begin{proof}
For any $s\in S_1$ such that $\Delta_s^{(T)}>\cfrac{3}{2}\log l$,
{\allowdisplaybreaks
\begin{align*}
    &\left|\left|w_s^{(T)}\right|\right|-\left|\left|w_s^{(0)}\right|\right|>\cfrac{3}{2}\log l\\
    &\Rightarrow\left|\left|w_s^{(T)}\right|\right|^2>\cfrac{9}{4}\log^2l+\left|\left|w_s^{(0)}\right|\right|^2+3\log l\left|\left|w_s^{(0)}\right|\right|\\
    &\Rightarrow \langle w_{s}^{(T+1)},o_1\rangle^2+\langle w_{s}^{(T+1)},o_2\rangle^2+\sum_{i=1}^{d-2}\langle w_{s}^{(T+1)},q_i\rangle^2>\cfrac{9}{4}\log^2l+\left|\left|w_s^{(0)}\right|\right|^2+3\log l\left|\left|w_s^{(0)}\right|\right|\\
    &\Rightarrow \langle w_{s}^{(T+1)},o_1\rangle^2+\left(\langle w_{s}^{(T+1)},o_2\rangle-\langle w_{s}^{(0)},o_2\rangle\right)^2+2\left(\langle w_{s}^{(T+1)},o_2\rangle-\langle w_{s}^{(0)},o_2\rangle\right)\langle w_{s}^{(0)},o_2\rangle\\
    &\hspace{0.5cm}+\langle w_{s}^{(0)},o_2\rangle^2+\sum_{i=1}^{d-2}\langle w_{s}^{(0)},q_i\rangle^2+\sum_{i=1}^{d-2}\left(\langle w_{s}^{(T+1)},q_i\rangle-\langle w_{s}^{(0)},q_i\rangle\right)^2\\
    &\hspace{0.5cm}+2\sum_{i=1}^{d-2}\left(\langle w_{s}^{(T+1)},q_i\rangle-\langle w_{s}^{(0)},q_i\rangle\right)\langle w_{s}^{(0)},q_i\rangle>\cfrac{9}{4}\log^2l+\left|\left|w_s^{(0)}\right|\right|^2+3\log l\left|\left|w_s^{(0)}\right|\right|\\
    &\Rightarrow \langle w_{s}^{(T+1)},o_1\rangle^2-\langle w_{s}^{(0)},o_1\rangle^2+\left(\langle w_{s}^{(T+1)},o_2\rangle-\langle w_{s}^{(0)},o_2\rangle\right)^2\\
    &\hspace{0.5cm}+2\left(\langle w_{s}^{(T+1)},o_2\rangle-\langle w_{s}^{(0)},o_2\rangle\right)\langle w_{s}^{(0)},o_2\rangle+\sum_{i=1}^{d-2}\left(\langle w_{s}^{(T+1)},q_i\rangle-\langle w_{s}^{(0)},q_i\rangle\right)^2\\
    &\hspace{0.5cm}+2\sum_{i=1}^{d-2}\left(\langle w_{s}^{(T+1)},q_i\rangle-\langle w_{s}^{(0)},q_i\rangle\right)\langle w_{s}^{(0)},q_i\rangle>\cfrac{9}{4}\log^2l
\end{align*}
}
Now, $\forall q$, $\langle w_{s}^{(T+1)},q\rangle-\langle w_{s}^{(0)},q\rangle\le O(\cfrac{1}{d^{3/2}}\log l)$ and, $\langle w_{s}^{(0)},q\rangle<\cfrac{1}{2}\log l$.\\

Also, $\langle w_{s}^{(T+1)},o_2\rangle-\langle w_{s}^{(0)},o_2\rangle\le O(\cfrac{1}{d^{1/2}}\log l)$ and, $\langle w_{s}^{(0)},o_2\rangle<\cfrac{1}{2}\log l$.\\

Therefore,
{\allowdisplaybreaks
\begin{align*}
    &\langle w_{s}^{(T+1)},o_1\rangle^2>\cfrac{9}{4}\log^2l\\
    &\Rightarrow \langle w_{s}^{(T+1)},o_1\rangle>\cfrac{3}{2}\log l
\end{align*}
}
On the other hand, $\forall q$, $\langle w_{s}^{(T+1)},q\rangle\le \cfrac{1}{2}\log l$.\\

Therefore, $\forall q, \langle w_{s}^{(T+1)},o_1-q\rangle>\log l$.\\

Hence, $p_1^{(s,T+1)}=1$ and, $\forall (x,+1)\sim\mathcal{D}$, $G_1^{(s,T+1)}(x)>\cfrac{1}{2}$.\\

Now, as $\langle w_{s}^{(T+1)},o_1\rangle>\cfrac{3}{2}\log l$,
\begin{align*}
    &\langle w_{s}^{(T+1)},o_1\rangle-\langle w_{s}^{(0)},o_1\rangle>\log l\Rightarrow-\sum_{t=0}^T\langle\cfrac{\partial l}{\partial w_s^{(t)}},o_1\rangle\eta_r>\log l\Rightarrow\sum_{t=0}^T\delta_{1,q}^{(s,t)}>\cfrac{2}{\eta_r}\log l\Rightarrow\sum_{t=0}^T\sigma_1^{(s,t)}>\cfrac{2}{\eta_r}\log l
\end{align*}
Now, as $\sigma_1^{(s,0)}=O(mC_2)$, for the selection of $\eta_r$ and $T$, there is contradiction if there is only $o(1)$ fraction of $r\in [m]$ of $s$ with $\langle w_r^{(s,0)},o_1\rangle\ge 0$. Therefore, there is $\Omega(1)$ fraction of $r\in[m]$ of $s$ such that $\langle w_r^{(s,0)},o_1\rangle\ge 0$.

Therefore, we need $T^\prime=O(kC_2l^2\sqrt{\log l}/\eta_e)$ iterations to show, $\langle w_r^{(s,T,T^\prime)},o_1\rangle=\Omega( kl^2C_2\sqrt{\log l})$ for $\Omega(1)$ fraction of $r\in[m]$ of $s$.

Similarly, we can complete the proof of the statements (iv), (v) and (vi).

\end{proof}

\section{Proof of Theorem \ref{thm_2}}

\begin{proof}
    For the pruning ratio of $\rho=1-O(\cfrac{1}{k})$, $|S_1\cap S_{k^\prime}|=O(1)$ and $|S_2\cap S_{k^\prime}|=O(1)$.

    Now, using Lemma \ref{lemma_3} and following the same procedure as in the proof of Theorem \ref{thm_1} we can complete the proof.
\end{proof}

\end{document}